	\newcommand{\blind}{0}
    \renewcommand\section{\@startsection {section}{1}{\z@}%
                                       {-3.5ex \@plus -1ex \@minus -.2ex}%
                                       {2.3ex \@plus.2ex}%
                                       {\normalfont\fontfamily{phv}\fontsize{16}{19}\bfseries}}
    \renewcommand\subsection{\@startsection{subsection}{2}{\z@}%
                                         {-3.25ex\@plus -1ex \@minus -.2ex}%
                                         {1.5ex \@plus .2ex}%
                                         {\normalfont\fontfamily{phv}\fontsize{14}{17}\bfseries}}
    \renewcommand\subsubsection{\@startsection{subsubsection}{3}{\z@}%
                                        {-3.25ex\@plus -1ex \@minus -.2ex}%
                                         {1.5ex \@plus .2ex}%
                                         {\normalfont\normalsize\fontfamily{phv}\fontsize{14}{17}\selectfont}}
\newcommand{\one}{\mathds 1}
\newcommand{\linktoproof}[1]{\begin{center} \hyperref[#1]{\texttt{[Link to Proof]}}\end{center}}
\newcommand{\linktostatement}[1]{\begin{center} \hyperref[#1]{\texttt{[Link to Statement]}}\end{center}}
\newcommand{\ev}[2]{\mathbb E_{#1}\left [ #2 \right ]}
\renewcommand{\var}[2]{\mathbb V_{#1} \left [ #2 \right ]}
\newcommand{\ppar}[1]{\noindent \textbf { #1 }}
\renewcommand{\Pr}{\mathbb P}
\newcommand{\ovec}[1]{\overline {\vec #1}}
\renewcommand{\hat}[1]{\widehat {#1}}
\def\eps{{\varepsilon}}
\renewcommand{\vec}{\bm}
\newtheorem*{observation*}{Observation}
\newtheorem*{definition*}{Definition}
\newenvironment{proofsketch}{\noindent \emph{Proof Sketch.}}{\hfill$\qed$\medskip}
\begin{document}

\def\spacingset#1{\renewcommand{\baselinestretch}%
			{#1}\small\normalsize} \spacingset{1}
		
		\if0\blind
		{
			\title{\bf Differentially Private Distributed Estimation and Learning}
			\author{Marios Papachristou$^a$ and M. Amin Rahimian$^b$ \\
			$^a$Cornell University, \texttt{papachristoumarios@cs.cornell.edu}\\
             $^b$University of Pittsburgh, \texttt{rahimian@pitt.edu}}
			\date{}
			\maketitle
		} \fi
		
		\if1\blind
		{

            \title{\bf \large{Differentially Private Distributed Estimation and Learning}}
			\author{Author information is purposely removed for double-blind review}
			
			\begin{center}
				{\Large\bf Differentially Private Distributed Estimation and Learning}
			\end{center}
		} \fi
		
	\begin{abstract}

We study distributed estimation and learning problems in a networked environment where agents exchange information to estimate unknown statistical properties of random variables from their privately observed samples. The agents can collectively estimate the unknown quantities by exchanging information about their private observations, but they also face privacy risks.    Our novel algorithms extend the existing distributed estimation literature and enable the participating agents to estimate a complete sufficient statistic from private signals acquired offline or online over time and to preserve the privacy of their signals and network neighborhoods. This is achieved through linear aggregation schemes with adjusted randomization schemes that add noise to the exchanged estimates subject to differential privacy (DP) constraints, both in an offline and online manner. We provide convergence rate analysis and tight finite-time convergence bounds. We show that the noise that minimizes the convergence time to the best estimates is the Laplace noise, with parameters corresponding to each agent's sensitivity to their signal and network characteristics. Our algorithms are amenable to dynamic topologies and balancing privacy and accuracy trade-offs. Finally, to supplement and validate our theoretical results, we run experiments on real-world data from the US Power Grid Network and electric consumption data from German Households to estimate the average power consumption of power stations and households under all privacy regimes and show that our method outperforms existing first-order privacy-aware distributed optimization methods.

	\end{abstract}
			
	\noindent%
	{\it Keywords:} Distributed Learning, Differential Privacy, Estimation
	\spacingset{1.45}

\section{Introduction} \label{sec:introduction}

Differential privacy (DP) is a gold standard in privacy-preserving algorithm design that limits what an adversary (or any observer) can learn about the inputs to an algorithm by observing its outputs \citep{dwork2011firm,dwork2014algorithmic}, according to a privacy budget that is usually denoted by $\eps$. It requires that given the output, the probability that any pair of adjacent inputs generate the observed output should be virtually the same. Adding noise to the input data helps enforce this standard in different settings --- e.g., for distributed learning --- but the added noise can also degrade our performance, e.g., lowering the quality of distributed estimation and collective learning for which agents exchange information. 

This paper provides aggregation algorithms that facilitate distributed estimate and learning among networked agents while accommodating their privacy needs (e.g., protecting their private or private signals and network neighborhoods). Each algorithm implies a different tradeoff between its quality of collective learning and how much privacy protection it affords the participating agents (i.e., their privacy budgets). Our performance metrics reflect how distributional features of the private signals and the nature of privacy needs for individual agents determine the learning quality and requisite noise.

Decentralized decision-making and distributed learning problems arise naturally in a variety of applications ranging from sensor and robotic networks in precision agriculture, digital health, and military operations to the Internet of things and social networks \citep{bullo2009distributed,jackson2008}; see~\cref{sec:related_work} for a detailed literature review. We are particularly interested in distributed estimation problems that arise in smart grids with distributed generation and energy resources. Notably, a recent report from \cite{nas,national2023role} suggests that net metering practices should be revised to reflect the value of distributed electricity generation, such as rooftop solar panels. Net metering compensates customers for the electricity they provide to the grid through distributed generation. The report notes that net metering has facilitated the embrace of distributed generation in states where it has been put into effect, resulting in levels surpassing 10\% in a few states and projected to rise in both these and other states. Additionally, the report emphasizes the need to revisit and evolve net metering policies to support the deployment of distributed generation that adds value in reducing fossil fuel use, enhancing resilience, and improving equity. In this context, each customer faces an individual privacy risk in sharing their estimates since revealing exact measurements can pose security risks that can be leveraged by an adversary (e.g., understanding when someone is at their home, daily habits, family illness, etc.), and, therefore, developing privacy-preserving methods that support decentralized decision making in such setups is critical.

This paper introduces novel algorithms designed for the distributed estimation of the expected value of sufficient statistics in an exponential family distribution. The proposed methods leverage signals received by individual agents, who maintain and update estimates based on both these signals and information from their local neighborhood. Our contributions to the existing literature on distributed optimization include new privacy-aware distributed estimation algorithms that exhibit faster convergence rates compared to established first-order methods (cf. \cite{rizk2023enforcing}). Notably, our algorithms safeguard the information in agents' signals and local neighborhood estimates, ensuring optimal convergence times to true estimates. Furthermore, in contrast to existing approaches, our algorithms can support privacy-aware estimation within an online learning framework, accommodate dynamic topologies, and balance privacy and accuracy by distributing the privacy budget among agents. Finally, we verify our proposed algorithms on real-world datasets and show that they outperform existing first-order methods. 

\subsection{Main Results} \label{sec:contribution}

\noindent{\bf Summary of main notations.} We use barred bold letters to denote vectors (e.g., $\ovec x$) and bold letters to indicate vector components (e.g., $\vec x_i$ is a component of $\ovec x$). We use capital letters to denote matrices (e.g., $A$) and small letters to represent their entries (e.g., $a_{ij}$). We use small letters to denote scalars. We denote the $n\times 1$ column vector of all ones by $\mathds 1$.

\noindent{\bf Summary of the Problem Setup (see \cref{sec:model_dynamics}).} We consider a network of $n$ agents indexed by $[n] = \{1,\ldots,n\}$ whose interconnections are characterized by a symmetric, doubly-stochastic, adjacency matrix $A$. This adjacency structure, encoded by graph neighborhoods, $\cN_i = \{j:a_{ij} \neq 0\}, i\in[n]$, may be a consequence of geospatial constraints such as sensing and communication range or geographic proximity; it can also be a reflection of how the network has evolved and other engineering considerations, e.g., which nodes belong to which countries or companies in a multi-party network. The adjacency weights may also result from geoeconomic constraints such as access to local trade and business intelligence (contracts, sales, and filled orders). In the case of social networks, they can also represent the presence of influence and mutual interactions among individuals.

Given the adjacency structure $A$, at every round $t \in \{1,2 , 3, \ldots\}$, each agent $i$ receives a private signal $\vec s_{i, t}$ that is sampled from an exponential family distribution with the natural sufficient statistic $\xi(\cdot) \in \Rbb$ and a common, unknown parameter $\theta$ belonging to a measurable set $\Theta$. The goal of the agents is to collectively estimate the common value of $\ev {\theta}{\xi(\vec s)}$ by combining their samples. This is achieved through a consensus algorithm by forming an estimate $\vec \nu_{i, t}$ and exchanging it with their network neighbors in a distributed manner respecting the adjacency structure of $A$. The agents also want to control the information that is leaked about their signals and the estimates of their neighbors $ \{ \vec \nu_{j, t - 1} \}_{j \in \cN_i}$. They add noise $\vec d_{i, t}$ to their updates. The noise level should be high enough not to violate an $\eps$ differential privacy budget. Briefly, we say that a mechanism $\cM$ is $\eps$-DP when for any pair of ``adjacent'' inputs (i.e., private signals or private signals and neighboring estimates), the logarithm of the probability ratio of any output in the range space of $\cM$ being resulted from either of the adjacent inputs is bounded by $\eps$:  $|\log({\Pr[\cM(\vec s) \in R]}/{\Pr [\cM(\vec s') \in R]})| \le \eps$, for all adjacent pairs ($\vec s$ and $\vec s'$) in the input domain and any subset $R$ of the output range space. Our specific notion of adjacency between the input pairs will be determined by the nature of information leaks, i.e., private signals or private signals and network neighborhoods, against which the exchanged estimates ($\vec \nu_{i, t}$) are being protected (\cref{fig:dp_protections}).  

\begin{figure}[htb]
    \centering
    \subfigure[\footnotesize Signal DP]{\includegraphics[width=0.31\textwidth]{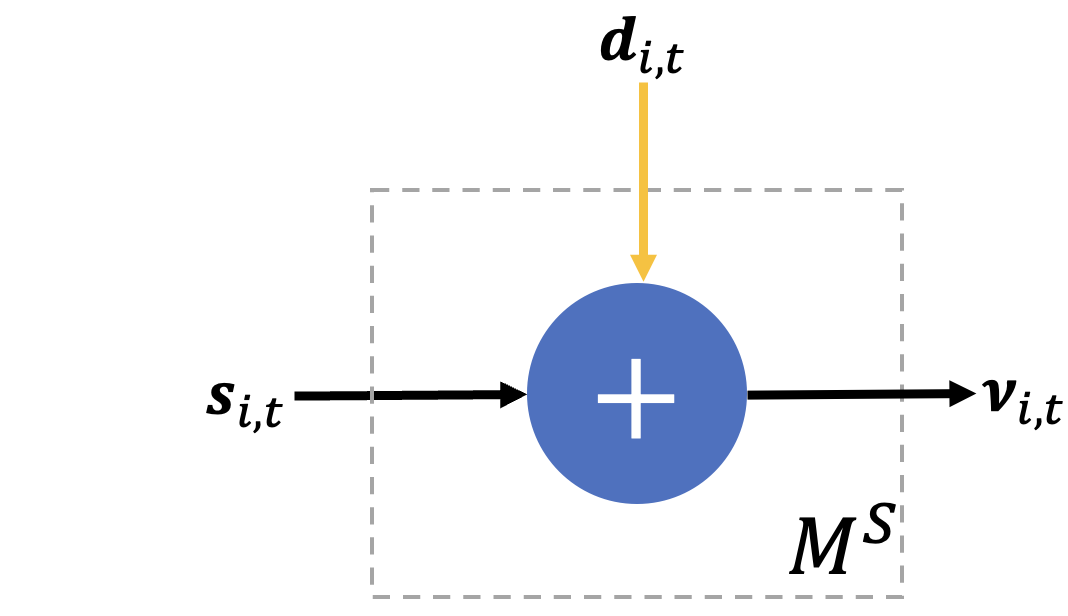}}
    \subfigure[\footnotesize Network DP]{\includegraphics[width=0.31\textwidth]{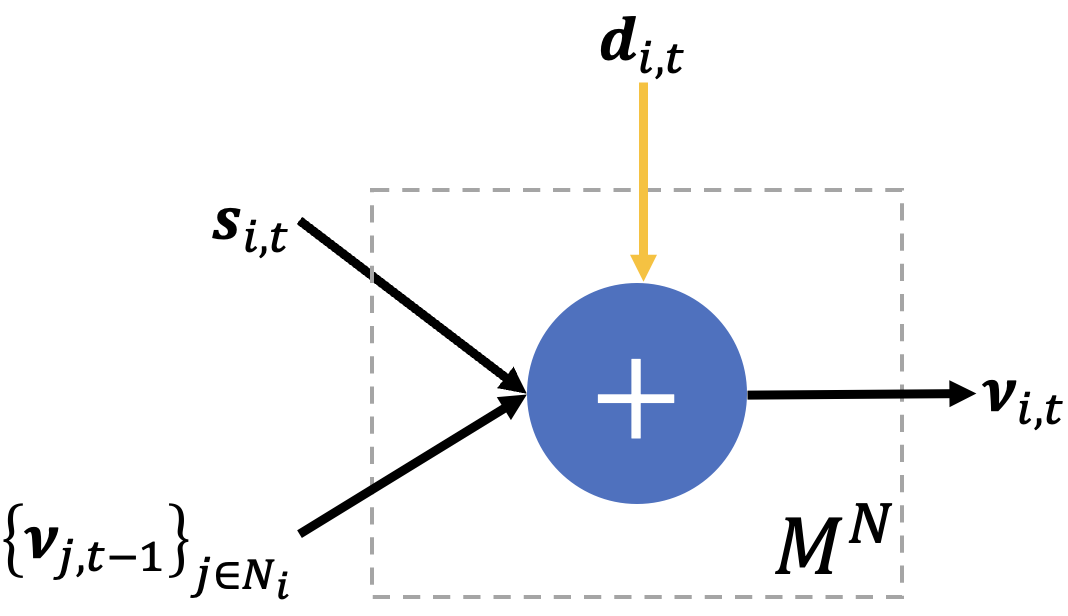}}
    \subfigure[\footnotesize Performance Metrics]{\includegraphics[width=0.27\textwidth]{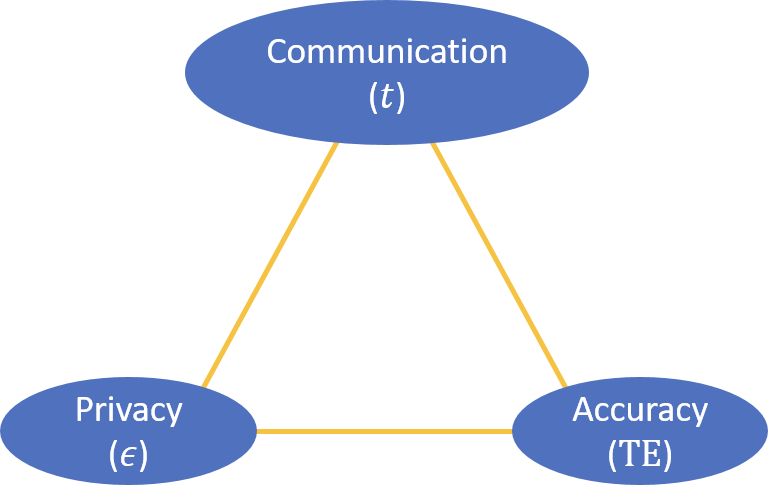}}
    \caption{Two types of DP protections considered in this paper are signal DP, $\cM^{S}$, and network DP, $\cM^{N}$. The private signal of agent $i$ at round $t$ is denoted by $\vec s_{i, t}$, $\vec d_{i, t}$ corresponds to the noise added from agent $i$ at round $t$, and $\vec \nu_{i, t}$ corresponds to the estimate of agent $i$ at round $t$. Our theoretical guarantees delineate the relationship between communication resources ($t$ rounds), privacy budget ($\eps$-DP), and total error (TE). Signal and network DP imply 
 different performance tradeoffs as detailed in \cref{tab:detailed_bounds}.}
    \label{fig:dp_protections}
\end{figure}

\ppar{Theoretical Contributions (\cref{sec:min_var_unbiased_estimation_signal,sec:online_learning_dp}).} In this paper, we provide bounds for the convergence of the DP estimates $\ovec \nu_t = (\nu_{i, t})_{i\in[n]}$ to the desired value $\ovec m_{\theta} := \mathds 1\vec m_{\theta}$, where  $\vec m_{\theta} = \ev {\theta} {\xi(\vec s)}$. We decompose the total error as follows: 
\begin{align*}
    \underbrace{\ev {} {\left \| \ovec \nu_t - \ovec m_\theta \right \|_2}}_{\text{Total Error (TE)}} \le \underbrace {\ev {} {\left \| \ovec \nu_t - \ovec \mu_t \right \|_2}}_{\text{Cost of Privacy (CoP)}} + \underbrace {\ev {} {\left \| \ovec \mu_t - \ovec m_\theta \right \|_2}}_{\text{Cost of Decentralization (CoD)}}
\end{align*} Here, $\ovec \mu_t= (\vec \mu_{i, t})_{i\in[n]}$ corresponds to the vector of non-private estimates.  The first term corresponds to the ``Cost of Privacy'' (CoP), the estimation cost incurred by the $\eps$-DP noising. The second term corresponds to the expected error from running the non-private distributed learning algorithm and, therefore, measures the ``Cost of Decentralization'' (CoD). In \cref{sec:MVUE}, when we consider ``offline'' estimation of $\vec m_{\theta}$ from a fixed collection of initial signals available at the beginning ($t=0$), we replace $\vec m_{\theta}$ by the best possible estimate --- the minimum variance unbiased estimate (MVUE) --- of an omniscient observer who has \textit{centralized} access to all the private signals. 

Our goal is to find differentially private aggregation mechanisms $\cM$ with fast convergence guarantees that minimize CoP parametrized by noise distributions $\left \{ \cD_{i, t} \right \}_{i \in [n], t \ge 1}$ of the agents and subject to their $\eps$ differential privacy budget constraints, i.e., ``s.t. $\eps$-DP'':
\begin{align}
    \mathrm{CoP}(\cM) = \inf_{\left \{ \cD_{i, \tau} 
 \right \}_{i \in [n], \tau \in [t]} \text{ s.t. $\eps$-DP}} \ev {\left \{ (\ovec \nu_\tau, \ovec s_{\tau}) \right \}_{\tau \in [t]}} {\left \| \ovec \nu_t - \ovec \mu_t \right \|_2}.
\end{align}

From now on, we will refer to this optimal value as the cost of privacy. In our analysis, the CoP and CoD are proportional to the signal and noise variance. The convergence rate also depends on the number of nodes $n$ and the \emph{spectral gap} $\beta^\star$ of the doubly-stochastic adjacency matrix $A$, which dictates the convergence rate of $A^t$ to its limiting matrix $(\mathds 1 \mathds 1^T)/n$, as $t\to\infty$.  

Subsequently, the noise distribution can be optimized by minimizing the weighted variances of the noise terms in the upper bounds subject to DP constraints. Our main results are summarized in \cref{tab:detailed_bounds}. These consist of minimum variance unbiased estimation (\cref{sec:MVUE}) and online learning (\cref{sec:online_learning_dp}) of expected values under \emph{(i)} protection of the private signals (Signal DP), and \emph{(ii)} protection of the private signals and the local neighborhoods (Network DP). The following Informal Theorem summarizes our theoretical contributions along with \cref{tab:detailed_bounds}:

\vspace{-0.5\baselineskip}

\begin{quote}
    \emph{\textbf{Informal Theorem.} If $\Delta = \max_{\vec s \in \cS} \left | \frac {d \xi(\vec s_j)} {d \vec s_j} \right |$ is the global sensitivity of $\xi$ at time $t$, and $[a_{ij}]_{i,j = 1}^n$ are the weights of the adjacency matrix, then the upper bound in CoP is minimized by the Laplace noise with parameters $\frac {\Delta} {\eps}$ for the case of Signal DP, and parameters $\frac {\max \{ \max_{j \in \cN_i} a_{ij}, \Delta \}} {\eps}$ for the case of network DP.} 
\end{quote}

Moreover, whenever the global sensitivity $\Delta$ is unbounded, something that happens to a variety of sufficient statistics $\xi$, we rely on the smooth sensitivity introduced by \cite{nissim2007smooth} and derive the same algorithms (but with parameters depending on the smooth sensitivity instead of $\Delta$), which achieve $(\eps, \delta)$-DP, namely using the smooth sensitivity introduces a compromise in the $\eps$-DP guarantee by a small information leakage probability $\delta$, relaxing the DP constraint for a mechanism $\cM$ as follows: $\Pr[\cM(\vec s) \in R] \leq e^{\eps}{\Pr [\cM(\vec s') \in R]} + \delta$, for all adjacent input pairs ($\vec s$ and $\vec s'$) and all subsets $R$ of the output range space.

\begin{table}[H]
    \centering
    \caption{Total Error Bounds. $\Delta$ is the maximum signal sensitivity (absolute value of the derivative of $\xi(\cdot)$), ${M}_n$ $ =$ $ \max\limits_{i\in[n]}|\xi(\vec{s}_i)|$, $a = \max_{i \neq j} a_{ij}$ is the maximum non-diagonal entry of the adjacency matrix $A$, and $\beta^\star = \max \{ \lambda_2(A), |\lambda_n(A)| \}$ is the spectral gap of $A
    $. The {\color{blue}{blue}} terms are due to privacy constraints (CoP), and the {\color{red}{red}} terms are due to decentralization (CoD).}
    \footnotesize
    \begin{tabular}{lcc}
        \toprule
            &  Minimum Variance Unbiased Estimation & Online Learning of Expected Values \\
        \toprule
                Signal DP & \cref{theorem:cop_min_var_unbiased_estimation_signal}
                & \cref{theorem:cop_online_learning_dp_signal}
                \\
         & $O \left ( n (\beta^\star)^t\left ( {\color{blue}{\frac {\Delta} {\eps}}} + {\color{red}{M_n}} \right )  + {\color{blue}{\frac {\Delta} {\eps}}} \right )$ & $O \left (  \frac {n} {\sqrt t}   \left ( {\color{blue}{\frac {\Delta} {\eps}}} + {\color{red}{\sqrt {\var {} {\xi(\vec s)}}}} \right ) \right )$ \\
         \midrule
         Network DP & \cref{corollary:cop_min_var_unbiased_estimation_network}
         & \cref{theorem:cop_online_learning_dp_network}
         \\ 
         & $O \left ( n (\beta^\star)^t\left ( {\color{blue}{\frac {\max \{a, \Delta \}} {\eps}}} + {\color{red}{M_n}} \right )  + {\color{blue}{\frac {\max \{ a, \Delta\}} {\eps}}} \right )$ & $O \left ( \frac {n} {\sqrt t} \left ( {\color{blue}{\frac {\max \{a, \Delta \}} {\eps}}} + {\color{red}{\sqrt {\var {} {\xi(\vec s)}}}} \right ) \right )$ \\ 
        
        \bottomrule
    \end{tabular}
    
    \label{tab:detailed_bounds}
\end{table}

\ppar{Experiments (\cref{sec:experiments}).} We conduct experiments with two real-world datasets motivated by decentralized decision problems in power grids (see \cref{sec:introduction}). The first dataset considers the daily consumption of several German Households over three years \citep{milojkovic2018gem}, and the second one considers the US Power Grid network from \cite{watts1998collective}. Our experiments show that we can achieve $(\eps, \delta)$-DP while not significantly sacrificing convergence compared to the non-private baselines. The results also indicate the increased challenges in ensuring network DP compared to Signal DP and the importance of distributional features of the signals, in particular, having sufficient statistics with bounded derivatives. 

\ppar{Code and Data.} For reproducibility, we supplement the manuscript with our code and data, which can be found at: \url{https://github.com/papachristoumarios/dp-distributed-estimation}. 

\subsection{Related Work} \label{sec:related_work}

Our results relate to different bodies of literature across the engineering, statistics, and economics disciplines, and in what follows, we shall expand upon these relations.

\ppar{Decentralized Decision Making and Distributed Learning} have attracted a large body of literature over the years, with notable examples of \cite{BorkarVaraiya82,tsitsiklis1984convergence,tsitsiklis1993decentralized,papachristou2024group}. Recently, there has been a renewed interest in this topic due to its applications to sensor and robotic networks \citep{chamberland2003decentralized,olfati05,kar2012distributed,atanasov2014distributed,7040469} and emergence of a new literature considering networks of sensor and computational units  \citep{shahrampour2015distributed,nedic2015nonasymptotic,nedic2015fast}. Other relevant results investigate the formation and evolution of estimates in social networks and subsequent shaping of the individual and mass behavior through social learning \citep{krishnamurthy2013social,7081738, 6874893,rahimian2016bayesian,rahimian2016group}. Obtaining a global consensus by combining noisy and unreliable locally sensed data is a crucial step in many wireless sensor network applications; subsequently, many sensor fusion schemes offer good recipes to address this requirement \citep{xiao2005scheme,xiao2006space}. In many such applications, each sensor estimates the field using its local measurements, and then, the sensors initiate distributed optimization to fuse their local forecast. If all the data from every sensor in the network can be collected in a fusion center, then a jointly optimal decision is readily available by solving the global optimization problem given all the data \citep{alexandru2020towards}. However, many practical considerations limit the applicability of such a centralized solution. This gives rise to the distributed sensing problems that include distributed network consensus or agreement \citep{aumann1976agreeing,geanakoplos1982we, BorkarVaraiya82}, and distributed averaging \citep{dimakis2008geographic}; with close relations to the consensus and coordination problems that are studied in the distributed control theory \citep{jadbabaie2003coordination,mesbahiBook,bullo2009distributed}. Our work contributes to this body of work by providing a privacy-aware method for distributed estimation over a network. 

The work most importantly connected to our work is the work of \cite{rizk2023enforcing}, which introduces a first-order DP method for distributed optimization over a network of agents. While their work presents a general first-order method for DP distributed optimization, which is suitable for a larger family of optimization problems that include MVUE, adapting their method to our task comes with important trade-offs in the quality of the estimation, as running their method results in significantly higher error (up to $1000\times$ more; see \cref{fig:mse_plot}) for the MVUE task due to the repeated inclusion of the signal in the belief updates. Moreover, contrary to ours, their method does not support the online learning regime. Finally, the concept of ``graph-homomorphic noise'' proposed in their paper is equivalent to the Network DP regime of our paper. 

\ppar{Differential privacy} is a modern definition of data privacy encompassing many previous definitions, such as $K$-anonymity. A mechanism is differentially private if it maps similar records to the same value with equal probability. One consequence of the definition is that it guarantees that the outcome of a statistical analysis will be identical whether the individual chooses to participate in the social learning process. Many previously proposed mechanisms can be shown to be differentially private, such as randomized response \citep{warner1965randomized}, the Laplace mechanism, and the Gaussian mechanism. 
The randomized response algorithm originally proposed by \citet{warner1965randomized} consists of randomly perturbing binary responses, and it allows the population means to be recovered while giving individual respondents plausible deniability -- an instance of adding noise to data.

Statistical disclosure control of donated data, e.g., submitting recommendations to a public policy agency or submitting online product reviews to an e-commerce platform, requires safeguarding donors' privacy against adversarial attacks where standard anonymization techniques are shown to be vulnerable to various kinds of identification \citep{barbaro2006face}, linkage and cross-referencing  \citep{sweeney2015only,proc,sweeney1997weaving}, and statistical difference and re-identification attacks \citep{kumar2007anonymizing}. Here, we propose to analyze the efficiency of distributed estimation where agents learn from each other's actions protected by the gold standard of differential piracy \citep{dwork2011firm} to optimize statistical precision against the privacy risks to data donors who engage in social learning. 

DP can be implemented using central or local schemes. In centralized DP implementations, individual data are collected, and privacy noise is added to data aggregates, used, e.g., in the U.S. Census Bureau's Disclosure Avoidance system \citep{censusDP}. In local implementations, DP noising is done as data is being collected. Local methods forego the need for any trusted parties and typically provide more fundamental protection that can withstand a broader range future infiltration, e.g., even a government subpoena for data cannot violate the privacy protection when collected data is itself subject to privacy noising  --- e.g., Google, LinkedIn, and Apple's DP noising of their user data \citep{appleDP,googleDP,cardoso2022differentially,rogers2020linkedin}, cf. \citep{wilson2020differentially,googleDP2}. 

Regarding privacy and networks, \cite{koufogiannis2017diffusing} present a privacy-preserving mechanism to enable private data to diffuse over social networks, where a user wants to access another user's data and provide privacy guarantees on the privacy leak, which depends on the shortest path between two users in the network.  \cite{alexandru2021private} study the problem of private weighted sum aggregation with secret weights, where a central authority wants to compute the weighted sum of the local data of some agents under multiple privacy regimes. \cite{rahimian2023differentially,rahimian2023seeding} study influence maximization using samples of network nodes that are collected in a DP manner. 

Our work contributes to the above line of work by introducing a novel DP mechanism for distributed estimation and learning of exponential family distributions. Particularly, to the best of our knowledge, in the online learning regime, our algorithm introduces a novel weighting scheme that can protect both the individual signals and the neighboring beliefs, which can efficiently learn the expected value of the sufficient statistic. Moreover, we also provably derive the optimal distributions that minimize the convergence time of the algorithm and show that they are the Laplace distributions with appropriately chosen parameters. 

\ppar{Cyber-Physical Systems} (e.g., energy, transportation systems, healthcare systems, etc.) correspond to the building blocks of modern information and communication technologies, whose privacy and security are crucial for the function of such technologies. There have been multiple methods, such as encryption and $K$-anonymity, to achieve privacy and security in cyber-physical systems \citep{hassan2019differential,zhang2016privacy,kontar2021internet}. By incorporating differential privacy techniques, such as noise injection or data aggregation, into the design and operation of cyber-physical systems, privacy risks can be mitigated while preserving the utility of the collected data. This ensures that individual privacy is protected, as the data released from these systems cannot be used to infer sensitive information about specific individuals. Moreover, the application of differential privacy to cyber-physical systems enables the collection and analysis of data at scale, allowing for improved system performance, anomaly detection, and predictive maintenance while maintaining the trust of individuals and protecting their privacy in an increasingly connected world \citep{li2010data,gowtham2017privacy,xu2017security}. Our method's efficiency, e.g., compared to \cite{rizk2023enforcing}; see \cref{sec:experiments,app:rizk}, makes it suitable for several large-scale data applications.

\ppar{Federated Learning (FL)} is a collaborative learning paradigm for decentralized optimization without the need to collect all data points in a central server for gradient calculations \citep{mcmahan2022federated,kontar2021internet}, with many applications in mind: distributed training of ML models \citep{bonawitz2021federated,shi2023ensemble,yue2022federated}, healthcare \citep{kaissis2020secure}, wireless communications \citep{niknam2020federated}, etc. While more general than the setup we consider here, it suffers from issues in terms of communication and privacy. Existing privacy-preserving FL methods (cf. \cite{rizk2023enforcing})
usually adopt the instance-level differential privacy (DP), which provides a rigorous privacy guarantee but with several bottlenecks \citep{truong2021privacy}. \cite{truex2019hybrid} proposed a privacy-aware FL system that combines DP with secure multiparty computation, which utilizes less noise without sacrificing privacy as the number of federating parties increases and produces models with high accuracy. Other FL methods, such as \cite{zhang2022understanding}, accommodate differentially private updates via incorporating gradient clipping before adding privacy noise to achieve good performance subject to privacy constraints. 

Contrary to most of these methods, which are first-order optimization methods that are suitable to a large variety of losses compared to our method, our zero-order belief updates for the MVUE are simple, more efficient, and have significantly lower error than first-order methods (see \cref{fig:mse_plot} for comparison with \cite{rizk2023enforcing}). Moreover, our method can learn from data that arrive in an online way, whereas methods such as \cite{rizk2023enforcing,zhang2022understanding} are offline. Finally, most of these approaches rely on SGD. In contrast, our method focuses more on the decision-theoretic and statistical problem of estimating the expected value of the sufficient statistics of signals generated by an exponential family distribution.

\section{Differential Privacy Protections in a Distributed Information Environment}\label{sec:preliminaries}

\subsection{The Distributed Information Aggregation Problem Setting}

Let $\Theta$ be any measurable set, and in particular, not necessarily finite. Consider a network of $n$ agents and suppose that each agent $i\in[n]$ observes an i.i.d. samples $\vec s_{i}$ from a common distribution $\ell(\mathord{\cdot}|\theta)$ over a measurable sample space $\mathcal{S}$ (For simplicity in our proofs, we consider the simple case of 1D signals, i.e., $\cS \subseteq \Rbb$. Extending to multi-dimensional signals (i.e., $\cS \subseteq \Rbb^s$) is straightforward and considers the $\ell_\infty$ norm of the partial derivatives.). 
We assume that $\ell(\mathord{\cdot}|\theta)$ belongs to a one-parameter exponential family so that it admits a probability density or mass function that can be expressed as

\begin{align}\label{expfam}
\ell(\vec s | {\theta})  = \tau(\vec s)e^{\alpha(\theta)^{T} \xi(\vec s) - \varkappa(\alpha(\theta))},
\end{align} 

where $\xi(\vec s) \in \mathbb{R}$ is a measurable function acting as a complete sufficient statistic for the i.i.d. random samples $\vec s_{i}$, and $\alpha: \Theta \to \mathbb{R}$ is a mapping from the parameter space $\Theta$ to the real line $\mathbb{R}$, $\tau(\vec s) >0$ is a positive weighting function, and $\varkappa(\alpha) := \ln \int\limits_{s\in\mathcal{S}}\tau(s)e^{\alpha\xi(s)} d s$ is a normalization factor known as the $\log$-partition function. In \eqref{expfam}, $\xi(\mathord{\cdot})$ is a complete sufficient statistic for $\theta$. It is further true that $\sum_{i=1}^{n}\xi(\vec s_i)$ is a complete sufficient statistic given the $n$ i.i.d. signals that the agents have received \cite[Section 1.6.1]{bickel2015mathematical}. In particular, any inferences that involve the unknown parameter $\theta$ based on the observed signals $\ovec s = (\vec s_i)_{i\in[n]}$ can be equivalently performed given $\sum_{i=1}^{n}\xi(\vec s_i)$. The agents aim to estimate the expected value of $\xi(\mathord{\cdot})$: $\vec m_{\theta} = \ev {} {\xi(\vec s_{i})}$, with as little variance as possible. The Lehmann-Scheff\'{e} theory --- cf. \cite[Theorem 7.5.1]{casella2002statistical} --- implies that any function of the complete sufficient statistic that is unbiased for  $\vec m_\theta$ is the almost surely unique minimum variance unbiased estimator of $\vec m_\theta$. In particular, the minimum variance unbiased estimator of $\vec m_{\theta}$ given the initial data sets of all nodes in the network is given by: $\hat {\vec m_{\theta}} = ({1}/{n})\sum_{i=1}^{n} \xi(\vec s_{i})$. 

For concreteness, we can consider a group of $n$ suppliers whose private signals consist of their contracts, sales orders, and fulfillment data. These suppliers would benefit from aggregating their private information to better estimate market conditions captured by the unknown parameter $\theta$, e.g., to predict future demand. However, sharing their private signals would violate the privacy of their customers and clients. In Section \ref{sec:model_dynamics}, we explain how the agents can compute the best (minimum variance) unbiased estimator of $\vec m_{\theta}$ using average consensus algorithms \cite{olshevsky2014linear} that guarantee convergence to the average of the initial values without direct access to each other's private signals.

\subsection{The Information Exchange Model} \label{sec:model_dynamics}

 We consider an undirected network graph and let the undirected network $\cG(\cV = [n], \cE)$ which corresponds to a Markov chain with a doubly-stochastic symmetric adjacency/transition matrix $A = [a_{ij}]_{i,j=1}^{n}$ with the uniform stationary distribution. For instance, such an adjacency matrix $A = [a_{ij}]_{i,j=1}^{n}$ can defined according to the Metropolis-Hastings weights \citep{boyd2004fastest}:  $a_{ij} = 1/\max\{\deg(i),\deg(j)\}$ if $(j,i) \in \mathcal{E}$, and $[A]_{ij} = 0$ otherwise for $i\neq j$; furthermore, $a_{ii} = 1- \sum_{j\neq i} a_{ij}$. This choice of weights leads to a Markov chain where the stationary distribution is the uniform distribution \citep{boyd2004fastest}, and the agents can set these weights locally based on their own and neighboring degrees without the global knowledge of the network structure. For choices of $A$ that yield the fastest mixing Markov chain (but may not be locally adjustable), see \cite{boyd2004fastest}.

\begin{figure}[t]
\noindent
 \begingroup\fboxsep=10pt
\fbox{
\begin{minipage}{0.95\textwidth}
\captionof{algorithm}{Non-Private Distributed Minimum Variance Unbiased Estimation} \label{alg:non_private_min_var_unbiased_estimation_signal}
The agents initialize with: $\vec\mu_{i,0} = \xi(\vec s_{i})$, and in any future time period the agents communicate their values and update them according to the following rule:  
\begin{align}
{\vec\mu}_{i,t} =  a_{ii} \, {\vec\mu}_{i,t-1} + \sum_{j\in\mathcal{N}_i }a_{ij}{{\vec\mu}_{j,t-1}}. \label{eq:estimateUpdateAvgConcensus-non-private} 
\end{align} 
\end{minipage}}
\endgroup
\vspace{-5pt}
\end{figure}

The mechanisms for convergence, in this case, rely on the product of stochastic matrices, similar to the mixing of Markov chains (cf. \cite{levin2009markov22,shahrampour2015distributed}); hence, many available results on mixing rates of Markov chains can be employed to provide finite time grantees after $T$ iteration of the average consensus algorithm for fixed $T$. Such results often rely on the eigenstructure (eigenvalues/eigenvectors) of the communication matrix $A$, and the facts that it is a primitive matrix and its ordered eigenvalues satisfy $-1<\lambda_n(A)\leq\lambda_{n-1}(A)\leq\ldots\leq\lambda_1(A)=1$, as a consequence of the Perron-Frobenius theory \cite[Theorems 1.5 and 1.7]{seneta2006non}. 

Moreover, another mechanism considers learning the expected values in an online way where agents receive signals at every round and then update their estimate by averaging the estimates of their neighbors, their own estimate, and the new signals (see \cref{alg:non_private_online_learning_dp}). 

\begin{figure}
\noindent
 \begingroup\fboxsep=10pt
\fbox{
\begin{minipage}{0.95\textwidth}
\captionof{algorithm}{Non-Private Online Learning of Expected Values} \label{alg:non_private_online_learning_dp}
Initializing $\vec\mu_{i,0}$ arbitrarily, in any future time period $t\geq 1$ the agents observe a signal $\vec s_{i,t}$, communicate their current values $\vec \mu_{i,t-1}$, and update their beliefs to $\vec \mu_{i,t}$, according to the following rule:  
\begin{align}
{\vec\mu}_{i,t} = & \frac{t-1}{t}\left(a_{ii} \, {\vec\mu}_{i,t-1} + \sum_{j\in\mathcal{N}_i }a_{ij}{{\vec\mu}_{j,t-1}}\right) + \frac{1}{t}\xi(\vec s_{i,t}). \label{eq:beliefUpdateAvgConcensusOnline-non-private} 
\end{align} 
\end{minipage}}
\endgroup 
\vspace{-5pt}
\end{figure}

The $1 / t$ discounting provided in the above algorithm enables learning the expected values $\vec m_\theta = \ev {\theta} {\xi(\vec s)}$ asymptotically almost surely with a variance that scales as $O(1/t)$; i.e., linearly in time. As shown in \cite{rahimian2016distributed}, the variance upper bound comprises two terms. The former term considers the rate at which the Markov chain with transition matrix $A$ is mixing and is governed by the spectral gap, i.e., the second largest magnitude of the eigenvalues of $A$. The latter term captures the diminishing variance of the estimates with the increasing number of samples gathered by all the agents in the network.

Now that we have the necessary background in distributed estimation, we present the two DP protection mechanisms that our paper considers: the Signal DP and the Network DP.


\subsection{Differential Privacy Protections}

\subsubsection{Definitions and Mechanisms}

In this paper, we consider two methods for differential privacy and refer to them as \emph{Signal Differential Privacy (Signal DP)} and \emph{Network Differential Privacy (Network DP)}. Both algorithms are local in principle; the agents simply add noise to their estimates to achieve a desired privacy guarantee. Roughly, Signal DP adds noise to protect the signal $\vec s_{i, t}$ of each agent, and Network DP adds noise to protect the signal $\vec s_{i, t}$ of each agent, as well as the estimates $\{ \vec \nu_{j, t - 1} \}_{j \in \cN_i}$ of her neighbors from round $t - 1$. We assume that the non-private network dynamics evolve as
{\begin{align} \label{eq:dynamics}
    \vec \mu_{i, t} = F_{i, t}(\vec \mu_{i, t - 1}) + G_{i, t} \left ( \left \{ \vec \mu_{j, t - 1} \right \}_{j \in \cN_i} \right ) + H_{i, t}(\vec s_{i, t}), 
\end{align}} for each agent $i \in [n]$, and $t \ge 1$, where $F_{i, t} : \Rbb \to \Rbb$, $G_{i, t} : \Rbb^{d_i} \to \Rbb$, and $H_{i, t} : \cS \to \Rbb$ are functions determined by the learning algorithm, and correspond to the information from the agent's own estimate, the information from the neighboring estimates, and the information from the agent's private signal respectively. To achieve differential privacy, each agent adds some amount of noise $\vec d_{i, t}$ drawn from a distribution $\cD_{i, t}$ to their estimate and reports the noisy estimate to their neighbors. The agent can either aim to protect only their private signal -- which we call Signal DP and denote by $\cM^S$--, or protect their network connections and their private signal -- which we call Network DP and denote by $\cM^N$. The noisy dynamics are:{
\begin{align} \label{eq:noisy_dynamics}
    \vec \nu_{i, t} = F_{i, t}(\vec \nu_{i, t - 1}) + \underbrace {G_{i, t} \left ( \left \{ \vec \nu_{j, t - 1} \right \}_{j \in \cN_i} \right ) + \overbrace {H_{i, t}(\vec s_{i, t})}^{\text{\textcolor{magenta}{Signal DP $\cM^S_{i, t}$}}}}_{\text{\textcolor{orange}{Network DP $\cM^N_{i, t}$}}} \; + \; \vec d_{i, t}
\end{align}}
In \cref{eq:noisy_dynamics,fig:dp_protections}, we have outlined the dynamics of the two types of privacy protections. Formally, the two types of mechanisms can also be written as

\begin{align}
    \psi_{\cM^S_{i, t}}(\vec s_{i, t}) & = H_{i, t} (\vec s_{i, t}) + \vec d_{i, t}, \tag{$\cM^S_{i, t}$} \\
    \psi_{\cM^N_{i, t}}\left ( \vec s_{i, t}, \left \{ \vec \nu_{j, t - 1} \right \}_{j \in \cN_i} \right ) & = H_{i, t} (\vec s_{i, t}) + G_{i, t} \left ( \left \{ \vec \nu_{j, t - 1} \right \}_{j \in \cN_i} \right ) +  \vec d_{i, t}, \tag{$\cM^N_{i, t}$}
\end{align}
and the $\eps$-DP requirement is denoted as 
{\footnotesize
\begin{align*}
    \left | \log \left ( \frac {\Pr \left [ \psi_{\cM^S_{i, t}}(\vec s_{i, t}) = x \right ]}  {\Pr \left [ \psi_{\cM^S_{i, t}}(\vec s_{i, t}') = x \right ]} \right ) \right | \le \eps & \text{ for all } \vec s_{i, t}, \vec s_{i, t}' \in \cS \text{ s.t. } \left \| \vec s_{i, t} - \vec s_{i, t}'\right \|_1 \le 1 \\
    \left | \log \left ( \frac {\Pr \left [ \psi_{\cM^N_{i, t}}\left ( \vec s_{i, t}, \left \{ \vec \nu_{j, t - 1} \right \}_{j \in \cN_i} \right ) = x\right]} {\Pr \left [ \psi_{\cM^N_{i, t}}\left ( \vec s_{i, t}', \left \{ \vec \nu_{j, t - 1}' \right \}_{j \in \cN_i} \right ) = x\right]} \right ) \right | \le \eps & \text{ for all } \left ( \vec s_{i, t}, \left \{ \vec \nu_{j, t - 1} \right \}_{j \in \cN_i} \right ), \left ( \vec s_{i, t}', \left \{ \vec \nu_{j, t - 1}' \right \}_{j \in \cN_i} \right ) \in \cS \times \Rbb^{\deg(i)} \\ & \text{ s.t. } \left \| \left ( \vec s_{i, t}, \left \{ \vec \nu_{j, t - 1} \right \}_{j \in \cN_i} \right ) - \left ( \vec s_{i, t}', \left \{ \vec \nu_{j, t - 1}' \right \}_{j \in \cN_i} \right ) \right \|_1 \le 1
\end{align*}
}
for all $x \in \Rbb$.

\ppar{Central vs. Local Privacy.} In a local privacy scheme, the DP noise of the measurements can occur at the agent level by adding noise to the collected signals. Noise may be added to the signals after measurement to protect them against the revelations of belief exchange. The central scheme assumes a trusted environment where signal measurements can be collected without privacy concerns, but to the extent that protecting signals from the revelations of beliefs exchange is concerned, these methods would be equivalent.     

\section{Minimum Variance Unbiased Estimation}\label{sec:MVUE} 

\subsection{Minimum Variance Unbiased Estimation with Signal DP}\label{sec:min_var_unbiased_estimation_signal}

We present our first algorithm, which considers Minimum Variance Unbiased Estimation (MVUE). In this task, we aim to learn the MVUE of $\vec m_\theta$, i.e., to construct the estimate $\hat {\vec m_\theta} = (1 / n) \sum_{i = 1}^n \xi(\vec s_i)$ through local information exchange. The non-private version of this algorithm is presented in \cref{alg:non_private_min_var_unbiased_estimation_signal} according to which, the agents start with some private signals $\{ \vec s_i \}_{i \in [n]}$, calculate the sufficient statistics $\{ \xi(\vec s_i) \}_{i \in [n]}$ and then exchange these initial estimates with their local neighbors. \cref{alg:non_private_min_var_unbiased_estimation_signal} converges to $\hat {\vec m_\theta}$ in $t = O \left ( \frac {\log (n M_n / \rho)} {\log (1 / \beta^\star)} \right )$ steps to $\rho$-accuracy, which depends on the number of nodes $n$, the maximum absolute value $M_n$ of the sufficient statistics, and the spectral gap $\beta^\star$ of the transition matrix $A$. 

In its DP version, the algorithm proceeds similarly to the non-DP case, except each agent $i$ adds noise $\vec d_i$ to their sufficient statistic $\xi(\vec s_i)$. As we show later, to respect $\eps$-DP, the noise $\vec d_i$ depends on the agent's realized signal $\vec s_i$, the sufficient statistics $\xi(\cdot)$, and the privacy budget $\eps$. We provide the algorithm for the differentially private in \cref{alg:min_var_unbiased_estimation_signal}:

\begin{figure}[ht]
\noindent
 \begingroup\fboxsep=10pt
\fbox{
\begin{minipage}{0.95\textwidth}
\captionof{algorithm}{Minimum Variance Unbiased Estimation with Signal/Network DP} \label{alg:min_var_unbiased_estimation_signal}
The agents initialize with $\vec\nu_{i,0} = \xi(\vec s_i) + \vec d_i$ where $\vec d_i \sim \cD_i$ ($\cD_i$ is an appropriately chosen noise distribution), and in any future time period the agents communicate their values and update them according to the following rule:  
\begin{align}
{\vec\nu}_{i,t} =  a_{ii} \, {\vec\nu}_{i,t-1} + \sum_{j\in\mathcal{N}_i }a_{ij}{{\vec\nu}_{j,t-1}}. \label{eq:estimateUpdateAvgConcensus} 
\end{align} 
\end{minipage}}
\endgroup
\vspace{-10pt}
\end{figure}

Regarding convergence, in \cref{theorem:cop_min_var_unbiased_estimation_signal}, we prove that the convergence error is incurred due to two sources. The first source of error is the error due to the omniscient observer, which is the same as in the non-DP case, and the second source of error is incurred due to the privacy noise. Briefly, the latter term can be roughly decomposed to correspond to two terms: the former term is due to estimating the minimum variance unbiased estimator due to the noise, i.e. $(1 / n) \sum_{i = 1}^n \vec d_i$ and is vanishing with a rate proportional to $\log \left( \sum_{i = 1}^n \var {} {\vec d_i} \right ) / \log(1 / \beta^\star)$, and an additional non-vanishing term which is due to the mean squared error of $(1 / n) \sum_{i = 1}^n \vec d_i$ which corresponds to the sum of the variances of the signals. 

To minimize the convergence error, it suffices to minimize each variance $\var {} {\vec d_i}$ subject to $\eps$-DP constraints. By following recent results on the DP literature (cf. \cite{koufogiannis2015optimality}) we deduce that the variance minimizing distribution under $\eps$-DP constraints are the Laplace distributions with parameters $\Delta / \eps$, where $\Delta$ corresponds to the signal sensitivity (see below) and $\eps$ is the privacy budget. We present our Theorem (proved in \cref{proof:theorem:cop_min_var_unbiased_estimation_signal}):

\begin{theorem}[Minimum Variance Unbiased Estimation with Signal DP] \label{theorem:cop_min_var_unbiased_estimation_signal}
    The following hold for \cref{alg:min_var_unbiased_estimation_signal}:
    \begin{enumerate}
        \item For all $t$ and any zero-mean zero-mean distributions   
        {{\begin{align}
            \ev {} {\| \ovec \nu_t - \mathds 1 \hat {\vec m_\theta} \|_2} \le (1 +   \sqrt{(n - 1)} (\beta^\star)^{t} )  \sqrt {\sum_{j = 1}^n \var {} {\vec d_j}} + \sqrt{n(n - 1)} (\beta^\star)^{t} M_n,
            \label{eq:MVUE-signal-DP-bound}
        \end{align}}}
        where ${M}_n$ $ =$ $ \max\limits_{i\in[n]}|\xi(\vec{s}_i)|$ and $\beta^{\star} = \max\{\lambda_2(A),|\lambda_n(A)|\}$.
        \item  The optimal distributions $\{ \cD_i^\star \}_{i \in [n]}$ that minimize the MSE for each agent are the Laplace distribution with parameters $\Delta / \eps$ where $\Delta$ is the global sensitivity of $\xi$ Subsequently, ${\mathrm{TE}}( \cM^S) =  O \left ( n (\beta^\star)^t \left ( M_n + \frac {\Delta} {\eps} \right ) + \frac \Delta \eps \right )$. 
    \end{enumerate}

\end{theorem}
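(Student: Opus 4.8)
The plan is to regard the iteration \eqref{eq:estimateUpdateAvgConcensus} as a single linear map and reduce the whole statement to the spectral behaviour of $A$. Writing $\ovec\xi = (\xi(\vec s_i))_{i\in[n]}$ and $\ovec d = (\vec d_i)_{i\in[n]}$, the update gives $\ovec\nu_t = A^t\ovec\nu_0 = A^t(\ovec\xi + \ovec d)$. Let $P = (\mathds1\mathds1^T)/n$ be the orthogonal projector onto $\mathrm{span}(\mathds1)$; since $\hat{\vec m_\theta} = (1/n)\mathds1^T\ovec\xi$ we have $\mathds1\hat{\vec m_\theta} = P\ovec\xi$, and because $A$ is symmetric and doubly stochastic, $A\mathds1 = \mathds1 = \mathds1^T A$ forces $AP = PA = P$, hence $A^t - P = (A-P)^t$ and $(A^t-P)^2 = A^{2t}-P$. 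First I would decompose
\begin{align*}
\ovec\nu_t - \mathds1\hat{\vec m_\theta} = (A^t-P)\ovec\xi + (A^t-P)\ovec d + P\ovec d,
\end{align*}
which isolates the deterministic signal transient (the cost of decentralization), the random noise transient, and the non-vanishing averaged noise $P\ovec d = \mathds1(\tfrac1n\sum_i\vec d_i)$ (the noise floor of the cost of privacy).

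Next I would control each piece through the eigenbasis $\{\vec v_k\}$ of $A$, with $\vec v_1 = \mathds1/\sqrt n$ and $\lambda_1 = 1$. The one estimate doing all the work is the diagonal mixing bound $(A^{2t})_{jj} - 1/n = \sum_{k\ge2}\lambda_k^{2t}(\vec v_k)_j^2 \le (\beta^\star)^{2t}\tfrac{n-1}{n}$, i.e. the $j$-th diagonal entry of $(A^t-P)^2$. For the deterministic term I apply Cauchy–Schwarz entrywise, bounding $(\sum_j((A^t)_{ij}-1/n)\xi(\vec s_j))^2$ by $(\sum_j((A^t)_{ij}-1/n)^2)\|\ovec\xi\|_2^2$, then sum over $i$ and use $\|\ovec\xi\|_2^2 \le nM_n^2$ to get $\|(A^t-P)\ovec\xi\|_2 \le \sqrt{n(n-1)}(\beta^\star)^t M_n$. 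For the two noise terms I pass to second moments via Jensen, $\ev{}{\|X\|_2}\le\sqrt{\ev{}{\|X\|_2^2}}$, and use that the $\vec d_i$ are independent and zero-mean, so $\ev{}{\ovec d^T M\ovec d} = \sum_i M_{ii}\var{}{\vec d_i}$ for any symmetric $M$. This turns the noise transient into $\sum_j\var{}{\vec d_j}((A^{2t})_{jj}-1/n)$, of order $(\beta^\star)^{2t}\sum_j\var{}{\vec d_j}$, and the averaged noise into $\tfrac1n\sum_i\var{}{\vec d_i}$. Assembling the three bounds by the triangle inequality and $\sqrt{a+b}\le\sqrt a+\sqrt b$ produces \eqref{eq:MVUE-signal-DP-bound}, the $\sqrt{n-1}$ and leading $1$ being the (slightly loose) constants coming out of these steps.

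For the second claim I would note that the right-hand side of \eqref{eq:MVUE-signal-DP-bound} is monotone increasing in each $\var{}{\vec d_j}$, so minimizing it subject to $\eps$-DP is equivalent to minimizing each scalar variance separately — crucially, under Signal DP the constraint $(\cM^S_{i,t})$ decouples across agents, since agent $i$ releases only $\xi(\vec s_i)+\vec d_i$ and must protect $\vec s_i$ alone. The relevant query is $\vec s_i\mapsto\xi(\vec s_i)$, whose global sensitivity over $\ell_1$-adjacent signals is $\sup_{\vec s}|d\xi/d\vec s| = \Delta$ by the mean value theorem. I then invoke the optimality result of \cite{koufogiannis2015optimality}: among all $\eps$-DP additive-noise mechanisms for a scalar query of sensitivity $\Delta$, the Laplace law with scale $\Delta/\eps$ minimizes the variance, equal to $2(\Delta/\eps)^2$. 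Substituting $\var{}{\vec d_j} = 2(\Delta/\eps)^2$, so that $\sqrt{\sum_j\var{}{\vec d_j}} = \sqrt{2n}\,\Delta/\eps$, collapses the transient terms of \eqref{eq:MVUE-signal-DP-bound} to $O(n(\beta^\star)^t(M_n + \Delta/\eps))$, while in the averaged-noise term the $1/n$ from consensus averaging cancels the $\sqrt n$ from the Euclidean norm, leaving the non-vanishing floor at $O(\Delta/\eps)$; together these give the stated total error.

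The step I expect to be the real obstacle is not the linear algebra but the reduction in Part 2: arguing rigorously that optimizing the whole family of $\eps$-DP noise distributions reduces to a per-coordinate variance minimization. This rests on two things that must be stated carefully — the separability of the Signal-DP constraint (so the agents' noise choices are independent subproblems) and the fact that the figure of merit appearing in \eqref{eq:MVUE-signal-DP-bound} is exactly the noise variance, so the cited optimality of Laplace applies verbatim. Getting the mixing estimate $(A^{2t})_{jj}-1/n\le(\beta^\star)^{2t}(n-1)/n$ with the correct eigenvector bookkeeping (rather than a crude operator-norm bound) is the other place where care is needed to land the stated constants and, in particular, to see that the privacy noise floor is $O(\Delta/\eps)$ rather than $O(\sqrt n\,\Delta/\eps)$.
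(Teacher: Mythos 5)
Your argument is correct and rests on the same spectral machinery as the paper's proof (eigendecomposition of $A$, independence and zero mean of the noise turning quadratic forms into diagonal sums, Jensen, and the variance-minimization result of \cite{koufogiannis2015optimality} applied agent-by-agent), but it is organized differently in a way worth noting. The paper splits the error into $(\ovec \nu_t - \ovec \mu_t) + (\ovec \mu_t - \mathds 1 \hat {\vec m_\theta})$, bounds the first term by $\sqrt{\sum_j \var{}{\vec d_j}}\,(1+\sqrt{n-1}(\beta^\star)^t)$ via the eigenbasis, and imports the bound $\sqrt{n(n-1)}(\beta^\star)^t M_n$ for the second term from Theorem 3 of \cite{rahimian2016distributed}; you instead use the three-way decomposition $(A^t-P)\ovec \xi + (A^t-P)\ovec d + P\ovec d$ and prove the deterministic transient from scratch with Cauchy--Schwarz, arriving at the same constants. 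Your route buys two things. First, it is self-contained. Second, and more substantively, keeping $P\ovec d$ as a separate term exposes that its second moment is $\frac 1 n \sum_i \var{}{\vec d_i}$, so with Laplace noise the non-vanishing floor is $\sqrt{2}\,\Delta/\eps = O(\Delta/\eps)$; if one instead substitutes $\var{}{\vec d_j}=2\Delta^2/\eps^2$ directly into the packaged bound \eqref{eq:MVUE-signal-DP-bound}, the leading ``$1$'' coefficient multiplies $\sqrt{\sum_j \var{}{\vec d_j}}=\sqrt{2n}\,\Delta/\eps$ and one only reads off an $O(\sqrt n\,\Delta/\eps)$ floor. Your bookkeeping therefore recovers the stated $O\bigl(n(\beta^\star)^t(M_n+\Delta/\eps)+\Delta/\eps\bigr)$ total error (and is consistent with the paper's tightness example on $K_n$) more transparently than plugging into \eqref{eq:MVUE-signal-DP-bound}. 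Your flagged concern about the reduction in Part 2 is handled exactly as you suggest: the Signal-DP constraint decouples across agents because each mechanism $\cM^S_i$ releases only $\xi(\vec s_i)+\vec d_i$, and the objective in the bound is monotone in each $\var{}{\vec d_j}$, so the per-coordinate Laplace optimality applies verbatim.
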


{
\begin{proofsketch}
    We do an eigendecomposition on $A$ and prove that since the noise is independent among agents, we can show that {{$\ev {} {\| \ovec \nu_t - \ovec \mu_t \|_2^2} = \sum_{j = 1}^n \var {} {\vec d_j} \sum_{i = 1}^n \lambda_i^{2t}(A) \vec q_{ij}^2$}}. By using Jensen's inequality, we can show that {{$\ev {} {\| \ovec \nu_t - \ovec \mu_t \|_2} \le \sqrt {\sum_{j = 1}^n \var {} {\vec d_j}} \left (1 + \sqrt {n - 1} (\beta^\star)^t \right )$}}. To minimize the convergence time, it suffices to minimize the variance of $\vec d_j \sim \cD_j$ subject to privacy constraints for all $j \in [n]$, resulting in an optimization problem which we resolve by applying the main result of \cite{koufogiannis2015optimality}. 
\end{proofsketch}
}

Finally, we note that similarly, in the multi-dimensional case, the sensitivity would be {{$\Delta = \max_{\vec s \in \cS} \left \| \nabla_{\vec s} \xi(\vec s) \right \|_\infty$}}. 

\ppar{A Note about Global Sensitivity of $\xi$.} In the above Theorem, it is tempting to think that the local sensitivity of each agent, i.e., {{$\Delta \xi_i = \left | {d \xi(\vec s_i)} /{d \vec s_i}\right |$}} can be used to calibrate the noise distribution that preserves differential privacy and has the minimum variance. However, as it has been shown in \cite{nissim2007smooth}, releasing noise that depends on the local sensitivity can compromise the signal. However, there are cases where global sensitivity is unsuitable for the learning task. For instance, in many distributions, such as the log-normal distribution, the global sensitivity may be unbounded (e.g., $\xi (\vec s) = \log \vec s$ for the log-normal and the sensitivity is $+\infty$ in this case). A possible solution for this situation and many exponential family distributions is to use another sensitivity instead of the global sensitivity.  \cite[Definition 2.2]{nissim2007smooth} proposes the use of the $\gamma$-smooth sensitivity. The way of constructing the $\gamma$-smooth sensitivity is to calculate $$S_{\xi, \gamma}^*(\vec s) = \max_{k > 0} \left \{ e^{-\gamma k} \max_{\vec s': \| \vec s' - \vec s \|_1 = k} |\xi(\vec s') - \xi(\vec s) | \right \}.$$ Using the Laplace mechanism with parameters ${2 S_{\xi, \gamma}^\star(\vec s)} /\eps$ for $\gamma = {\eps}/ {(2 \log (2 / \delta))}$ would guarantee $(\eps, \delta)$-DP; see \cite[Corollary 2.4]{nissim2007smooth}. 

For example, we will consider the case of mean estimation in a log-normal distribution with known variance, a common task in many sensor networks, as we argue in \cref{sec:experiments}. The sufficient statistic in this case is $\xi(\vec s) = \log \vec s$, and the local sensitivity at distance $k$ can be computed to be $k / \vec s$. Therefore, the smooth sensitivity is (note the global sensitivity in this case is unbounded):
{{
\begin{align} \label{eq:smooth_sensitivity_log_normal}
    S_{\xi, \gamma}^*(\vec s) = \max_{k > 0} \left \{  e^{-\beta k} \frac {k} {\vec s} \right \} = \frac 1 {e \gamma \vec s} = \frac {2 \log (2 / \delta)} {e \eps \vec s}, \quad {\text{for }} \vec s > 0.
\end{align}}} If agents have access to several signals, and the exact formula of $\xi$ is not known, the sensitivity can still be approximated via samples as shown in \cite{wood1996estimation}.
 
\ppar{Remark for Network DP.} Note that to achieve Network DP, one natural algorithm is to add noise $\vec d_{i, t}$ at each round of \cref{alg:min_var_unbiased_estimation_signal} at a high enough level to protect both the network neighborhoods and the private signals. The issue with this algorithm is that it is divergent, i.e., {$\ev {} {\left \| \ovec \nu_t \right \|_2^2} \to \infty$}, because the estimate at time $t$ is {$\ovec \nu_t = A^t \ovec \xi + \sum_{\tau = 0}^{t - 1} A^\tau \ovec d_{t - \tau}$}, and its mean squared error, i.e., {$\ev {} {\left \| \sum_{\tau = 0}^{t - 1} A^\tau \ovec d_{t - \tau} \right \|_2^2}$}, grows linearly with $n$ and $t$. To avoid the accumulation of the DP noise, we should limit the DP noising to the initial step, which will achieve a bounded error because the mixing matrix, $A$, is doubly stochastic. To this end, we choose the noise level to satisfy $\eps$-DP with the aim of protecting both signals and network connections --- and run \cref{alg:min_var_unbiased_estimation_signal} at the new noise level. The error of this algorithm will be identical to the error bound of \cref{theorem:cop_min_var_unbiased_estimation_signal}. However, the sensitivity of the noise should be set to accommodate both network and signal dependencies as follows: $\Delta \nu_{i}^{\cM^N} = \max \left \{ \Delta, \max_{j \in \cN_i} a_{ij} \right \}$, and the optimal distributions will be $\cD_i^\star = \mathrm{Lap} \left ({\max \left \{ \Delta, \max_{j \in \cN_i} a_{ij} \right \}} /{\eps} \right )$; see \cref{corollary:cop_min_var_unbiased_estimation_network}. In the case that $\Delta$ is unbounded, we can replace $\Delta$ with the smooth sensitivity accounting for the network effects, i.e., $\max \left \{ \max_{j \neq i} a_{ij}, S_{\xi, \gamma}^*(\vec s_i) \right \}$ for each signal $\vec s_i$ and get an $(\eps,\delta)$-DP algorithm. Note that private signals will remain DP-protected by the post-processing immunity of the Laplace mechanism. In \cref{proof:corollary:cop_min_var_unbiased_estimation_network}, we use induction to show that Network DP is preserved at the $\eps$ level for all times $t$ when the mixing matrix $A$ is non-singular. The following corollary summarizes our results: 

\begin{corollary}[Total Error of Minimum Variance Unbiased Estimation with Network DP] \label{corollary:cop_min_var_unbiased_estimation_network}
Assume the mixing matrix $A$ is non-singular, let $\Delta$ be the global sensitivity of $\xi$, and $a = \max_{i \neq j} a_{ij}$. The Total Error of \cref{alg:min_var_unbiased_estimation_signal} with Network DP satisfies {{$${\mathrm{TE}}( \cM^N) =  O \left ( n (\beta^\star)^t \left ( M_n + \frac {\max \{ a, \Delta \}} {\eps} \right ) + \frac {\max \{a, \Delta\}} \eps \right ).$$}}
\end{corollary}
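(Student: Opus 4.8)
The plan is to reduce the Network DP analysis to the Signal DP analysis of \cref{theorem:cop_min_var_unbiased_estimation_signal}, since the two instances share identical dynamics and differ only in the variance of the injected noise. As emphasized in the Remark preceding the statement, to avoid the divergence caused by per-round noising we add noise only at initialization, so that $\ovec \nu_t = A^t \ovec \nu_0$ with $\vec\nu_{i,0} = \xi(\vec s_i) + \vec d_i$. Because the bound \eqref{eq:MVUE-signal-DP-bound} is proved for \emph{any} zero-mean distributions and depends on the noise only through $\var{}{\vec d_j}$, it carries over verbatim once we recompute these variances; the whole task is therefore to identify the correct sensitivity and then substitute.

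Next I would compute the effective sensitivity that the single initial injection must absorb. The signal enters only through $\xi(\vec s_i)$, contributing sensitivity $\Delta$, while each neighbor's reported value is propagated by the averaging dynamics with weight at most $\max_{j\in\cN_i} a_{ij}$, contributing that amount to the network dependence. Under the combined $\ell_1$ adjacency defining $\cM^N$, a unit perturbation may be concentrated on whichever coordinate has the larger gain, so the per-agent sensitivity is $\Delta\nu_i^{\cM^N} = \max\{\Delta, \max_{j\in\cN_i} a_{ij}\} \le \max\{\Delta, a\}$ with $a = \max_{i\neq j} a_{ij}$. Invoking the optimality result of \cite{koufogiannis2015optimality} exactly as in the proof of \cref{theorem:cop_min_var_unbiased_estimation_signal} gives $\cD_i^\star = \mathrm{Lap}(\max\{\Delta, a\}/\eps)$ and hence $\var{}{\vec d_i} = O((\max\{a, \Delta\}/\eps)^2)$ for every $i$. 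Plugging this into the transient terms of \eqref{eq:MVUE-signal-DP-bound} produces the $n(\beta^\star)^t(M_n + \max\{a,\Delta\}/\eps)$ contribution, and evaluating the residual noise in the consensus limit directly — namely $\mathds 1\,(1/n)\sum_j \vec d_j$, whose expected $\ell_2$ norm concentrates at $O(\max\{a,\Delta\}/\eps)$ after the $1/\sqrt n$ averaging gain — produces the non-vanishing term, yielding the stated Total Error.

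The remaining and most delicate point, and the one where the non-singularity hypothesis is used, is to certify that the $\eps$ Network DP guarantee calibrated at $t=0$ persists for every round $t\ge 1$ even though no fresh noise is injected. Protection of the private signals is immediate from the post-processing immunity of the Laplace mechanism, since $\ovec \nu_t = A^t \ovec \nu_0$ is a deterministic function of $\ovec \nu_0$. Protection of the neighboring estimates is subtler, because the $\cM^N$ adjacency at round $t$ references the evolving quantities $\{\vec\nu_{j,t-1}\}_{j\in\cN_i}$; here I would argue by induction on $t$, using that a non-singular $A$ renders the map $\ovec \nu_{t-1} \mapsto A\ovec \nu_{t-1} = \ovec \nu_t$ a bijection. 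The inductive step expresses the round-$t$ view as an invertible post-processing of the round-$(t-1)$ view, so the $\eps$ bound propagates without loss; non-singularity is precisely what prevents the deterministic averaging from collapsing distinct neighbor configurations into indistinguishable outputs, which would break the bijection the argument relies on. I expect setting up this induction cleanly — in particular tracking that the adjacency pairs being compared also transform under $A$ — to be the main obstacle. Finally, when $\Delta$ is unbounded I would replace it throughout by the smooth sensitivity $S_{\xi,\gamma}^*(\vec s_i)$ of \cite{nissim2007smooth}, calibrating to $\max\{a, S_{\xi,\gamma}^*(\vec s_i)\}$; this downgrades the guarantee to $(\eps,\delta)$-DP but leaves the form of the error bound unchanged.
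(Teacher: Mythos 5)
Your proposal is correct and follows essentially the same route as the paper: reuse the Signal DP bound of \cref{theorem:cop_min_var_unbiased_estimation_signal} with the per-agent sensitivity recomputed as $\max\{\Delta,\max_{j\in\cN_i}a_{ij}\}\le\max\{\Delta,a\}$, obtain the Laplace noise via \cite{koufogiannis2015optimality}, and certify persistence of the $\eps$ guarantee across rounds by induction, using non-singularity of $A$ to treat each update as an invertible post-processing. This matches both the remark preceding the corollary and the inductive argument in \cref{proof:corollary:cop_min_var_unbiased_estimation_network}.
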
 

\vspace{-5pt}

\ppar{Tightness of Analysis.} We note that the above analysis is tight because there exists an instance for which the upper bound is precisely achieved (for any noise distribution). To show this, consider the complete graph $K_n$, which corresponds to weights $a_{ij} = 1 / n$ for all $i, j \in [n]$ with a spectral gap of zero, and $\vec s_i \sim \cN(0, 1)$. The network dynamics on the complete graph converge in one iteration, and it is straightforward to show that the total error (TE) converges to $1/\eps$ as $n\to\infty$ for any noise distributions subject to $\eps$-DP constraints (by the central limit theorem), i.e., \cref{supp:eq:mvue_total_bound} in the Supplementary Materials holds with equality. 

\section{Online Learning of Expected Values} \label{sec:online_learning_dp}

\subsection{Privacy Frameworks}

We consider the online learning framework where the agents aim to learn the common expected value $\vec m_{\theta} = \ev {\theta} {\xi(\vec s)}$ of the sufficient statistics of their signal distributions. In this regime, the agents observe signals $\vec s_{i, t}$ at every time $t \ge 1$ and update their estimates $\vec \nu_{i, t}$ by weighing the information content of their most recent private signals, $\xi(\vec s_{i, t})$, their previous estimates, $\vec \nu_{i, t - 1}$, and the estimates of their neighbors, $\{ \vec \nu_{j, t - 1} \}_{j \in \cN_i}$. The mechanisms that we analyze in this section will accommodate two types of privacy needs with convergence and $\eps$-DP guarantees for the agents: 
{
\begin{enumerate}
    \item[$\cM^S$:] \emph{Signal DP (\cref{alg:online_learning_dp_signal})}. Here the agent adds noise to privatize their belief $\vec \nu_{i, t}$ with respect to their signal. To assert the consistency of the estimator, the agent averages her previous estimate and the estimates of her neighbors with weight $(t - 1) / t$ and her signal with weight $1/t$, similarly to \cref{alg:non_private_online_learning_dp}. Therefore, the local sensitivity of this mechanism is the sensitivity of the sufficient statistic weighted by $1/t$ and equals $\Delta \nu_{i, t}^{\cM^S} = \frac {\Delta} t$.

    \item [$\cM^N$:] \emph{Network DP} (\cref{alg:online_learning_dp_network}). Here we consider the protection of the agent's signals $\vec s_{i, t}$ together with their local neighborhood, namely the neighboring beliefs $\left \{ \vec \nu_{j, t - 1} \right \}_{j \in \cN_i}$. We note that when deciding the noise level at time $t$, we do not need to include the agent's own belief from time $t - 1$ in the DP protection; $\nu_{i,t-1}$ including all the private signals up to and including at time $t-1$ remain protected by the property of the post-processing immunity. If we were to use $\cref{alg:online_learning_dp_signal}$ for Network DP the sensitivity of the mechanism would be $\max \left  \{ \frac {\Delta} {t}, \frac {t - 1} {t} \max_{j \in \cN_i} a_{ij} \right \}$ which approaches 1 as $t \to \infty$ and violates the consistency of the estimator $\vec \nu_{i, t}$. For this reason, we need to adapt the weighting scheme of \cref{alg:online_learning_dp_signal} to be consistent and respect Network DP. We give more details of the altered algorithm (\cref{alg:online_learning_dp_network}) in \cref{sec:online_learning_dp_network}.
\end{enumerate}
}

\begin{figure}[ht]
\noindent
 \begingroup\fboxsep=10pt
\fbox{
\begin{minipage}{0.95\textwidth}
\captionof{algorithm}{Online Learning of Expected Values with Signal DP} \label{alg:online_learning_dp_signal}
In any time period $t\geq 1$ the agents observe a signal $\vec{s}_{i,t}$, and update their estimates according to the following rule:{  
\begin{align}
{\vec\nu}_{i,t} = & \frac{t-1}{t}\left(a_{ii} \, {\vec\nu}_{i,t-1} + \sum_{j\in\mathcal{N}_i }a_{ij}{{\vec\nu}_{j,t-1}}\right) + \frac{1}{t} (\xi(\vec{s}_{i,t}) + \vec d_{i, t}), \label{eq:estimateUpdateAvgConcensusOnline-signal-dp} 
\end{align}} where {$\vec d_{i, t} \sim \cD_{i, t}$} is appropriately chosen noise.

\end{minipage}}
\endgroup 
\end{figure}

\subsection{Online Learning of Expected Values with Signal DP} \label{sec:online_learning_dp_signal}

Here we will analyze the performance of \cref{alg:online_learning_dp_signal} where agents only protect their signals and present the corresponding error analysis. The error of the estimates $\vec \nu_{i, t}$ compared to the expected value $\vec m_\theta = \ev {\theta} {\xi(\vec s)}$, again consists of two terms; one due to decentralization and the statistics of the signals themselves (CoD), and another due to the variances of the added DP noise variables (CoP). Each of these two terms decays at a rate of $1 / \sqrt t$. They, in turn, can be bounded by the sum of two terms: a constant that is due to the principal eigenvalue of $A$ and represents the convergence of the sample average to $\vec m_{\theta}$, and $n - 1$ terms due to $|\lambda_i(A)|$ for $2 \le i \le n$ which dictate the convergence of the estimates $\vec \nu_{i, t}$ to their sample average. The latter depends on the number of nodes $n$ and the spectral gap $\beta^\star$ of matrix $A$. We formalize our results as follows (proved in \cref{proof:theorem:cop_online_learning_dp_signal}): 

\begin{theorem}[Online Learning of Expected Values with Signal DP] \label{theorem:cop_online_learning_dp_signal}
    The following hold for \cref{alg:online_learning_dp_signal} and mechanism $\cM^S$:
    \begin{enumerate}  
        \item For every time $t$, and all distributions $\{ \cD_{i, t} \}_{i \in [n], t \ge 1}$ we have that 
        {
        \begin{align*}
            \ev {} {\left \| \ovec \nu_t - \mathds 1 \vec m_{\theta} \right \|_2} \le \frac 1 t \left ( \sqrt{nt \var {} {\xi(\vec s)}} + \sqrt {\sum_{j = 1}^n \sum_{\tau = 0}^{t - 1} \var {} {\vec d_{j, t - \tau}}} \right )\left ( 1 + \sqrt {\frac {n - 1} {1 - (\beta^\star)^2}} \right ) .
        \end{align*}}
        \item The optimal distributions {$\{ \cD_{i, t}^\star \}_{i \in [n], t \ge 1}$} are  {$\cD_{i, t}^\star = 
        \mathrm {Lap} \left ( {\Delta} /{\eps} \right )$}, where $\Delta$ is the global sensitivity. Moreover, we have: {${\mathrm{TE}}(  \cM^S)  = O \left ( \frac {n} {\sqrt t} \left ( \sqrt {\var {} {\xi(\vec s)}} + \frac {\Delta} {\eps} \right )\right )$}.
    \end{enumerate}
\end{theorem}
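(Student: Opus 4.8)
The plan is to collapse the time-weighted recursion into a clean linear system and then split the error into a decentralization (signal) part and a privacy (noise) part, each controlled by the spectral gap. First I would write the update \eqref{eq:estimateUpdateAvgConcensusOnline-signal-dp} in vector form as $\ovec\nu_t = \frac{t-1}{t}A\ovec\nu_{t-1} + \frac1t(\ovec\xi_t + \ovec d_t)$, where $\ovec\xi_t = (\xi(\vec s_{i,t}))_{i\in[n]}$ and $\ovec d_t = (\vec d_{i,t})_{i\in[n]}$. Setting $\vec w_t := t\ovec\nu_t$ collapses the $1/t$ weights into the linear recursion $\vec w_t = A\vec w_{t-1} + \ovec\xi_t + \ovec d_t$ with $\vec w_0 = \vec 0$, so that
\begin{equation*}
\ovec\nu_t = \frac1t\sum_{\tau=1}^t A^{t-\tau}\left(\ovec\xi_\tau + \ovec d_\tau\right).
\end{equation*}
Because $A\mathds{1} = \mathds{1}$ and $\ev{}{\ovec\xi_\tau} = \mathds{1}\vec m_\theta$, subtracting $\mathds{1}\vec m_\theta = \frac1t\sum_{\tau=1}^t A^{t-\tau}\mathds{1}\vec m_\theta$ gives $\ovec\nu_t - \mathds{1}\vec m_\theta = \frac1t\sum_{\tau=1}^t A^{t-\tau}(\ovec\zeta_\tau + \ovec d_\tau)$ with the zero-mean signal deviation $\ovec\zeta_\tau := \ovec\xi_\tau - \mathds{1}\vec m_\theta$.

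Next I would apply the triangle inequality to separate the signal term $\frac1t\sum_\tau A^{t-\tau}\ovec\zeta_\tau$ (CoD) from the noise term $\frac1t\sum_\tau A^{t-\tau}\ovec d_\tau$ (CoP), and bound the $\ev{}{\|\cdot\|_2}$ of each by Jensen. Diagonalizing the symmetric doubly-stochastic $A = \sum_{i=1}^n\lambda_i\vec q_i\vec q_i^\top$ in its orthonormal eigenbasis ($\lambda_1 = 1$, $\vec q_1 = \mathds{1}/\sqrt n$), each term equals $\sum_i\vec q_i\,\frac1t\sum_\tau\lambda_i^{t-\tau}(\vec q_i^\top(\cdot)_\tau)$, and by orthonormality its squared norm is the sum over $i$ of the squared eigencoordinates. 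Since $\{\ovec\zeta_\tau\}$ and $\{\ovec d_\tau\}$ are independent and zero-mean across rounds and agents, the cross terms in the second moment vanish, leaving $\frac1{t^2}\sum_\tau\lambda_i^{2(t-\tau)}\ev{}{(\vec q_i^\top(\cdot)_\tau)^2}$ in direction $i$. I would separate the principal direction $i=1$, which produces the sample-average contributions $\var{}{\xi(\vec s)}/t$ (signal) and $\frac1{nt^2}\sum_{\tau,j}\var{}{\vec d_{j,\tau}}$ (noise), from the directions $i\ge2$, where I bound $\lambda_i^{2(t-\tau)}\le(\beta^\star)^{2(t-\tau)}$ and sum the geometric series $\sum_\tau(\beta^\star)^{2(t-\tau)}\le 1/(1-(\beta^\star)^2)$; the $n-1$ non-principal directions jointly contribute the factor $(n-1)/(1-(\beta^\star)^2)$. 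Using $\sqrt{a+b}\le\sqrt a+\sqrt b$ and the mild over-estimates $\var{}{\xi(\vec s)}/t\le n\var{}{\xi(\vec s)}/t$ and $1\le(n-1)/(1-(\beta^\star)^2)$ then lets the common factor $1+\sqrt{(n-1)/(1-(\beta^\star)^2)}$ multiply both square roots, giving Part 1.

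For Part 2, the only noise-dependent quantity in the Part-1 bound is $\sum_{j,\tau}\var{}{\vec d_{j,\tau}}$, so the optimal mechanism minimizes each $\var{}{\vec d_{i,t}}$ subject to $\eps$-DP. In \eqref{eq:estimateUpdateAvgConcensusOnline-signal-dp} the signal enters only through $\frac1t\xi(\vec s_{i,t})$, whose $\ell_1$-sensitivity is $\Delta/t$ by the mean value theorem and $\Delta = \max_{\vec s}|\xi'(\vec s)|$; hence $\frac1t\vec d_{i,t}$ must be $\mathrm{Lap}(\Delta/(t\eps))$, i.e.\ $\vec d_{i,t}\sim\mathrm{Lap}(\Delta/\eps)$. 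By the variance-optimality of the Laplace law among $\eps$-DP additive-noise mechanisms \citep{koufogiannis2015optimality}, $\cD_{i,t}^\star = \mathrm{Lap}(\Delta/\eps)$ is optimal, and privacy at all later rounds is inherited through post-processing immunity. Substituting $\var{}{\vec d_{i,t}} = 2(\Delta/\eps)^2$ gives $\sum_{j,\tau}\var{}{\vec d_{j,\tau}} = 2nt(\Delta/\eps)^2$, and plugging into Part 1 with $1+\sqrt{(n-1)/(1-(\beta^\star)^2)} = O(\sqrt n)$ yields ${\mathrm{TE}}(\cM^S) = O\left(\frac{n}{\sqrt t}\left(\sqrt{\var{}{\xi(\vec s)}} + \Delta/\eps\right)\right)$.

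The main obstacle I anticipate is the noise term, where the geometric mixing in $(t-\tau)$ is entangled with the per-round variances $\var{}{\vec d_{j,\tau}}$: unlike the signal term these variances are not identical across $\tau$, so the time-sum and the eigenvalue-sum cannot simply be factored. The key manoeuvre is to interchange the order of summation and, for each fixed pair $(j,\tau)$, bound $\sum_{i\ge2}\lambda_i^{2(t-\tau)}q_{ij}^2\le(\beta^\star)^{2(t-\tau)}\sum_{i\ge2}q_{ij}^2\le(\beta^\star)^{2(t-\tau)}$ before summing over time; this decouples the two sums while preserving the full double sum $\sum_{j,\tau}\var{}{\vec d_{j,\tau}}$ that appears in the statement. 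A secondary point is the DP bookkeeping: one must check that protecting only the freshly injected $\frac1t\xi(\vec s_{i,t})$ is enough, since every earlier signal is already sealed inside $\vec\nu_{i,t-1}$ and stays private under post-processing.
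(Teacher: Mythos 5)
Your proposal is correct and follows essentially the same route as the paper's proof: the closed form $\ovec \nu_t = \frac 1 t \sum_{\tau} A^{t-\tau}(\ovec \xi_\tau + \ovec d_\tau)$, eigendecomposition of $A$ with the principal direction separated from the $n-1$ directions controlled by $\beta^\star$, the geometric-series bound, Jensen plus the triangle inequality, and the per-round variance minimization via \cite{koufogiannis2015optimality} with sensitivity $\Delta/t$ cancelling against the $1/t$ noise weight. The only cosmetic difference is your explicit substitution $\vec w_t = t\ovec\nu_t$ to derive the closed form, which the paper simply asserts.
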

{
\begin{proofsketch}
    We note that $\ovec \nu_t = \frac 1 t \sum_{\tau = 0}^{t - 1} A^\tau (\ovec \xi_{t - \tau} + \ovec d_{t - \tau})$. We decompose $A$ as $A = Q \Lambda Q^T$ where $\Lambda$ is the eigenvalue matrix and $Q$ is the orthonormal eigenvector matrix, and upper bound $\ev {} {\left \| \ovec \nu_t - \ovec \mu_t \right \|_2^2}$ as a weighted sum of powers of the eigenvalues of $A $ and the sum of the variances of $\ovec d_{t - \tau}$, i.e., $ \ev {} {\left \| \ovec \nu_t - \ovec \mu_t \right \|_2^2}  \le \frac 1 {t^2} \sum_{i = 1}^n \sum_{\tau = 0}^{t - 1} \lambda^{2 \tau}_i(A) \ev {} {\left \| \ovec d_{t - \tau} \right \|_2^2}$. We apply the Cauchy-Schwarz inequality again and bound this term with the sum of all variances across all agents $j \in [n]$ and all the rounds $0 \le \tau \le t - 1$, plus the sum of the powers of the eigenvalues of $A$, i.e., $\sum_{i = 1}^n \sum_{\tau = 0}^{t - 1} \lambda_i^{2\tau}(A)$. The latter term is decomposed into a term that depends on the principal eigenvalue $\lambda_1(A) = 1$ and $n - 1$ terms that are dominated by the spectral gap $\beta^\star$. We do the same analysis for $\ovec \mu_t - \mathds 1 \vec m_{\theta}$, and we finally apply Jensen's inequality and the triangle inequality to get the final bound. To optimize the bound, it suffices to minimize the variances $\var {} {\vec d_{i, t - \tau}}$ for all $0 \le \tau \le t - 1$ and $i \in [n]$ subject to $\eps$-DP constraints. The optimal noise distributions are derived by solving the same optimization problems as in \cref{theorem:cop_min_var_unbiased_estimation_signal} for all $i \in [n]$ and all $t \ge 1$. Calculating the bound for Laplace noise with parameters $\Delta / \eps$ gives the upper bound on ${\mathrm{TE}}(  \cM^S)$.   
\end{proofsketch}
}

\subsection{Online Learning of Expected Values with Network DP} \label{sec:online_learning_dp_network}

Above, we briefly discussed why a learning algorithm that puts weights $\frac {t - 1} {t}$ on the network predictions and $\frac 1 t$ on the private signal would not work (and in fact, the dynamics become divergent in that case). In \cref{alg:online_learning_dp_network}, we present a different learning scheme that uses weights $\frac 1 t$ for the private signals and neighboring observations and $1 - \frac 1 t (2 - a_{ii})$ for the previous beliefs of the agent. The motivation behind this learning scheme is that the sensitivity is now going to be $\Delta \vec \nu_{i, t}^{\cM^N} = \frac 1 t \max \left \{ \max_{j \in \cN_i} a_{ij}, \Delta \right \}$ which goes to zero, instead of $\max \left \{ \frac {t - 1} {t} \max_{j \in \cN_i} a_{ij}, \frac 1 t  \Delta \right \}$ which approaches 1. The drawback is that the added self-weight on one's own beliefs at every time step will slow down the mixing time and convergence. In the sequel, we present \cref{alg:online_learning_dp_network} and analyze its performance.

\begin{figure}[ht]
\noindent
 \begingroup\fboxsep=10pt
\fbox{
\begin{minipage}{0.95\textwidth}
\captionof{algorithm}{Online Learning of Expected Values with Network DP} \label{alg:online_learning_dp_network}
In any time period $t\geq 1$ the agents observe a signal $\vec{s}_{i,t}$ and update their estimates according to the following rule:{   
\begin{align}
{\vec\nu}_{i,t} = &  \left ( 1 - \frac 1 t (2 - a_{ii}) \right ) \, {\vec\nu}_{i,t-1}  + \frac 1 t \left (\sum_{j\in\mathcal{N}_i }a_{ij}{{\vec\nu}_{j,t-1}} 
 + \xi(\vec{s}_{i,t}) \right )+ \frac 1 t \vec d_{i, t},\label{eq:estimateUpdateAvgConcensusOnline-network-dp} 
\end{align}} where {$\vec d_{i, t} \sim \cD_{i, t}$} is appropriately chosen noise.
\end{minipage}}
\endgroup 
\end{figure}
\vspace{-10pt}
We can write the above system in matrix notation as $\ovec \nu_{t} = \left ( B(t) - \frac 1 t I \right ) \ovec \nu_{t - 1} + \frac 1 t \ovec \xi_t + \frac 1 t \ovec d_t$ where $b_{ii}(t) = 1 - \frac 1 t (1 - a_{ii})$ and $b_{ij}(t)  = \frac 1 t a_{ij}$ for all $j \neq i$. First, we study the convergence of \cref{alg:online_learning_dp_network} when no noise is added, i.e., $\ovec d_t = 0$. Note that similarly to \cref{alg:online_learning_dp_signal}, the error term is comprised of two terms, one owing to the principal eigenvalues of $C(t) = B(t) - \frac 1 t I$, i.e., $\lambda_1(C(t)) = 1 - 1 / t$ which controls the convergence of the sample average of the estimates to $\vec m_{\theta}$, and $n - 1$ terms due to $|\lambda_i(C(t))|$ which control the convergence of the estimates $\vec \nu_{i, t}$ to their sample average.  


\begin{theorem}[Online Learning of Expected Values with Network DP] \label{theorem:cop_online_learning_dp_network}
     For \cref{alg:online_learning_dp_network}, the following hold:

    \begin{enumerate}
        \item For all $t \ge 1$ and all distributions $\{ \cD_{i, t} \}_{i \in [n], t \ge 1}$, we have that {{$$\ev {} {\left \| \ovec \nu_t - \mathds 1 \vec m_\theta \right \|_2} \le \frac {1} {t} \left ( \sqrt {nt \var {} {\xi(\vec s)}} + \sqrt { \sum_{i = 1}^n \sum_{\tau = 0}^{t - 1} \var {} {\vec d_{i, t - \tau}}} \right ) \left ( 1 + \sqrt {\frac {n - 1} {3 - 2 \beta^\star}}  \right ).$$}}

        \item The optimal distributions  $\{ \cD_{i, t}^\star \}_{i \in [n], t \ge 1}$ that minimize the MSE bound subject to $\eps$-DP are the Laplace Distributions with parameters ${\max \left \{ \max_{j \in \cN_i} a_{ij}, \Delta \right \}}/ {\eps}$. Moreover, if $\Delta$ is the global sensitivity and $a = \max_{i \neq j} a_{ij}$, then {{$${\mathrm{TE}}( \cM^N) = O \left ( \frac {n} {\sqrt t} \left ( \frac {\max \{a, \Delta \}} {\eps} + \sqrt {\var {} {\xi(\vec s)}} \right )\right ).$$}}
    \end{enumerate}

\end{theorem}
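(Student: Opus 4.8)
The plan is to mirror the eigen-decomposition strategy of \cref{theorem:cop_online_learning_dp_signal}, replacing the fixed powers $A^\tau$ by products of the time-varying coefficient matrices. Writing the dynamics as $\ovec \nu_t = C(t) \ovec \nu_{t-1} + \frac 1 t \ovec \xi_t + \frac 1 t \ovec d_t$ with $C(t) = \left(1 - \frac 2 t\right) I + \frac 1 t A$, I would first unroll the recursion into $\ovec \nu_t = \Phi(t, 0) \ovec \nu_0 + \sum_{\tau = 1}^t \Phi(t, \tau) \frac 1 \tau \left(\ovec \xi_\tau + \ovec d_\tau\right)$, where $\Phi(t, \tau) = \prod_{k = \tau + 1}^t C(k)$. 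The crucial observation is that every $C(k)$ is a polynomial in $A$, so they are simultaneously diagonalizable: with $A = Q \Lambda Q^T$ for orthonormal $Q$, one gets $\Phi(t, \tau) = Q \, \mathrm{diag}\left(P_i(t, \tau)\right) Q^T$ where $P_i(t, \tau) = \prod_{k = \tau + 1}^t \left(1 - \frac{2 - \lambda_i(A)}{k}\right)$. For the principal eigenvalue $\lambda_1(A) = 1$ the factor telescopes, $P_1(t, \tau) = \prod_{k = \tau + 1}^t \frac{k - 1}{k} = \frac \tau t$, so the principal component of $\ovec \nu_t$ collapses to the uniform average $\frac 1 t \sum_{\tau} \left(\ovec \xi_\tau + \ovec d_\tau\right)$; this reproduces the Ces\`aro sample-mean term that converges to $\vec m_\theta$ and accounts for the leading $1$ in the factor $\left(1 + \sqrt{(n-1)/(3 - 2\beta^\star)}\right)$.

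\textbf{Projection and error split.} Next I would split the error as in the TE decomposition and project onto the eigenbasis of $A$. Because the noise is zero-mean and independent across rounds, the cost-of-privacy mean-squared error separates as $\ev{}{\|\ovec \nu_t - \ovec \mu_t\|_2^2} = \sum_{i = 1}^n \sum_{\tau = 1}^t \frac{P_i(t, \tau)^2}{\tau^2} \ev{}{\left(\vec q_i^T \ovec d_\tau\right)^2}$, and an analogous expression (with $\var{}{\xi(\vec s)}$ replacing the noise variance) governs the cost-of-decentralization term. I would isolate the $i = 1$ contribution --- which, thanks to $P_1(t, \tau) = \tau / t$, reduces to the variance of a plain time-average and supplies the leading $1$ --- from the $i \ge 2$ contributions that constitute the consensus spread. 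A Cauchy--Schwarz step factors the spread into the variance sum $\sum_{i, \tau} \var{}{\vec d_{i, t - \tau}}$ times the coefficient sum $\sum_{i = 2}^n \sum_{\tau = 1}^t P_i(t, \tau)^2 / \tau^2$. Passing from mean-squared bounds to $\ev{}{\|\cdot\|_2}$ via Jensen's inequality and recombining the principal term, the spread, and the two cost components with the triangle inequality yields part (1).

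\textbf{Main obstacle.} The hard part is the coefficient sum for the non-principal directions, which is the Network-DP analogue of the geometric series $\sum_\tau \lambda_i^{2\tau} \le 1/(1 - (\beta^\star)^2)$ of \cref{theorem:cop_online_learning_dp_signal}. Here the summand is no longer geometric: writing $c = 2 - \lambda_i(A) \ge 2 - \beta^\star$, one has $P_i(t, \tau) = \Gamma(t + 1 - c)\Gamma(\tau + 1) / \left(\Gamma(t + 1)\Gamma(\tau + 1 - c)\right) \approx (\tau / t)^c$, so that $\sum_{\tau = 1}^t P_i(t, \tau)^2 / \tau^2 \approx t^{-2c} \sum_\tau \tau^{2c - 2} \approx 1/\left((2c - 1) t\right)$. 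Since $2c - 1 = 3 - 2\lambda_i(A) \ge 3 - 2\beta^\star$ for every non-principal eigenvalue, this gives $\sum_{\tau} P_i(t, \tau)^2 / \tau^2 \le 1/\left((3 - 2\beta^\star) t\right)$, and summing over the $n - 1$ non-principal directions produces the $(n - 1)/(3 - 2\beta^\star)$ inside the square root. Making this rigorous is delicate: the tempting induction on the recursion $S_t := \sum_{\tau = 1}^t P_i(t, \tau)^2 / \tau^2 = t^{-2} + (1 - c/t)^2 S_{t-1}$ with target $S_t \le 1/((2c-1)t)$ fails at second order (the inductive step collapses to $(c - 1)^2 \le 0$), so I would instead control the Gamma-ratios directly via log-convexity / Gautschi-type inequalities, or prove a slightly loosened inductive target that absorbs the lower-order slack.

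\textbf{Optimal noise and rate.} Finally, for part (2) I would compute the sensitivity of mechanism $\cM^N$. The noisy quantity at agent $i$ aggregates the signal $\xi(\vec s_{i, t})$ (sensitivity $\Delta$) and the neighboring beliefs $\{\vec \nu_{j, t-1}\}_{j \in \cN_i}$ (each with weight $a_{ij}$) before the $\frac 1 t$ scaling, so under the $\ell_1$ adjacency notion the per-round sensitivity is $\Delta \vec \nu_{i, t}^{\cM^N} = \frac 1 t \max\{\max_{j \in \cN_i} a_{ij}, \Delta\}$, which vanishes and preserves consistency. Minimizing $\var{}{\vec d_{i, t}}$ subject to the $\eps$-DP constraint is then the same one-dimensional program solved in \cref{theorem:cop_min_var_unbiased_estimation_signal}, whose optimizer by \cite{koufogiannis2015optimality} is $\cD_{i, t}^\star = \mathrm{Lap}\left(\max\{\max_{j \in \cN_i} a_{ij}, \Delta\}/\eps\right)$. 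Substituting $\var{}{\vec d_{i, t}} = 2\left(\max\{a, \Delta\}/\eps\right)^2$ into part (1) --- so that the $nt$ noise terms contribute $\sqrt{\sum_{i, \tau} \var{}{\vec d_{i, t-\tau}}} = O\left(\sqrt{nt}\,\max\{a, \Delta\}/\eps\right)$ --- and collecting the $1/t$ and spread factors gives the claimed $\mathrm{TE}(\cM^N) = O\left(\frac n {\sqrt t}\left(\frac{\max\{a, \Delta\}}\eps + \sqrt{\var{}{\xi(\vec s)}}\right)\right)$.
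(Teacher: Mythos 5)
Your proposal follows essentially the same route as the paper's proof: simultaneous diagonalization of the time-varying matrices $C(k)$ with $A$, exact telescoping in the principal direction, the product estimate $\prod_k\left(1-\tfrac{2-\lambda_i(A)}{k}\right)\lesssim (\tau/t)^{2-\lambda_i(A)}$ yielding the $(n-1)/(3-2\beta^\star)$ factor, Jensen plus the triangle inequality, and the identical sensitivity computation and variance-minimization program giving the Laplace noise with parameter $\max\{\max_{j\in\cN_i}a_{ij},\Delta\}/\eps$. The one obstacle you flag is resolved in the paper exactly by the "direct product control" route you suggest: bounding $\lambda_i(\Phi(\tau))\le \tau^{\lambda_i(A)-2}$ via $1+x\le e^{x}$ and comparing the coefficient sum with $\int_1^t \tau^{2\lambda_i(A)-4}\,d\tau\le 1/(3-2\lambda_i(A))$, rather than by induction.
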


\begin{proofsketch}
    Our proof (\cref{proof:theorem:cop_online_learning_dp_network}) follows a similar analysis to \cref{theorem:cop_online_learning_dp_signal}. We first note that $C(t)$ can be written as {{$C(t) = \frac {t - 2} {t} I + \frac 1 t A$}}, and therefore shares the same eigenvectors with $A$ and has eigenvalues {{$\lambda_i(C(t)) = \frac {t - 2} t + \frac 1 t \lambda_i (A)$}}. We define {{$\Phi(t) = \prod_{\tau = 0}^{t - 1} C(\tau)$}}, and show that $\lambda_i(\Phi(t)) \le t^{\lambda_i(A) - 2}$. Moreover, {{$\ovec \nu_t = \frac 1 t \sum_{\tau = 0}^{t - 1} \Phi(\tau) (\ovec \xi_{t - \tau} + \ovec d_{t - \tau})$}}. By using the bound on the eigenvalues of $\Phi(\tau)$ for $0 \le \tau \le t - 1$ and by applying the same analysis as in \cref{theorem:cop_online_learning_dp_signal}, we decompose {{$\ev {} {\left \| \ovec \nu_t - \ovec \mu_t \right \|_2^2}$}} and get that the sum of the powers of the eigenvalues consists of a term due to the principal eigenvalues, which decays with the rate $1 / t^2$ and $n - 1$ terms that decay as $\frac {n - 1} {t^2 (3 - 2 \beta^\star)}$. We finally deduce that {{$ \ev {} {\left \| \ovec \nu_t - \ovec \mu_t \right \|_2} \le \frac 1 t \sqrt{ \sum_{i = 1}^n \sum_{\tau = 0}^{t - 1} \var {} {\vec d_{i, t - \tau}}}   \left ( 1 + \sqrt {\frac {n - 1} {3 - 2 \beta^\star}}  \right )$}}. The same bound holds for {{$\ovec q_t = \ovec \mu_t - \mathds 1 \vec m_{\theta}$}}, and by applying Jensen's inequality and the triangle inequality, we get the error bound. The optimization of the noise variables follows similar logic to \cref{theorem:cop_online_learning_dp_signal}, with a different sensitivity. 
\end{proofsketch}

\ppar{Tightness of analysis.} We note that the analysis is tight, and the tight example is precisely the same as the example we provided for MVUE. 

\begin{figure}[t]
    \centering
        \includegraphics[width=0.4\textwidth]{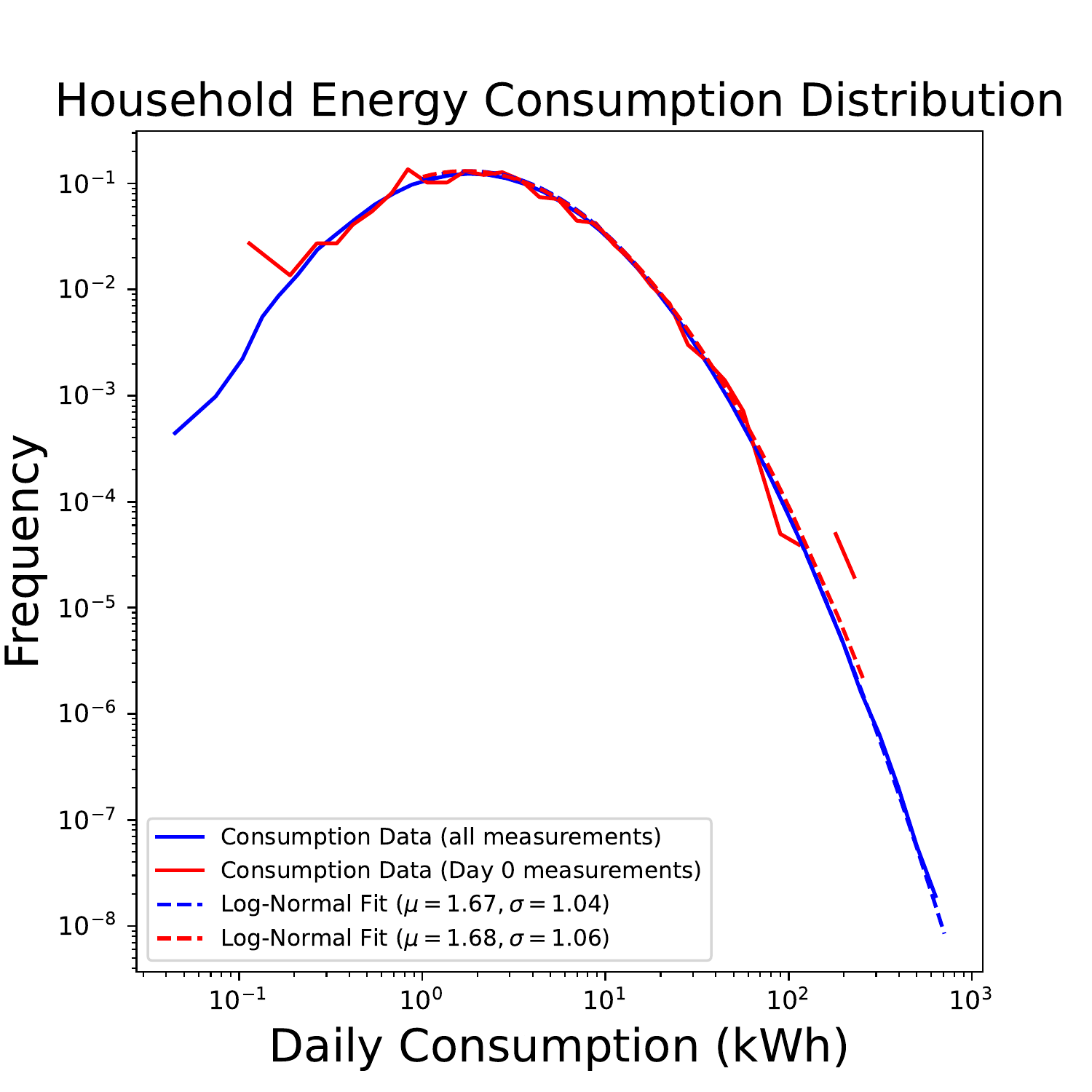}
    \includegraphics[width=0.4\textwidth]{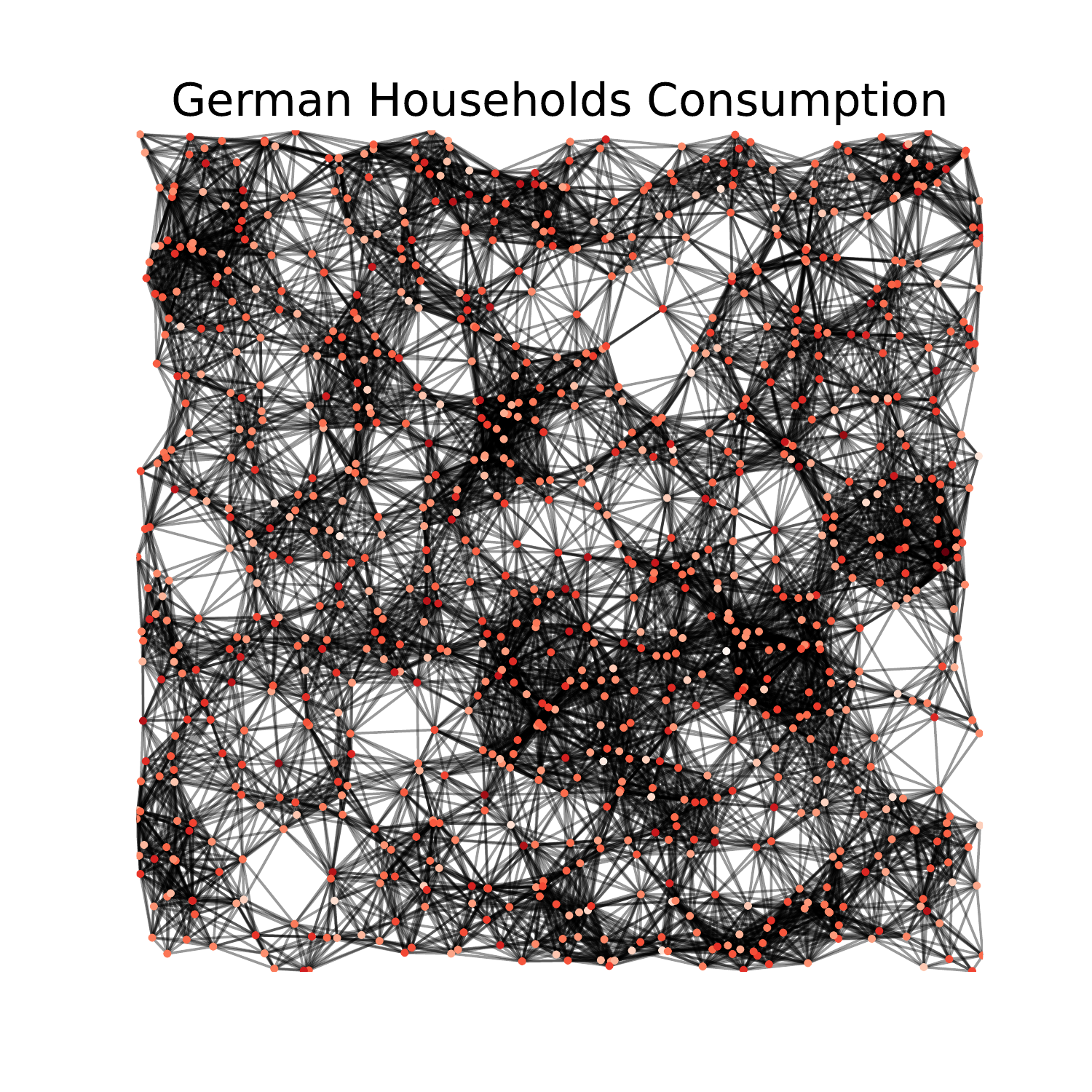}
    \includegraphics[width=0.4\textwidth]{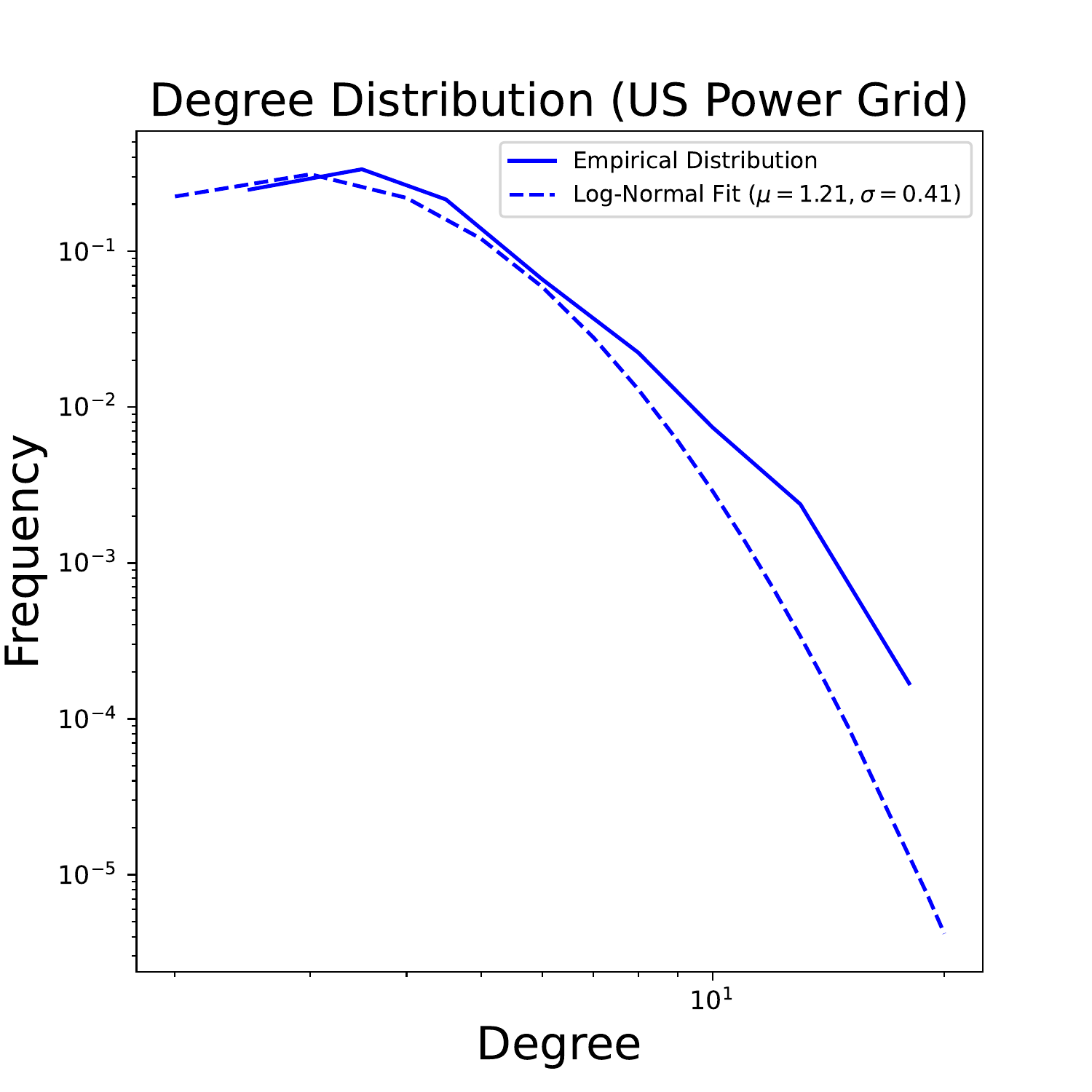}
    \includegraphics[width=0.4\textwidth]{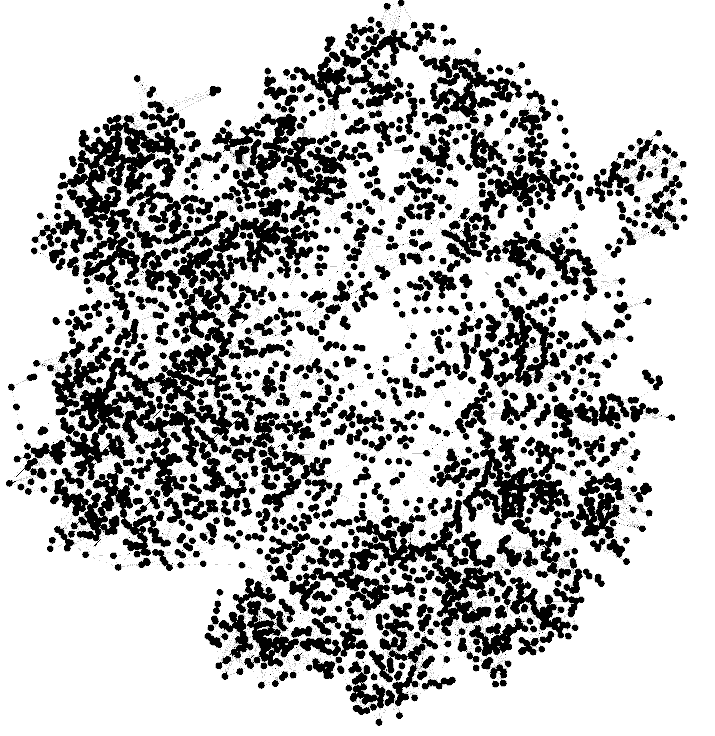}
    \caption{Distribution of Daily Consumption (in kWh) for the GEM House openData with log-normal fits is shown on the left (for all measurements and Day 0 measurements), followed by a visualization of the generated random geometric network with $\rho = 0.1$. The next two figures show the US Power Network degree distribution with log-normal fit, followed by its visualization.}
    \label{fig:distribution_power_networks}
\end{figure}

\section{Real-World Experiments} \label{sec:experiments}

\ppar{Datasets and Real-World Scenarios.} To showcase the effectiveness of our algorithms, i.e., convergence to the actual estimates subject to $(\eps, \delta)$-DP, we conduct two experiments that correspond to the estimation of power consumption in the electric grid.

The first case considers estimating power consumption via electricity measurements of individual households. Consumption behavior is considered highly sensitive. It can reveal compromising information about daily habits and family illnesses or pose a security threat if exploited by an adversary, e.g., to coordinate attack time. Here, we assume that each household faces a privacy risk in sharing their measurements, and they may decide to mitigate the privacy risks by adding noise to their estimates. The ability to estimate average consumption in a distributed manner is useful for distributed load balancing and deciding generation plans. 

For this scenario, we consider the GEM House openData dataset \cite{milojkovic2018gem}, which contains power consumption measurements of $n = 969$ individual households over $T = 1096$ days (i.e., three years). The dataset contains cumulative power consumption measurements $\vec c_{i, t}$ for each particular day. To extract the actual measurements (in kWh) $\vec s_{i, t}$ we take the differences between consecutive days and divide by $10^{10}$ (see Section 3.2.1 of \cite{milojkovic2018gem}), i.e., $\vec s_{i, t} = \frac {\vec c_{i, t} - \vec c_{i, t - 1}} {10^{10}}$. We observe that $\vec s_{i, t}$ follows a log-normal distribution with mean $\mu = 1.67$ and standard deviation $\sigma = 1.04$, as shown in \cref{fig:distribution_power_networks}. Moreover, in this dataset, the network structure is absent. For this reason, we generate a random geometric graph, i.e., we generate $n = 969$ nodes randomly distributed in $[0, 1]^2$ and connect nodes with a distance at most $\rho = 0.1$. Random geometric graphs have been used to model sensor networks \cite{kenniche2010random} and correspond to a straightforward criterion for determining links since they connect nearby households. For a fixed random seed (seed = 0), the network contains $m = 13,236$ edges and is visualized in \cref{fig:distribution_power_networks}.

The second dataset examines estimating power consumption in the US Power Grid Network from \cite{watts1998collective}. In this case, we hypothesize that each power station faces a privacy risk -- for example, vulnerability to a cyber attack -- in sharing their measurements and decides to reduce its privacy risk by adding noise. The network contains $n = 4,941$ nodes and $m = 6,594$ edges. \cref{fig:distribution_power_networks} shows the power network and its degree distribution. Here we artificially generate i.i.d. signals for $T = 100$ as $\vec s_{i, t} \sim \mathrm{LogNormal}(\mu = 10, \sigma=1)$.

\begin{figure}
    \centering  \includegraphics[width=0.75\textwidth]{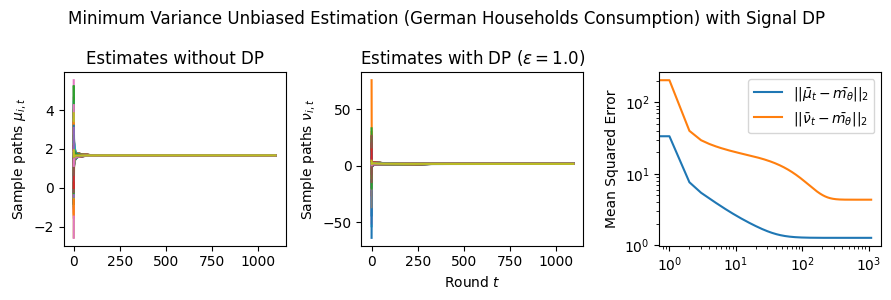} \includegraphics[width=0.75\textwidth]{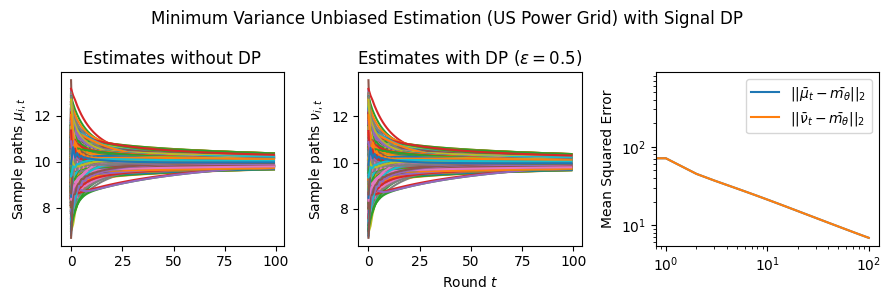}
    \caption{Sample Paths for MVUE with Signal DP. Note the large error in the case of the German household dataset is because protecting households with low (near zero) consumption rates even at a relatively high privacy budget ($\eps = 10$) comes at a huge cost to accuracy.}
    \label{fig:sample_paths_min_var_unbiased_estimation_signal}
\end{figure}

\begin{figure}
    \centering  \includegraphics[width=0.75\textwidth]{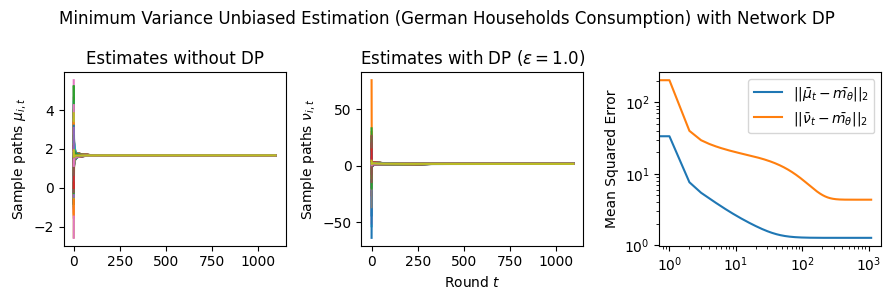} \includegraphics[width=0.75\textwidth]{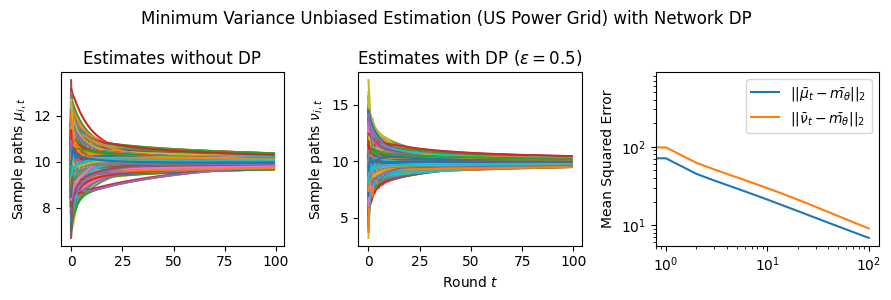}
    \caption{Sample Paths for MVUE with Network DP. Note that even requiring a moderate accuracy in the case of the German household dataset comes at a high cost to privacy ($\eps = 1$), pointing to the challenges of maintaining privacy when sensitivities cannot be locally bounded (some household consumption values are close to zero).}
\label{fig:sample_paths_min_var_unbiased_estimation_network}
\end{figure}

\begin{figure}
    \centering  \includegraphics[width=0.75\textwidth]{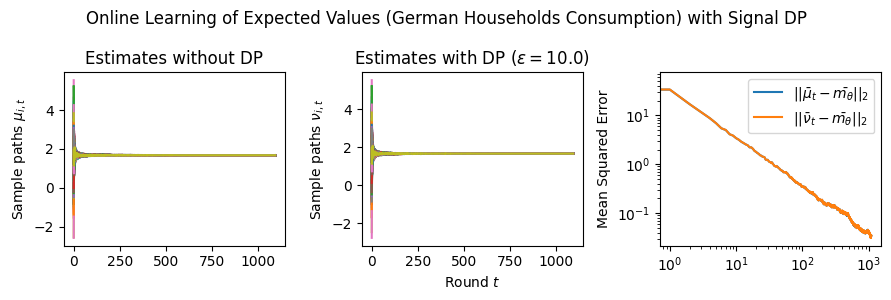} \includegraphics[width=0.75\textwidth]{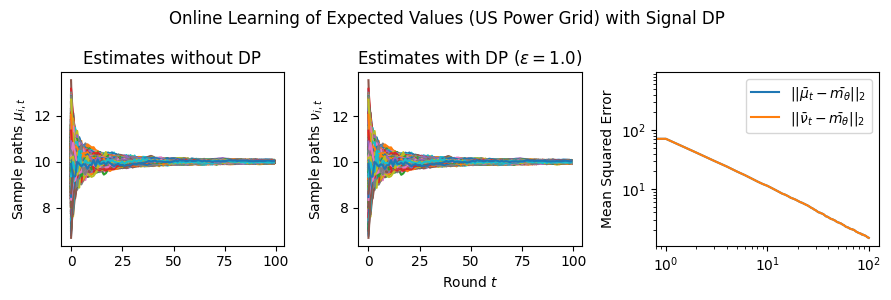}
    \vspace{-10pt}
    \caption{Sample Paths for Online Learning of Expected Values with Signal DP. Choosing $\eps$ large enough leads to a convergent behavior for the German household dataset, but no meaningful privacy protection can be afforded in that case ($\eps = 10$).}
    \label{fig:sample_paths_online_learning_dp_signal}
\end{figure}

\ppar{Estimation Tasks.} In both cases, the task is to estimate each log-normal distribution's mean $\mu$ in two scenarios. In the first scenario, we estimate the mean only from the initial measurements, i.e., estimate $\hat \mu_{\mathrm{MVUE}} = \frac {\sum_{i = 1}^n \log \vec s_{i, 1}} {n}$. \cref{fig:sample_paths_min_var_unbiased_estimation_signal} presents some sample paths for this task as the horizon $t$ varies, and \cref{fig:mse_plot} presents the final MSE after $T$ for various values of the privacy budget $\eps$. In the second scenario, we estimate the mean with online learning (OL) to estimate $\hat \mu_{\mathrm{OL}} = \frac {\sum_{i = 1}^n \sum_{t = 1}^T \log \vec s_{i, t}} {nT} \to \ev {} {\log \vec s} = \mu_{\mathrm{OL}}$. We run simulations in both regimes where we want to protect the signal and the network connections. Because in this case the global sensitivity is unbounded, we use the smooth signal sensitivities $S_{\xi, \gamma}^*(\vec s_{i, t}) = \frac {2 \log (2 / \delta)} {e \eps \vec s_{i, t}}$ for each signal $\vec s_{i, t}$ with $\delta = 0.01$. The resulting algorithms are $(\eps, \delta)$-DP (see \cref{sec:min_var_unbiased_estimation_signal}). 

\ppar{Comparison with \cite{rizk2023enforcing}.}  Finally, in \cref{app:rizk} we explain the adaptation of \cite{rizk2023enforcing} first-order DP consensus algorithm to both MVUE and OL tasks. \cref{fig:mse_plot} shows the comparison of MSE performances between our algorithms and \cite{rizk2023enforcing} given the same privacy budget per agent. Compared to \cite{rizk2023enforcing}, our method is able to achieve significantly smaller MSE under the same total privacy budget; $\approx 1000\times$ smaller for both datasets. While \cite{rizk2023enforcing}'s algorithm are applicable to a broader set of tasks than the MVUE and OL estimation setups presented here, their inclusion of private signals at every iteration entails DP noising at every step of the iteration and comes at a higher cost to accuracy. 

\begin{figure}[t]
    \centering  
    \includegraphics[width=0.75\textwidth]{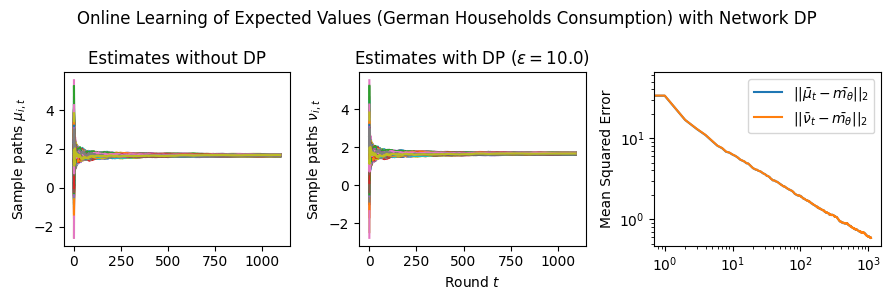} \includegraphics[width=0.75\textwidth]{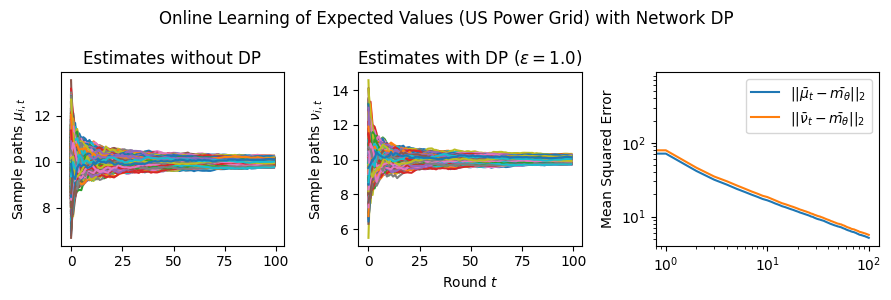}
    \caption{Sample Paths for the Online Learning of Expected Values with Network DP. Protecting network neighborhoods is a harder task than protecting private signals. While almost perfect signal DP can be achieved with reasonable accuracy for the US power grid network ($\eps = 1$ in \cref{fig:sample_paths_online_learning_dp_signal}), even moderate protection of network neighborhoods ($\eps = 1$) come at a noticeable cost to accuracy. Privacy protection for network neighborhoods in the case of German households is further complicated by the existence of almost zero signals with locally unbounded sensitivity and no meaningful protections is accomplished ($\eps = 10$). Privacy and accuracy, in this case,  become conflicting criteria that cannot be reconciled.}
    \label{fig:sample_paths_online_learning_dp_network}
\end{figure}

\section{Discussion and Conclusion}\label{sec:conc}

\ppar{Results and Insights.} In all cases of signal DP with the US power grid network, the DP noise did not affect the convergence rate in practice for this choice of signals, privacy budget $\eps$, and information leakage probability $\delta$. Also, we observe that \cref{alg:online_learning_dp_signal} converges faster than \cref{alg:online_learning_dp_network} (even in the absence of DP noise) because of the underlying mixing matrices, which are $\frac {t - 1} {t} A$, and $C(t) = \frac {t - 2} {t} I + \frac 1 t A$ respectively. Moreover, both of these algorithms converge faster (with and without DP noise) than \cref{alg:min_var_unbiased_estimation_signal}. This is expected since \cref{alg:min_var_unbiased_estimation_signal} has access to $n$ samples in total, while \cref{alg:online_learning_dp_signal,alg:online_learning_dp_network} have access to $nt$ signals and can bring the estimation error down by a $1 / t$ factor. Comparison of Signal DP (\cref{fig:sample_paths_min_var_unbiased_estimation_signal,fig:sample_paths_online_learning_dp_signal}) with Network DP (\cref{fig:sample_paths_min_var_unbiased_estimation_network,fig:sample_paths_online_learning_dp_network}) for MVUE and online learning tasks points to the increased difficulty of ensuring network DP: network privacy protections are harder to achieve and they imply signal protections automatically. On the other hand,  when the local sensitivities can grow large  --- as with the German household dataset --- maintaining privacy for households with low consumption comes at a huge cost to accuracy (see, e.g., \cref{fig:sample_paths_online_learning_dp_signal}). This is because for the log-normal distribution, $d \xi( \vec s)/d \vec s$ grows unbounded as $\vec s \to 0$.

\ppar{Extensions.} We extend our algorithms to address additional forms of heterogeneity. Specifically, in \cref{app:extensions}, we show how our algorithms can provably converge under minimal assumptions when the network topology is changing dynamically (\cref{app:extensions-dynamic}) and when the corresponding topology is directed (\cref{app:extensions-directed}). These scenarios are pertinent to real-world sensor networks since sensors and communications can fail, corresponding to dynamically changing networks with asymmetries (cf. \cite{touri2009distributed}). Moreover, to balance the trade-offs between accuracy and privacy, the agents can resort to heterogeneous privacy budgets $\{ \vec \eps_i \}_{i \in [n]}$ and improve their collective estimation performance while maintaining a minimum privacy protection (capping the individual privacy budgets at $\eps_{i,\max}$). The possibility to accommodate heterogeneous budgets in the local DP setting leads to interesting design choices for improving the collective learning performance, e.g., using personalized DP methods \cite{acharya2024personalized,jorgensen2015conservative}. In \cref{app:extensions-heterogenous}, we provide both centralized and decentralized schemes to allocate privacy budgets to optimize their collective accuracy subject to individual privacy budget caps and test their performance on the German Households dataset (cf. Supplementary \cref{supp:fig:mse_plot_heterogeneous}).

\begin{figure}[t]
    \centering
    \subfigure[\footnotesize Ours]{\includegraphics[width=0.33\textwidth]{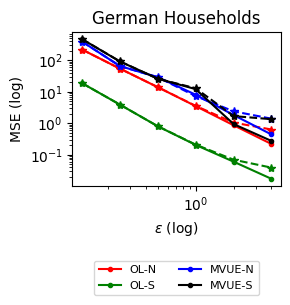}}
    \subfigure[\footnotesize \cite{rizk2023enforcing}]{\includegraphics[width=0.33\textwidth]{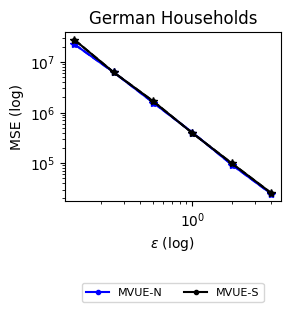}}
    \subfigure[\footnotesize Ours]{\includegraphics[width=0.33\textwidth]{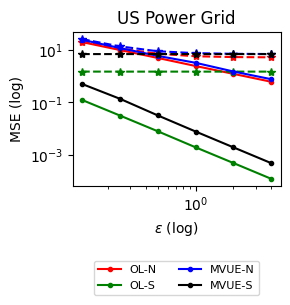}}
    \subfigure[\footnotesize \cite{rizk2023enforcing}]{\includegraphics[width=0.33\textwidth]{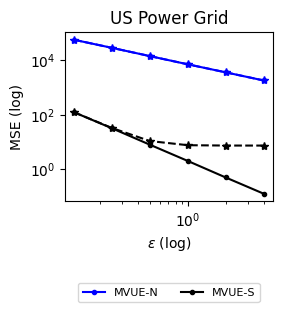}}
    \caption{MSE plots vs. varying privacy budget $\eps$ for the German Households dataset and the US Power Grid Dataset. We compare with the first-order method of \cite{rizk2023enforcing} with a learning rate of $\eta = 0.001$ (see \cref{app:rizk}). The solid lines represent the CoP, and the dashed lines represent the Total Error.}
    \label{fig:mse_plot}
\end{figure}

\ppar{Conclusion.} Our paper focuses on distributed estimation and learning in a networked environment subject to privacy constraints. The aim is to estimate the statistical properties of unknown random variables based on observed data. Our aggregation methods aim to combine the observed data efficiently without requiring explicit coordination beyond the local neighborhood of each agent. This allows for estimating a complete sufficient statistic using either offline or online signals provided to the agents. To preserve privacy, agents add noise to their estimates, adhering to a differential privacy budget ($\eps$-DP)  to safeguard the privacy of either their signals (signal DP) or their signals and network neighborhoods (Network DP). Our algorithms employ linear aggregation schemes that combine the observations of all agents while incorporating the added noise, either online or offline. We prove that the estimation error bounds depend on two terms: the first term corresponds to the error incurred due to the aggregation scheme (which we call Cost of Decentralization) and can be controlled by the mixing rate of the doubly-stochastic adjacency weights, and the second term corresponds to the error due to the DP noising (which we call Cost of Privacy). We prove that under all cases (see also \cref{sec:contribution}), the noise distributions that minimize the convergence rate correspond to the Laplace distributions with parameters that depend on the (local or global) signal sensitivities, the network structure, and the differential privacy budget $\eps$. Finally, we test our algorithms and validate our theory in numerical experiments. 

When sensitivities are locally bounded, signal DP can be achieved efficiently with a graceful accuracy loss over a decreasing privacy budget. This is facilitated by the post-processing immunity of DP \cite[Proposition 2.1]{dwork2014algorithmic} that no future leaks are possible after adequate noising of the private signals and indicates the resilience of linear aggregation schemes to DP noising. However, achieving network DP with noising of estimates is significantly more challenging, and while individual noisy estimates are protected against a one-time attack, network information can still leak over time across multiple estimates. The composition property \cite[Chapter 3]{dwork2014algorithmic} implies that we can protect the network neighborhoods at $\eps$-DP level against an adversary who eavesdrops $k$ times by protecting individual estimates at $\eps/k$-DP level. Such protection can be challenging if an adversary can eavesdrop on the estimates for a long time or has simultaneous access to estimates of multiple agents. In such cases, a fast convergence rate (using the fastest mixing weights) can limit communications and help agents maintain privacy without completely sacrificing the accuracy of their estimates.

\bibliographystyle{apalike}
\spacingset{1}
{\footnotesize \bibliography{references}}

\begin{thebibliography}{}

\bibitem[Acharya et~al., 2024]{acharya2024personalized}
Acharya, K., Boenisch, F., Naidu, R., and Ziani, J. (2024).
\newblock Personalized differential privacy for ridge regression.
\newblock {\em arXiv preprint arXiv:2401.17127}.

\bibitem[Alexandru and Pappas, 2021]{alexandru2021private}
Alexandru, A.~B. and Pappas, G.~J. (2021).
\newblock Private {W}eighted {S}um {A}ggregation.
\newblock {\em IEEE Transactions on Control of Network Systems}, 9(1):219--230.

\bibitem[Alexandru et~al., 2020]{alexandru2020towards}
Alexandru, A.~B., Tsiamis, A., and Pappas, G.~J. (2020).
\newblock Towards {P}rivate {D}ata-driven {C}ontrol.
\newblock In {\em IEEE Conference on Decision and Control (CDC 2020)}, pages
  5449--5456. IEEE.

\bibitem[{Apple Differential Privacy Team}, 2017]{appleDP}
{Apple Differential Privacy Team} (2017).
\newblock Learning with privacy at scale.
\newblock
  \url{https://machinelearning.apple.com/research/learning-with-privacy-at-scale}.
\newblock Accessed: 2023-05-18.

\bibitem[Atanasov et~al., 2014a]{7040469}
Atanasov, N., Tron, R., Preciado, V.~M., and Pappas, G.~J. (2014a).
\newblock Joint {E}stimation and {L}ocalization in {S}ensor {N}etworks.
\newblock {\em IEEE Conference on Decision and Control (CDC 2014)}, pages
  6875--6882.

\bibitem[Atanasov et~al., 2014b]{atanasov2014distributed}
Atanasov, N.~A., Le~Ny, J., and Pappas, G.~J. (2014b).
\newblock Distributed {A}lgorithms for {S}tochastic {S}ource {S}eeking with
  {M}obile {R}obot {N}etworks.
\newblock {\em Journal of Dynamic Systems, Measurement, and Control}.

\bibitem[Aumann, 1976]{aumann1976agreeing}
Aumann, R.~J. (1976).
\newblock Agreeing to {D}isagree.
\newblock {\em The Annals of Statistics}, pages 1236--1239.

\bibitem[Barbaro et~al., 2006]{barbaro2006face}
Barbaro, M., Zeller, T., and Hansell, S. (2006).
\newblock A face is exposed for {AOL} searcher no. 4417749.
\newblock {\em New York Times}, 9(2008):8.

\bibitem[Bickel and Doksum, 2015]{bickel2015mathematical}
Bickel, P.~J. and Doksum, K.~A. (2015).
\newblock {\em Mathematical Statistics: Basic Ideas and Selected Topics},
  volume~I.
\newblock CRC Press.

\bibitem[Bonawitz et~al., 2021]{bonawitz2021federated}
Bonawitz, K., Kairouz, P., McMahan, B., and Ramage, D. (2021).
\newblock Federated {L}earning and {P}rivacy: {B}uilding {P}rivacy-preserving
  {S}ystems for {M}achine {L}earning and {D}ata {S}cience on {D}ecentralized
  {D}ata.
\newblock {\em Queue}, 19(5):87--114.

\bibitem[Borkar and Varaiya, 1982]{BorkarVaraiya82}
Borkar, V. and Varaiya, P. (1982).
\newblock Asymptotic {A}greement in {D}istributed {E}stimation.
\newblock {\em IEEE Transactions on Automatic Control}, 27(3):650--655.

\bibitem[Boyd et~al., 2004]{boyd2004fastest}
Boyd, S., Diaconis, P., and Xiao, L. (2004).
\newblock Fastest {M}ixing {M}arkov {C}hain on a {G}raph.
\newblock {\em SIAM Review}, 46(4):667--689.

\bibitem[Boyd et~al., 2005]{Boyd2005a}
Boyd, S., Ghosh, A., Prabhakar, B., and Shah, D. (2005).
\newblock Gossip {A}lgorithms: {D}esign, {A}nalysis and {A}pplications.
\newblock In {\em IEEE Annual Joint Conference of the IEEE Computer and
  Communications Societies (INFOCOM 2005)}, volume~3, pages 1653--1664. IEEE.

\bibitem[Bullo et~al., 2009]{bullo2009distributed}
Bullo, F., Cort{\'e}s, J., and Martinez, S. (2009).
\newblock {\em Distributed control of robotic networks: a mathematical approach
  to motion coordination algorithms}, volume~27.
\newblock Princeton University Press.

\bibitem[Cardoso and Rogers, 2022]{cardoso2022differentially}
Cardoso, A.~R. and Rogers, R. (2022).
\newblock Differentially {P}rivate {H}istograms under {C}ontinual
  {O}bservation: {S}treaming {S}election into the {U}nknown.
\newblock In {\em International Conference on Artificial Intelligence and
  Statistics (AISTATS 2022)}, pages 2397--2419. PMLR.

\bibitem[Casella and Berger, 2002]{casella2002statistical}
Casella, G. and Berger, R.~L. (2002).
\newblock {\em Statistical {I}nference}, volume~2.
\newblock Duxbury Pacific Grove, CA.

\bibitem[Chamberland and Veeravalli, 2003]{chamberland2003decentralized}
Chamberland, J.-F. and Veeravalli, V.~V. (2003).
\newblock Decentralized {D}etection in {S}ensor {N}etworks.
\newblock {\em IEEE Transactions on Signal Processing}, 51(2):407--416.

\bibitem[Chatterjee, 2023]{chatterjee2023spectral}
Chatterjee, S. (2023).
\newblock Spectral {G}ap of {N}onreversible {M}arkov {C}hains.
\newblock {\em arXiv preprint arXiv:2310.10876}.

\bibitem[Chazelle, 2011]{chazelle2011total}
Chazelle, B. (2011).
\newblock The {T}otal $s$-energy of a {M}ultiagent {S}ystem.
\newblock {\em SIAM Journal on Control and Optimization}, 49(4):1680--1706.

\bibitem[Dimakis et~al., 2008]{dimakis2008geographic}
Dimakis, A.~D., Sarwate, A.~D., and Wainwright, M.~J. (2008).
\newblock Geographic {G}ossip: Efficient {A}veraging for {S}ensor {N}etworks.
\newblock {\em IEEE Transactions on Signal Processing}, 56(3):1205--1216.

\bibitem[Dwork, 2011]{dwork2011firm}
Dwork, C. (2011).
\newblock A {F}irm {F}oundation for {P}rivate {D}ata {A}nalysis.
\newblock {\em Communications of the ACM}, 54(1):86--95.

\bibitem[Dwork and Roth, 2014]{dwork2014algorithmic}
Dwork, C. and Roth, A. (2014).
\newblock The {A}lgorithmic {F}oundations of {D}ifferential {P}rivacy.
\newblock {\em Foundations and Trends{\textregistered} in Theoretical Computer
  Science}, 9(3--4):211--407.

\bibitem[Erlingsson, 2014]{googleDP}
Erlingsson, U. (2014).
\newblock Learning statistics with privacy, aided by the flip of a coin.
\newblock
  \url{https://ai.googleblog.com/2014/10/learning-statistics-with-privacy-aided.html}.
\newblock Accessed: 2023-05-18.

\bibitem[Geanakoplos and Polemarchakis, 1982]{geanakoplos1982we}
Geanakoplos, J.~D. and Polemarchakis, H.~M. (1982).
\newblock We can't {D}isagree {F}orever.
\newblock {\em Journal of Economic Theory}, 28(1):192--200.

\bibitem[Gowtham and Ahila, 2017]{gowtham2017privacy}
Gowtham, M. and Ahila, S.~S. (2017).
\newblock Privacy {E}nhanced {D}ata {C}ommunication {P}rotocol for {W}ireless
  {B}ody {A}rea {N}etwork.
\newblock In {\em International Conference on Advanced Computing and
  Communication Systems (ICACCS 2017)}, pages 1--5. IEEE.

\bibitem[Guevara, 2019]{googleDP2}
Guevara, M. (2019).
\newblock Enabling developers and organizations to use differential privacy.
\newblock
  \url{https://developers.googleblog.com/2019/09/enabling-developers-and-organizations.html}.
\newblock Accessed: 2023-05-18.

\bibitem[Hassan et~al., 2019]{hassan2019differential}
Hassan, M.~U., Rehmani, M.~H., and Chen, J. (2019).
\newblock Differential {P}rivacy {T}echniques for {C}yber {P}hysical {S}ystems:
  a {S}urvey.
\newblock {\em IEEE Communications Surveys \& Tutorials}, 22(1):746--789.

\bibitem[Jackson, 2008]{jackson2008}
Jackson, M.~O. (2008).
\newblock {\em Social and Economic Networks}.
\newblock Princeton University Press, Princeton, NJ.

\bibitem[Jadbabaie et~al., 2003]{jadbabaie2003coordination}
Jadbabaie, A., Lin, J., and Morse, A.~S. (2003).
\newblock Coordination of {G}roups of {M}obile {A}utonomous {A}gents using
  {N}earest {N}eighbor {R}ules.
\newblock {\em IEEE Transactions on Automatic Control}, 48(6):988--1001.

\bibitem[Jorgensen et~al., 2015]{jorgensen2015conservative}
Jorgensen, Z., Yu, T., and Cormode, G. (2015).
\newblock Conservative or {L}iberal? personalized {D}ifferential {P}rivacy.
\newblock In {\em IEEE International Conference on Data Engineering (ICDE
  2015)}, pages 1023--1034. IEEE.

\bibitem[Kaissis et~al., 2020]{kaissis2020secure}
Kaissis, G.~A., Makowski, M.~R., R{\"u}ckert, D., and Braren, R.~F. (2020).
\newblock Secure, {P}rivacy-preserving and {F}ederated {M}achine {L}earning in
  {M}edical {I}maging.
\newblock {\em Nature Machine Intelligence}, 2(6):305--311.

\bibitem[Kar et~al., 2012]{kar2012distributed}
Kar, S., Moura, J., and Ramanan, K. (2012).
\newblock Distributed parameter estimation in sensor networks: Nonlinear
  observation models and imperfect communication.
\newblock {\em IEEE Transactions on Information Theory}, 58, no. 6, pp.
  3575--3605.

\bibitem[Kenniche and Ravelomananana, 2010]{kenniche2010random}
Kenniche, H. and Ravelomananana, V. (2010).
\newblock Random {G}eometric {G}raphs as {M}odel of {W}ireless {S}ensor
  {N}etworks.
\newblock In {\em International Conference on Computer and Automation
  Engineering (ICCAE 2010)}, volume~4, pages 103--107. IEEE.

\bibitem[Kontar et~al., 2021]{kontar2021internet}
Kontar, R., Shi, N., Yue, X., Chung, S., Byon, E., Chowdhury, M., Jin, J.,
  Kontar, W., Masoud, N., Nouiehed, M., et~al. (2021).
\newblock The internet of federated things ({IoFT}).
\newblock {\em IEEE Access}, 9:156071--156113.

\bibitem[Koufogiannis et~al., 2015]{koufogiannis2015optimality}
Koufogiannis, F., Han, S., and Pappas, G.~J. (2015).
\newblock Optimality of the laplace mechanism in differential privacy.
\newblock {\em arXiv preprint arXiv:1504.00065}.

\bibitem[Koufogiannis and Pappas, 2017]{koufogiannis2017diffusing}
Koufogiannis, F. and Pappas, G.~J. (2017).
\newblock Diffusing {P}rivate {D}ata over {N}etworks.
\newblock {\em IEEE Transactions on Control of Network Systems},
  5(3):1027--1037.

\bibitem[Krishnamurthy and Poor, 2013]{krishnamurthy2013social}
Krishnamurthy, V. and Poor, H.~V. (2013).
\newblock Social learning and bayesian games in multiagent signal processing:
  How do local and global decision makers interact?
\newblock {\em IEEE Signal Processing Magazine,}, 30(3):43--57.

\bibitem[Kumar et~al., 2007]{kumar2007anonymizing}
Kumar, R., Novak, J., Pang, B., and Tomkins, A. (2007).
\newblock On {A}nonymizing {Q}uery {L}ogs via {T}oken-based {H}ashing.
\newblock In {\em International conference on World Wide Web (WWW 2017)}, pages
  629--638.

\bibitem[Lalitha et~al., 2014]{6874893}
Lalitha, A., Sarwate, A., and Javidi, T. (2014).
\newblock Social {L}earning and {D}istributed {H}ypothesis {T}esting.
\newblock {\em IEEE International Symposium on Information Theory (ISIT 2014)},
  pages 551--555.

\bibitem[Levin et~al., 2009]{levin2009markov22}
Levin, D.~A., Peres, Y., and Wilmer, E.~L. (2009).
\newblock {\em Markov Chains and Mixing Times}.
\newblock American Mathematical Society.

\bibitem[Li et~al., 2010]{li2010data}
Li, M., Lou, W., and Ren, K. (2010).
\newblock Data {S}ecurity and {P}rivacy in {W}ireless {B}ody {A}rea {N}etworks.
\newblock {\em IEEE Wireless communications}, 17(1):51--58.

\bibitem[McMahan and Thakurta, 2022]{mcmahan2022federated}
McMahan, B. and Thakurta, A. (2022).
\newblock Federated learning with formal differential privacy guarantees.
\newblock
  \url{https://blog.research.google/2022/02/federated-learning-with-formal.html}.
\newblock Accessed: 2024-03-26.

\bibitem[Mesbahi and Egerstedt, 2010]{mesbahiBook}
Mesbahi, M. and Egerstedt, M. (2010).
\newblock {\em {Graph Theoretic Methods in Multiagent Networks}}.
\newblock Princeton University Press.

\bibitem[Milojkovic, 2018]{milojkovic2018gem}
Milojkovic, F. (2018).
\newblock {GEM} {H}ouse {O}pendata: {G}erman {E}lectricity {C}onsumption in
  {M}any {H}ouseholds over {T}hree {Y}ears 2018--2020 ({F}resh {E}nergy).

\bibitem[Narayanan and Shmatikov, 2008]{proc}
Narayanan, A. and Shmatikov, V. (2008).
\newblock Robust {D}e-anonymization of {L}arge {S}parse {D}atasets.
\newblock In {\em IEEE Symposium on Security and Privacy (SP 2008)}, pages
  111--125.

\bibitem[{National Academies of Sciences, Engineering, and Medicine},
  2023a]{nas}
{National Academies of Sciences, Engineering, and Medicine} (2023a).
\newblock Net metering practices should be revised to better reflect the value
  of integrating distributed electricity generation into the nation's power
  grid.
\newblock
  \url{https://www.nationalacademies.org/news/2023/05/net-metering-practices-should-be-revised-to-better-reflect-the-value-of-integrating-distributed-electricity-generation-into-the-nations-power-grid}.
\newblock Accessed: 2023-05-20.

\bibitem[{National Academies of Sciences, Engineering, and Medicine},
  2023b]{national2023role}
{National Academies of Sciences, Engineering, and Medicine} (2023b).
\newblock The {R}ole of {N}et {M}etering in the {E}volving {E}lectricity
  {S}ystem.
\newblock
  \url{https://www.nationalacademies.org/our-work/the-role-of-net-metering-in-the-evolving-electricity-system}.
\newblock Accessed: 2023-05-20.

\bibitem[Nedi{\'c} et~al., 2015]{nedic2015nonasymptotic}
Nedi{\'c}, A., Olshevsky, A., and Uribe, C.~A. (2015).
\newblock Nonasymptotic {C}onvergence {R}ates for {C}ooperative {L}earning over
  {T}ime-varying {D}irected {G}raphs.
\newblock In {\em American Control Conference (ACC 2015)}, pages 5884--5889.
  IEEE.

\bibitem[Nedi{\'c} et~al., 2017]{nedic2015fast}
Nedi{\'c}, A., Olshevsky, A., and Uribe, C.~A. (2017).
\newblock Fast {C}onvergence {R}ates for {D}istributed non-{B}ayesian
  {L}earning.
\newblock {\em IEEE Transactions on Automatic Control}, 62(11):5538--5553.

\bibitem[Niknam et~al., 2020]{niknam2020federated}
Niknam, S., Dhillon, H.~S., and Reed, J.~H. (2020).
\newblock Federated {L}earning for {W}ireless {C}ommunications: {M}otivation,
  {O}pportunities, and {C}hallenges.
\newblock {\em IEEE Communications Magazine}, 58(6):46--51.

\bibitem[Nissim et~al., 2007]{nissim2007smooth}
Nissim, K., Raskhodnikova, S., and Smith, A. (2007).
\newblock Smooth {S}ensitivity and {S}ampling in {P}rivate {D}ata {A}nalysis.
\newblock In {\em ACM Symposium on Theory of Computing (STOC 2007)}, pages
  75--84.

\bibitem[Olfati-Saber and Shamma, 2005]{olfati05}
Olfati-Saber, R. and Shamma, J. (2005).
\newblock {Consensus Filters for Sensor Networks and Distributed Sensor
  Fusion}.
\newblock {\em IEEE Conference on Decision and Control (CDC 2005)}, pages 6698
  -- 6703.

\bibitem[Olshevsky, 2014]{olshevsky2014linear}
Olshevsky, A. (2014).
\newblock {Linear Time Average Consensus on Fixed Graphs and Implications for
  Decentralized Optimization and Multi-agent Control}.
\newblock {\em arXiv preprint arXiv:1411.4186}.

\bibitem[Papachristou and Rahimian, 2024]{papachristou2024group}
Papachristou, M. and Rahimian, M.~A. (2024).
\newblock {Group Decision-Making among Privacy-Aware Agents}.
\newblock {\em AAAI Workshop on Privacy-preserving Artificial Intelligence
  (PPAI)}.

\bibitem[Rahimian and Jadbabaie, 2016a]{rahimian2016bayesian}
Rahimian, M.~A. and Jadbabaie, A. (2016a).
\newblock Bayesian learning without recall.
\newblock {\em IEEE Transactions on Signal and Information Processing over
  Networks}, 3(3):592--606.

\bibitem[Rahimian and Jadbabaie, 2016b]{rahimian2016distributed}
Rahimian, M.~A. and Jadbabaie, A. (2016b).
\newblock {Distributed Estimation and Learning over Heterogeneous Networks}.
\newblock In {\em Communication, Control, and Computing (Allerton 2016)}, pages
  1314--1321. IEEE.

\bibitem[Rahimian and Jadbabaie, 2016c]{rahimian2016group}
Rahimian, M.~A. and Jadbabaie, A. (2016c).
\newblock Group decision making and social learning.
\newblock In {\em Decision and Control (CDC), 2016 IEEE 55th Conference on},
  pages 6783--6794. IEEE.

\bibitem[Rahimian et~al., 2023a]{rahimian2023differentially}
Rahimian, M.~A., Yu, F.-Y., and Hurtado, C. (2023a).
\newblock Differentially private network data collection for influence
  maximization.
\newblock In {\em Proceedings of the 2023 International Conference on
  Autonomous Agents and Multiagent Systems}, pages 2795--2797.

\bibitem[Rahimian et~al., 2023b]{rahimian2023seeding}
Rahimian, M.~A., Yu, F.-Y., and Hurtado, C. (2023b).
\newblock Seeding with differentially private network information.
\newblock {\em arXiv preprint arXiv:2305.16590}.

\bibitem[Rizk et~al., 2023]{rizk2023enforcing}
Rizk, E., Vlaskiy, S., and Sayed, A.~H. (2023).
\newblock {Enforcing Privacy in Distributed Learning with Performance
  Guarantees}.
\newblock {\em IEEE Transactions on Signal Processing}.

\bibitem[Rogers et~al., 2020]{rogers2020linkedin}
Rogers, R., Subramaniam, S., Peng, S., Durfee, D., Lee, S., Kancha, S.~K.,
  Sahay, S., and Ahammad, P. (2020).
\newblock {LinkedIn's Audience Engagements API: A Privacy Preserving Data
  Analytics System at Scale}.
\newblock {\em arXiv preprint arXiv:2002.05839}.

\bibitem[Sayed et~al., 2014]{sayed2014adaptation}
Sayed, A.~H. et~al. (2014).
\newblock {Adaptation, Learning, and Optimization over Networks}.
\newblock {\em Foundations and Trends{\textregistered} in Machine Learning},
  7(4-5):311--801.

\bibitem[Seneta, 2006]{seneta2006non}
Seneta, E. (2006).
\newblock {\em {Non-negative Matrices and Markov Chains}}.
\newblock Springer.

\bibitem[Shahrampour et~al., 2015]{shahrampour2015distributed}
Shahrampour, S., Rakhlin, A., and Jadbabaie, A. (2015).
\newblock Distributed detection: Finite-time analysis and impact of network
  topology.
\newblock {\em IEEE Transactions on Automatic Control}, 61(11):3256--3268.

\bibitem[Shi et~al., 2023]{shi2023ensemble}
Shi, N., Lai, F., Al~Kontar, R., and Chowdhury, M. (2023).
\newblock {E}nsemble {M}odels in {F}ederated {L}earning for {I}mproved
  {G}eneralization and {U}ncertainty {Q}uantification.
\newblock {\em IEEE Transactions on Automation Science and Engineering}.

\bibitem[Sweeney, 1997]{sweeney1997weaving}
Sweeney, L. (1997).
\newblock {Weaving Technology and Policy Together to Maintain Confidentiality}.
\newblock {\em The Journal of Law, Medicine \& Ethics}, 25(2-3):98--110.

\bibitem[Sweeney, 2015]{sweeney2015only}
Sweeney, L. (2015).
\newblock {Only you, your Doctor, and many others may know}.
\newblock {\em Technology Science}, 2015092903(9):29.

\bibitem[Touri and N\'edic, 2009]{touri2009distributed}
Touri, B. and N\'edic, A. (2009).
\newblock {Distributed consensus over Network with Noisy Links}.
\newblock In {\em International Conference on Information Fusion (FUSION
  2009)}, pages 146--154. IEEE.

\bibitem[Truex et~al., 2019]{truex2019hybrid}
Truex, S., Baracaldo, N., Anwar, A., Steinke, T., Ludwig, H., Zhang, R., and
  Zhou, Y. (2019).
\newblock {A Hybrid Approach to Privacy-preserving Federated Learning}.
\newblock In {\em ACM Workshop on Artificial Intelligence and Security}, pages
  1--11.

\bibitem[Truong et~al., 2021]{truong2021privacy}
Truong, N., Sun, K., Wang, S., Guitton, F., and Guo, Y. (2021).
\newblock {Privacy preservation in Federated Learning: An insightful Survey
  from the GDPR Perspective}.
\newblock {\em Computers \& Security}, 110:102402.

\bibitem[Tsitsiklis, 1993]{tsitsiklis1993decentralized}
Tsitsiklis, J.~N. (1993).
\newblock Decentralized {D}etection.
\newblock {\em Advances in Statistical Signal Processing}, 2(2):297--344.

\bibitem[Tsitsiklis and Athans, 1984]{tsitsiklis1984convergence}
Tsitsiklis, J.~N. and Athans, M. (1984).
\newblock Convergence and asymptotic agreement in distributed decision
  problems.
\newblock {\em Automatic Control, IEEE Transactions on}, 29(1):42--50.

\bibitem[{US Census}, 2020]{censusDP}
{US Census} (2020).
\newblock 2020 decennial census: Processing the count: Disclosure avoidance
  modernization.
\newblock
  \url{https://www.census.gov/programs-surveys/decennial-census/decade/2020/planning-management/process/disclosure-avoidance.html}.
\newblock Accessed: 2023-05-18.

\bibitem[Wang and Djuric, 2015]{7081738}
Wang, Y. and Djuric, P.~M. (2015).
\newblock {Social Learning with Bayesian Agents and Random Decision Making}.
\newblock {\em IEEE Transactions on Signal Processing}, 63(12):3241--3250.

\bibitem[Warner, 1965]{warner1965randomized}
Warner, S.~L. (1965).
\newblock {Randomized Response: A Survey Technique for Eliminating Evasive
  Answer Bias}.
\newblock {\em Journal of the American Statistical Association},
  60(309):63--69.

\bibitem[Watts and Strogatz, 1998]{watts1998collective}
Watts, D.~J. and Strogatz, S.~H. (1998).
\newblock {Collective Dynamics of ``Small-world'' Networks}.
\newblock {\em Nature}, 393(6684):440--442.

\bibitem[Wilson et~al., 2020]{wilson2020differentially}
Wilson, R.~J., Zhang, C.~Y., Lam, W., Desfontaines, D., Simmons-Marengo, D.,
  and Gipson, B. (2020).
\newblock {Differentially Private SQL with Bounded User Contribution}.
\newblock {\em Proceedings on privacy enhancing technologies},
  2020(2):230--250.

\bibitem[Wood and Zhang, 1996]{wood1996estimation}
Wood, G. and Zhang, B. (1996).
\newblock Estimation of the lipschitz constant of a function.
\newblock {\em Journal of Global Optimization}, 8:91--103.

\bibitem[Xiao et~al., 2005]{xiao2005scheme}
Xiao, L., Boyd, S., and Lall, S. (2005).
\newblock {A Scheme for Robust Distributed Sensor Fusion based on Average
  Consensus}.
\newblock In {\em International Symposium on Information Processing in Sensor
  Networks (IPSN 2005)}, pages 63--70.

\bibitem[Xiao et~al., 2006]{xiao2006space}
Xiao, L., Boyd, S., and Lall, S. (2006).
\newblock {A Space-time Diffusion Scheme for Peer-to-peer Least-squares
  Estimation}.
\newblock In {\em International Conference on Information Processing in Sensor
  Networks (IPSN 2006)}, pages 168--176.

\bibitem[Xu et~al., 2017]{xu2017security}
Xu, Q., Ren, P., Song, H., and Du, Q. (2017).
\newblock {Security-aware Waveforms for Enhancing Wireless Communications
  Privacy in Cyber-physical systems via Multipath Receptions}.
\newblock {\em IEEE Internet of Things Journal}, 4(6):1924--1933.

\bibitem[Yue et~al., 2022]{yue2022federated}
Yue, X., Kontar, R.~A., and G{\'o}mez, A. M.~E. (2022).
\newblock Federated {D}ata {A}nalytics: {A} {S}tudy on {L}inear {M}odels.
\newblock {\em IISE Transactions}, pages 1--25.
\newblock in-press.

\bibitem[Zhang et~al., 2016]{zhang2016privacy}
Zhang, H., Shu, Y., Cheng, P., and Chen, J. (2016).
\newblock Privacy and {P}erformance {T}rade-off in {C}yber-physical {S}ystems.
\newblock {\em IEEE Network}, 30(2):62--66.

\bibitem[Zhang et~al., 2022]{zhang2022understanding}
Zhang, X., Chen, X., Hong, M., Wu, Z.~S., and Yi, J. (2022).
\newblock Understanding {C}lipping for {F}ederated {L}earning: Convergence and
  {C}lient-level {D}ifferential {P}rivacy.
\newblock In {\em International Conference on Machine Learning (ICML 2022)}.

\end{thebibliography}

\subsection*{Acknowledgements}
{\small M.P. was partially supported by a LinkedIn Ph.D. Fellowship, an Onassis Fellowship (ID: F ZT 056-1/2023-2024), and grants from the A.G. Leventis Foundation and the Gerondelis Foundation. M.A.R. was partially supported by NSF SaTC-2318844. The authors would like to thank the seminar participants at Rutgers Business School, Jalaj Upadhyay, Saeed Sharifi-Malvajerdi, Jon Kleinberg, Kate Donahue, and Vasilis Charisopoulos for their valuable discussions and feedback.}

\subsection*{Data Availability Statement}
{\small 
The data that support the findings of this study are openly available in the following repositories: 

\begin{enumerate}
    \item \textbf{GEM House openData.} URL: \url{https://dx.doi.org/10.21227/4821-vf03} (Accessed at 1-6-2023), Reference: \cite{milojkovic2018gem}.
    \item \textbf{US Power Grid Network.} URL: \url{https://toreopsahl.com/datasets/#uspowergrid} (Accessed at 1-6-2023), Reference: \cite{watts1998collective}.
\end{enumerate}

\noindent Our code and data can be found at: \url{https://github.com/papachristoumarios/dp-distributed-estimation}.
}
\subsection*{Biographical Sketches}

\ppar{Marios Papachristou.} Marios Papachristou is a 4th year PhD student at the Department of Computer Science at Cornell University and is advised by Jon Kleinberg. His research interests span the theoretical and applied aspects of social and information networks, exploring their roles within large-scale social and information systems and understanding their wider societal implications. His research is supported by the Onassis Scholarship and has been supported in the past by a LinkedIn Ph.D. Fellowship, a grant from the A.G. Leventis Foundation, a grant from the Gerondelis Foundation, and a Cornell University Fellowship.

\medskip

 \ppar{M. Amin Rahimian.}  Amin Rahimian has been an assistant professor of industrial engineering at the University of Pittsburgh since 2020, where he leads the sociotechnical systems research lab. Prior to that, he was a postdoc with joint appointments at MIT Institute for Data, Systems, and Society (IDSS) and MIT Sloan School of Management. He received his PhD in Electrical and Systems Engineering from the University of Pennsylvania, and Master’s in Statistics from Wharton School. Broadly speaking his works are at the intersection of networks, data, and decision sciences, and have been published in the Proceedings of the National Academy of Sciences, Nature Human Behaviour, Nature Communications, and the Operations Research journal among others. His research interests are in applied probability, applied statistics, algorithms, decision and game theory, with applications ranging from online social networks, public health, and e-commerce to modern civilian cyberinfrastructure and future warfare. His research is currently supported by NSF, CDC, and the Department of the Army. 

\newpage

\appendix
\break
  \pagenumbering{arabic}
  \renewcommand{\thepage}{\thesection-\arabic{page}}

\setcounter{figure}{0}
\renewcommand{\figurename}{Supplementary Figure}

\setcounter{equation}{0}
\renewcommand{\theequation}{\thesection.\arabic{equation}}


\section*{\Huge Supplementary Material}

\section{Proofs} \label{app:proofs}

\subsection{Proof of \texorpdfstring{\cref{theorem:cop_min_var_unbiased_estimation_signal}}{theorem-cop-online-learning-dp-signal}}\label{proof:theorem:cop_min_var_unbiased_estimation_signal}

\begin{proof}

Since $A$ is real and symmetric, we can do an eigen decomposition of $A$ as $A = Q \Lambda Q^T$ where $Q$ is an orthonormal eigenvector matrix, and $\Lambda$ is the diagonal eigenvalue matrix. We have that {{$\| \ovec \nu_t - \ovec \mu_t \|_2^2 = \| Q \Lambda^t Q^T \ovec d \|_2^2 = \sum_{i = 1}^n \lambda_i^{2t}(A) (\ovec q_i^T \ovec d)^2$}}. Note that {{$\ev {} {(\ovec q_i^T \ovec d)^2} = \ev {} {\sum_{j = 1}^n \vec q_{ij}^2 \vec d_j^2 + \sum_{1 \le j < k \le n} \vec q_{ij} \vec q_{ik} \vec d_j \vec d_k} = \ev {} {\sum_{j = 1}^n \vec q_{ij}^2 \vec d_j^2} = \sum_{i = 1}^n \vec q_{ij}^2 \var {} {\vec d_i}$}}. Therefore {{$\ev {} {\| \ovec \nu_t - \ovec \mu_t \|_2^2} = \sum_{j = 1}^n \var {} {\vec d_j} \sum_{i = 1}^n \lambda_i^{2t}(A) \vec q_{ij}^2$}}. Since for all $2 \le i \le n$ we have that $|\lambda_i(A)| \le \beta^\star$, and by using Jensen's inequality we get that 
{{
\begin{equation*}
\begin{split}
	\ev {} {\| \ovec \nu_t - \ovec \mu_t \|_2} & \le \sqrt {\sum_{i, j 
 = 1}^n \var {} {\vec d_j} \lambda_i^{2t}(A) \vec q_{ij}^2} \le \sum_{i, j = 1}^n \sqrt {\var {} {\vec d_j}} |\lambda_i^t(A)| |\vec q_{ij}| \\
 & \le \sum_{j = 1}^n \sqrt {\var {} {\vec d_j}} |\vec q_{1j}| + (\beta^\star)^t \sum_{i, j = 1}^n \sqrt {\var {} {\vec d_j}} |\vec q_{ij}| \\
 & \le \sqrt {\sum_{j = 1}^n \var {} {\vec d_j}} \left (1 + \sqrt {n - 1} (\beta^\star)^t \right ).
\end{split}	
\end{equation*}}}

To minimize the upper bound on the MSE for each agent $j$, it suffices to minimize the variance of $\vec d_j \sim \cD_j$, subject to differential privacy constraints. We assume that the PDF of $\cD_j$ -- denoted by $p_{\vec d_j}(\cdot) \in \Delta(\Rbb)$ -- is differentiable everywhere in $\Rbb$. The differential privacy constraint is equivalent to 
{{
\begin{align*}
    \left | \frac {d} {d \vec s_j} \log \Pr [\psi_{\cM^S_j}(\vec s_j) = t] \right | & \le \eps \iff \\
    \left | \frac {d} {d \vec s_j} \log p_{\vec d_j}(t - \xi(\vec s_j)) \right | & \le \eps \iff \\
    \left | \frac {d} {d \vec s_j} p_{\vec d_j}(t - \xi(\vec s)) \right | & \le \eps p_{\vec d_j}(t - \xi(\vec s_j)),
\end{align*}}} for all $t \in \Rbb$ and $\vec s_j \in \cS$. Letting $u = t - \xi(\vec s_j)$ we get that, in order to satisfy $\eps$-DP,
{{
\begin{align*}
    \left | \frac {d p_{\vec d_j}(u)} {d u} \right | & \le \frac {\eps} {\Delta} p_{\vec d_j}(u),
\end{align*}}} where $\Delta = \max_{\vec s \in \cS} \left | \frac {d \xi(\vec s_j)} {d \vec s_j} \right |$ is the global sensitivity of $\xi$. We have that 
{{
\begin{align}
    \min_{p_{\vec d_j}(\cdot) \in \Delta(\Rbb)} \quad & \ev {\vec d_j \sim \cD_j} {\vec d_j^2} = \int_{\Rbb} t^2 p_{\vec d_j}(t) dt  \nonumber \\ 
    \text{s.t.} \quad & \int_{\Rbb} p_{\vec d_j}(t) dt = 1, \nonumber \\
    & |p_{\vec d_j}'(t)| \le \frac {\eps} {\Delta} p_{\vec d_j}(t), & \quad \forall t \in \Rbb. \nonumber
\end{align}}}

From Theorem 6 of \cite{koufogiannis2015optimality} we get that the optimal solution to the above problem is the Laplace distribution with scale $\lambda_j = \Delta / \eps$, 
{{
\begin{align} \nonumber
    p_{\vec d_j}(t) = \frac {\eps} {2 \Delta} \exp \left ( - \frac {\eps} {\Delta} |t| \right ), \quad \forall t \in \Rbb.
\end{align}}}
To derive the upper bound on the error note that by Theorem 3 of \cite{rahimian2016distributed} we have that {{$\ev {} {\| \ovec \mu_t - \mathds 1 \hat {\vec m_\theta} \|_2} \le \sqrt {n(n - 1)} (\beta^\star)^{t} M_n$}}, and also {{$\ev {} {\| \ovec \mu_t - \ovec \nu_t \|_2} \le\sqrt {\sum_{j = 1}^n \var {} {\vec d_j}}  (1 +  \sqrt {n - 1} (\beta^\star)^t)$}}. Applying the triangle inequality yields the final result, i.e., 

{{
\begin{align} \label{supp:eq:mvue_total_bound}
    \ev {} {\| \ovec \nu_t - \mathds 1 \hat {\vec m_\theta} \|_2} \le \sqrt {n(n - 1)} (\beta^\star)^{t} M_n + \sqrt {\sum_{j = 1}^n \var {} {\vec d_j}}  (1 +  \sqrt {n - 1} (\beta^\star)^t).
\end{align}

}}

Using the optimal distributions {{$\cD_i^\star = \mathrm {Lap}(\Delta  / \eps)$}} in \eqref{eq:MVUE-signal-DP-bound} gives the claimed upper bound on  {{${\mathrm{TE}}( \cM^S)$}}. 
\end{proof}

\subsection{Proof of \texorpdfstring{\cref{corollary:cop_min_var_unbiased_estimation_network}}{corollary-cop-min-var-unbiased-estimation-network} (DP preservation across time)} \label{proof:corollary:cop_min_var_unbiased_estimation_network} 

To derive the DP guarantee for the MVUE for round $t$, we will do an induction. Specifically, we want to prove that for all $\ovec x = (\vec x_1, \dots, \vec x_n) \in \Rbb^n$ we have for all $i \in [n]$, 
{
\begin{align*}
    \left | \log \left ( \frac {\Pr [\vec \nu_{i, t} = \vec x_i]} {\Pr [\vec \nu_{i, t}' = \vec x_i]} \right ) \right | \le \eps, 
\end{align*}}for all adjacent pairs of signals and beliefs, i.e., $\left \| (\vec s_i, \{ \vec \nu_{j, t- 
 1 } \}_{j \in \cN_i} ) - (\vec s_i', \{ \vec \nu_{j, t- 
 1 }' \}_{j \in \cN_i} ) \right \|_1 \le 1$. We proceed with the induction as follows:

\begin{itemize}
    \item For $t = 1$, the result is held by the construction of the noise and the definition of DP. 
    \item For time $t \in \Nbb$, we assume that $\left | \log \left ( \frac {\Pr [\vec \nu_{i, t} = \vec x_i]} {\Pr [\vec \nu_{i, t}' = \vec x_i]} \right ) \right | \le \eps$ for all $\ovec x = (\vec x_1, \dots, \vec x_n) \in \Rbb^n$. 
    \item For time $t + 1$, we have that for all $i \in [n]$, 

    {
    \begin{align*}
         \left | \log \left ( \frac {\Pr [\vec \nu_{i, t + 1} = \vec x_i]} {\Pr [\vec \nu_{i, t + 1}' = \vec x_i]} \right ) \right | = \left | \log \left ( \frac {\Pr [\vec \nu_{i, t} = (A^{-1} \ovec x)_i]} {\Pr [\vec \nu_{i, t}' = (A^{-1} \ovec x)_i]} \right ) \right | \le \eps.
    \end{align*}}

    which holds by applying the definition of the MVUE update, the fact that $A$ is non-singular, and the inductive hypothesis for $t$. 
    
\end{itemize}

\subsection{Proof of \texorpdfstring{\cref{theorem:cop_online_learning_dp_signal}}{theorem-cop-online-learning-dp-signal}}\label{proof:theorem:cop_online_learning_dp_signal}

\begin{proof}
Similarly to \cref{theorem:cop_min_var_unbiased_estimation_signal}, we decompose $A$ as $A = Q \Lambda Q^T$ and get that 
{{
\begin{equation*}
\begin{split}
	\| \ovec \nu_t - \ovec \mu_t \|_2^2 & = \left \| \frac 1 tQ \sum_{\tau = 0}^{t - 1} \Lambda^\tau \ovec d_{t - \tau} Q^T \right \|_2^2 \\
	& = \frac 1 {t^2} \sum_{i = 1}^n \sum_{\tau = 0}^{t - 1} \lambda_i^{2 \tau} (A) (\ovec q_i^T \ovec d_{t - \tau})^2. 
\end{split}	
\end{equation*}}}

We take expectations and apply Cauchy-Schwarz to get 
{{
\begin{align} \label{supp:eq:error_bound}
    \ev {} {\left \| \ovec \nu_t - \ovec \mu_t \right \|_2^2} & \le \frac 1 {t^2} \sum_{i = 1}^n \sum_{\tau = 0}^{t - 1} \lambda^{2 \tau}_i(A) \ev {} {\left \| \ovec d_{t - \tau} \right \|_2^2} \\
    & \le \frac {1} {t^2} \left (1 + \sum_{i = 2}^n \sum_{\tau = 0}^{t - 1} \lambda_i^{2\tau}(A) \right ) \left ( \sum_{j = 1}^n \sum_{\tau = 0}^{t - 1} \var {} {\vec d_{j, t - \tau}} \right ) \nonumber \\
    & \le \frac {1} {t^2} \left ( 1 + (n - 1) \sum_{\tau = 0}^{t - 1} (\beta^\star)^{2\tau} \right ) \left ( \sum_{j = 1}^n \sum_{\tau = 0}^{t - 1} \var {} {\vec d_{j, t - \tau}} \right ) \nonumber \\
    & \le \frac {1} {t^2} \left ( 1 + \frac {n - 1} {1 - (\beta^\star)^2} \right ) \left ( \sum_{j = 1}^n \sum_{\tau = 0}^{t - 1} \var {} {\vec d_{j, t - \tau}} \right ) \nonumber.
\end{align}}}

By Jensen's inequality, we get that 

{{\begin{align*}
    \ev {} {\left \| \ovec \nu_t - \ovec \mu_t \right \|_2} \le \frac 1 t \left ( 1 + \sqrt {\frac {n - 1} {1 - (\beta^\star)^2}} \right ) \sqrt {\sum_{j = 1}^n \sum_{\tau = 0}^{t - 1} \var {} {\vec d_{j, t - \tau}}}.
\end{align*}}}

Also note that the dynamics of $\ovec q_t = \ovec \mu_t - \mathds 1 \vec m_{\theta}$ obey

{{\begin{align} \nonumber
    \ovec q_t = \frac {t - 1} {t} A \ovec q_{t - 1} + \frac 1 t \left ( \ovec \xi_t - \mathds 1 \vec m_{\theta} \right ).
\end{align}}}

By following the same analysis as \cref{supp:eq:error_bound} we get that 

{{\begin{align} \nonumber
     \ev {} {\left \| \ovec \nu_t - \ovec \mu_t \right \|_2} \le \sqrt {\frac n t} \left ( 1 + \sqrt {\frac {n - 1} {1 - (\beta^\star)^2}} \right ) \sqrt {\var {} {\xi(\vec s)}},
\end{align}}} and then, the triangle inequality yields

{{\begin{align*}
   \ev {} {\left \| \ovec \nu_t - \mathds 1 \vec m_{\theta} \right \|_2} \le \frac 1 t \left ( 1 + \sqrt {\frac {n - 1} {1 - (\beta^\star)^2}} \right ) \left ( \sqrt{nt \var {} {\xi(\vec s)}} + \sqrt {\sum_{j = 1}^n \sum_{\tau = 0}^{t - 1} \var {} {\vec d_{j, t - \tau}}} \right ).
\end{align*}}}

To optimize the upper bound of \cref{supp:eq:error_bound}, for every index $0 \le \tau \le t - 1$ and agent $j \in [n]$ we need to find the zero mean distribution $\cD_{j, \tau + 1}$ with minimum variance subject to differential privacy constraints. We follow the same methodology as \cref{theorem:cop_min_var_unbiased_estimation_signal} and arrive at the optimization problem 

{{\begin{align} \nonumber
    \min_{p_{\vec d_{j, \tau + 1}}(\cdot) \in \Delta(\Rbb)} \quad & \ev {\vec d_{j, \tau + 1} \sim \cD_j} {\vec d_{j, \tau + 1}^2} = \int_{\Rbb} u^2 p_{\vec d_{j, \tau + 1}}(u) du  \\ 
    \text{s.t.} \quad & \int_{\Rbb} p_{\vec d_{j, \tau + 1}}(u) du = 1, \nonumber \\
    & |p_{\vec d_{j, \tau + 1}}'(u)| \le \frac {\eps} {\Delta} p_{\vec d_{j, \tau + 1}}(u), & \quad \forall u \in \Rbb. \nonumber
\end{align}}}

The optimal distribution is derived identically to \cref{theorem:cop_min_var_unbiased_estimation_signal}, and equals $\cD_{j, \tau + 1}^\star =     \mathrm {Lap} \left ( \frac {\Delta} {\eps} \right )$ for all $j \in [n]$ and $0 \le \tau \le t - 1$. 

\end{proof}

\subsection{Proof of \texorpdfstring{\cref{theorem:cop_online_learning_dp_network}}{theorem-cop-online-learning-dp-network}}\label{proof:theorem:cop_online_learning_dp_network}

\begin{proof}
     Let $C(t) = B(t) - \frac 1 t I$, and let $\Phi(t) = \prod_{\tau = 0}^{t - 1} C(\tau)$. Note that $C(t)$ can be written as $t C(t) = (t - 2) I + A$, and we can infer that the eigenvalues of $C(t)$ satisfy $\lambda_i(C(t)) = 1 + \frac {\lambda_i(A) - 2} {t}$, and that $\left \{ C(\tau) \right \}_{\tau \in [t]}$ and $A$ have the same eigenvectors. Therefore,

    {{\begin{align*}
        \lambda_i(\Phi(t)) = \prod_{\tau = 0}^{t - 1} \lambda_i(C(\tau)) \le \exp \left ( \sum_{\tau = 1}^t \frac { \lambda_i(A) - 2} {\tau} \right ) \le t^{\lambda_i(A) - 2}.
    \end{align*}}}

       Similarly to \cref{theorem:cop_online_learning_dp_signal} we have that 

    {{\begin{align*}
        \ev {} {\left \| \ovec \nu_t - \ovec \mu_t \right \|_2^2} & = \frac {1} {t^2} \sum_{i = 1}^n \sum_{\tau = 0}^{t - 1} \lambda_i(\Phi(\tau))^2 \ev {} {\left ( \ovec q_i^T \ovec d_{t - \tau} \right )^2} \\
        & \le \frac {1} {t^2} \left ( \sum_{i = 1}^n \sum_{\tau = 0}^{t - 1} \lambda_i(\Phi(\tau))^2 \right ) \left (  \sum_{\tau = 0}^{t - 1}\ev {} {\left \| \ovec d_{t - \tau} \right \|_2^2} \right ) \\ 
        & \le \frac {1} {t^2} \left ( \int_{1}^{t} \tau^{2 \lambda_i(A) - 4} d \tau  \right )  \sum_{\tau = 0}^{t - 1}\ev {} {\left \| \ovec d_{t - \tau} \right \|_2^2} \\
        & \le \frac {1} {t^2} \left ( \sum_{i = 1}^n \int_{1}^{t} \tau^{2\lambda_i(A) - 4} d \tau  \right ) \left ( \sum_{i = 1}^n \sum_{\tau = 0}^{t - 1} \ev {} {\left \| \ovec d_{t - \tau} \right \|_2^2} \right ) \\
        & \le \frac {1} {t^2} \left ( \sum_{i = 1}^n \frac {1} {3 - 2 \lambda_i(A)}  \right ) \sum_{\tau = 0}^{t - 1}\ev {} {\left \| \ovec d_{t - \tau} \right \|_2^2} \\
        & \le \frac {1} {t^2} \left ( 1 + \frac {n - 1} {3 - 2 \beta^\star}  \right ) \sum_{\tau = 0}^{t - 1}\ev {} {\left \| \ovec d_{t - \tau} \right \|_2^2} \\
        & \le \frac {1} {t^2} \left ( 1 + \frac {n - 1} {3 - 2 \beta^\star}  \right ) \left ( \sum_{i = 1}^n \sum_{\tau = 0}^{t - 1} \var {} {\vec d_{i, t - \tau}} \right ).
    \end{align*}}}

    Applying Jensen's inequality, we get that 
    {{\begin{align*}
        \ev {} {\left \| \ovec \nu_t - \ovec \mu_t \right \|_2} \le \frac 1 t \sqrt{ \sum_{i = 1}^n \sum_{\tau = 0}^{t - 1} \var {} {\vec d_{i, t - \tau}}}   \left ( 1 + \sqrt {\frac {n - 1} {3 - 2 \beta^\star}}  \right ).
    \end{align*}}}

    Similarly, by considering the dynamics of $\ovec q_t = \ovec \mu_t - \mathds 1 \vec m_\theta$ we get that {{$$\ev {} {\left \| \ovec \mu_t - \mathds 1 \vec m_\theta \right \|_2} \le \sqrt {\frac n t} \sqrt {\var {} {\xi(\vec s)}}   \left ( 1 + \sqrt {\frac {n - 1} {3 - 2 \beta^\star}}  \right ).$$}} The triangle inequality yields the final error bound.

    To derive the optimal distributions, note that at each round $t$, the optimal action for agent $i$ is to minimize $\var {} {\vec d_{i, t}}$ subject to DP constraints. By following the analysis similar to \cref{theorem:cop_min_var_unbiased_estimation_signal}, we deduce that the optimal noise to add is Laplace with parameter ${\max \{ \max_{j \in \cN_i} a_{ij}, \Delta \}}/ {\eps}$.

\end{proof}

\section{Algorithm of \texorpdfstring{\cite{rizk2023enforcing}}{rizk2023enforcing}} \label{app:rizk}

We adapt the framework of \cite{rizk2023enforcing} to our problem, for which the identification of the MVUE $\hat {\vec m_{\theta}}$ can be formulated as
{{
\begin{align} \nonumber
    \hat {\vec m_{\theta}} = \argmin_{\vec m \in \Rbb} \frac {1} {2n} \sum_{i = 1}^n \underbrace {\left ( \vec m - \xi(\vec s_i) \right )^2}_{J_i(\vec m)}.
\end{align}}}

The private dynamics for updating the beliefs $\ovec \nu_t$ can be found by simplifying the consensus algorithm given in Equations (24)-(26) of \cite{rizk2023enforcing}:

{{
\begin{align} \nonumber
    \vec \nu_{i, t} & =  a_{ii} (\vec \nu_{i, t - 1}+ \vec g_{ii, t}) + \sum_{j \in \cN_i} a_{ij} \left ( \vec \nu_{j, t - 1} + \vec g_{ij, t} \right ) + \vec d_{i, t} - \eta (\vec \nu_{i, t - 1} - \xi(\vec s_i)). 
\end{align}}} 

Here, $\eta$ is the learning rate, $\vec d_{i, t}$ is noise used to protect the private signal, and $g_{ij, t}$,  $\left \{ \vec g_{ij, t} \right \}_{j \in \cN_i}$ are noise terms used to protect own and the neighboring beliefs. As a first difference, we observe that \cite{rizk2023enforcing} uses $(n + m)T$ noise variables, whereas our method uses just $n$ noise variables, which makes our method easier to implement for the MVUE task. It is easy to observe that these dynamics converge slower than our dynamics for two reasons: (i) the privacy protections are added separately for the signal and each neighboring belief, and (ii) the beliefs are always using information from the signals since the method is first-order, thus requiring noise to be added at each iteration. 

For this reason, the authors consider graph-homomorphic noise, i.e., noise of the form $\vec g_{ij, t} = \vec q_{i, t}$ for all $j \neq i$ and $\vec g_{ii, t} = - \frac {1 - a_{ii}} {a_{ii}} \vec q_{i, t}$ where $\vec q_{i, t}$ are noise variables. Rewriting the dynamics in this form, we get the following update: 

{{
\begin{align} \nonumber
    \vec \nu_{i, t} & = (a_{ii} - \eta) \vec \nu_{i, t - 1} + \sum_{j \in \cN_i} a_{ij}  \vec \nu_{j, t - 1} + \eta  \xi(\vec s_i) + \vec d_{i, t}.
\end{align}}}

Given a privacy budget $\eps$, in order to make a fair comparison with our algorithm, the noise variable should be chosen as $\vec d_{i, t} \sim \mathrm{Lap} \left ( \frac {\eta T S^\star_{\xi, \gamma} (\vec s_i)} {\eps} \right )$ for Signal DP, and $\vec d_{i, t} \sim \mathrm{Lap} \left ( \frac {T \max \{ \max_{j \neq i} a_{ij}, \eta S^\star_{\xi, \gamma} (\vec s_i) \}} {\eps} \right )$ for Network DP. Here, since the per-agent privacy budget is $\eps$, and the noise is added $T$ times at each iteration, the initial budget needs to be divided by $T$. 

We run the same experiments as in \cref{sec:experiments} with the method of \cite{rizk2023enforcing} and compare with our method using the same values of the privacy budget $\eps$ and a learning rate $\eta = 0.001$ to get the results in \cref{fig:mse_plot}. 


\section{Extensions to Dynamic and Directed Networks and Heterogeneous Privacy Budgets } \label{app:extensions}

\subsection{Dynamic Networks} \label{app:extensions-dynamic}

In this problem, the agents observe a sequence of dynamic networks $\{ G(t) \}_{t \in \Nbb}$, for example, due to corrupted links, noisy communications, other agents choosing not to share their measurements, power failures etc. These dynamic networks correspond to a sequence of doubly stochastic matrices $\{ A(t) \}_{t \in \Nbb}$. A  choice for the weights are the modified Metropolis-Hastings (MH) weights:

{{
\begin{align} \label{supp:eq:mh_weights}
    a_{ij}(t) = \begin{cases}
        \frac {1} {2 \max \{ \deg_t(i), \deg_t(j) \}}, & j \neq i \\
        1 - \sum_{j \in \cN_i} a_{ij}(t), & j = i
    \end{cases}.
\end{align}}}

This is a natural choice of weights since they can be easily and efficiently computed by the agents in a distributed manner (e.g., in a sensor network) from the knowledge of own and neighboring degrees, thus requiring minimal memory to be stored for each agent. Below we will show that the MVUE and the OL algorithms converge under minimal assumptions for the time-varying MH weights.

\ppar{MVUE.} We can prove that if $G_t$ contain no isolated nodes, the beliefs would converge to $\hat {\vec m_\theta}$ as a direct consequence of \cite[Theorem 1.4]{chazelle2011total}, i.e.

\begin{proposition}
    If $G_t$ contains no isolated nodes for all $t \in \Nbb$, for accuracy $0 < \rho < 1 / 2$, the dynamics with the modified MH weights of \cref{supp:eq:mh_weights} will converge to the MVUE, $\hat {\vec m_\theta}$, in $$t = \min \left \{ \frac {2^{O(n)} (M_n + \Delta / \eps)} \rho , \left ( \log \frac {M_n + \Delta / \eps} \rho \right )^{n - 1} 2^{n^2 - O(n)} \right \}$$ steps.    
\end{proposition}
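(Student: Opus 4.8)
The plan is to recast the dynamics as a backward product of the time-varying averaging matrices and then invoke the total $s$-energy bounds of \cite{chazelle2011total} directly. First I would write the iterates of \cref{alg:min_var_unbiased_estimation_signal} run over the dynamic topology as $\ovec \nu_t = P(t, 1) \ovec \nu_0$, where $P(t, 1) = A(t) A(t - 1) \cdots A(1)$ is the backward product of the modified MH matrices and $\ovec \nu_0 = \ovec \xi + \ovec d$ collects the noisy initial sufficient statistics. Since each entry satisfies $|\vec \nu_{i, 0}| \le |\xi(\vec s_i)| + |\vec d_i| \le M_n + |\vec d_i|$ with $\vec d_i \sim \mathrm{Lap}(\Delta / \eps)$, the initial disagreement $\max_i \vec \nu_{i, 0} - \min_i \vec \nu_{i, 0}$ is of order $M_n + \Delta / \eps$; this is the quantity that sets the scale of the error to be contracted, and its appearance in the numerator (and inside the logarithm) of the two bounds will come from normalizing the target accuracy $\rho$ by it.

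Next I would verify that the hypotheses of \cite[Theorem 1.4]{chazelle2011total} hold for the modified MH weights of \cref{supp:eq:mh_weights}. Each $A(t)$ is symmetric and doubly stochastic, so the system is a bidirectional (reversible) averaging flow. Whenever an edge is present its weight is bounded below by $a_{ij}(t) \ge 1 / (2 (n - 1)) \ge 1 / (2n)$, and the self-weights satisfy $a_{ii}(t) = 1 - \sum_{j \in \cN_i} a_{ij}(t) \ge 1/2$, because $\sum_{j \in \cN_i} \frac{1}{2 \max\{\deg_t(i), \deg_t(j)\}} \le \deg_t(i)/(2 \deg_t(i)) = 1/2$. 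The no-isolated-nodes assumption guarantees that every agent averages with at least one neighbor at each step, so the matrices are lazy, coupled along the time axis, and have uniformly lower-bounded positive entries --- exactly the regime in which the total $s$-energy dissipates and $P(t, 1)$ contracts toward the consensus projection $\mathds 1 \mathds 1^T / n$. The common limit is therefore the average of the noisy initial values, i.e.\ the (noisy) MVUE $\hat{\vec m_\theta}$, and $\rho$-accuracy is measured as the residual disagreement to this limit.

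Finally I would read off the two convergence-time regimes from \cite[Theorem 1.4]{chazelle2011total}. With the weight floor $1/(2n)$ and laziness $a_{ii} \ge 1/2$, the energy bounds yield, on the one hand, a bound exponential in $n$ but linear in the reciprocal accuracy and, on the other hand, a bound of the form $(\log(1/\rho'))^{n - 1} 2^{n^2 - O(n)}$, where $\rho' = \rho / (M_n + \Delta / \eps)$ is the accuracy relative to the initial spread. Substituting $\rho'$ turns these into $2^{O(n)} (M_n + \Delta / \eps) / \rho$ and $\left( \log((M_n + \Delta / \eps)/\rho) \right)^{n - 1} 2^{n^2 - O(n)}$, respectively, and taking the smaller of the two gives the claimed $t$. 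The main obstacle I anticipate is the bookkeeping: matching the exponential constants $2^{O(n)}$ and $2^{n^2 - O(n)}$ to the specific weight floor $1/(2n)$ in Chazelle's energy estimates, and --- more delicately --- confirming that the no-isolated-nodes condition alone (rather than an explicit joint-connectivity assumption over a window of steps) suffices for $P(t, 1)$ to reach global consensus. This is precisely what the $s$-energy dissipation argument buys, but the reduction to it must be stated carefully.
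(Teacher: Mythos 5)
Your proposal is correct and follows essentially the same route as the paper, which likewise proves the claim by a direct appeal to Theorem 1.4 of \cite{chazelle2011total}, using the laziness of the modified MH weights ($a_{ii}(t)\ge 1/2$, off-diagonal entries at most $1/2$), the undirectedness of the flow, and the fact that the initial spread of the noisy values is $O(M_n + \Delta/\eps)$, which is exactly your normalization $\rho' = \rho/(M_n+\Delta/\eps)$. Your write-up is in fact more explicit than the paper's one-line proof about verifying the hypotheses and about the (legitimate) subtlety that no-isolated-nodes alone, without a joint-connectivity assumption, is what the $s$-energy argument must absorb.
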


\begin{proof}
    The proof follows from applying Theorem 1.4 of \cite{chazelle2011total}, since $\max_{j \neq i} a_{ij} \le 1 / 2$, the graph is undirected, and the diameter of the points is at most $2 (M_n + \Delta / \eps)$. 
\end{proof}

Note that the above convergence rate is exponentially worse than the $$t = O \left ( \frac {\log \left ( {n (M_n + \Delta / \eps)} / \rho \right )} {\log (1 / \beta^\star)} \right )$$ convergence time that we obtain for static networks.

\ppar{Online Learning.} For the online learning regime, we can follow the approach presented in \cite{touri2009distributed}. For this, we consider the following update rule: 

\begin{align} \label{supp:eq:time_varying}
    \ovec \nu_t = \frac {t - 1} {t} \ovec \nu_{t - 1} + \frac 1 t A(t) \ovec \nu_{t - 1} + \frac 1 t \left ( \ovec \xi_t + \ovec d_t \right ).
\end{align}

We show that under reasonable assumptions on the graph sequence, the above algorithm converges to $\one \vec m_\theta$ almost surely. 

\begin{proposition}
     If $G_t$ contains no isolated nodes for all $t \in \Nbb$, and there exists an integer $B$ such that $\bigcup_{\tau = tB}^{t (B + 1) - 1} G_\tau$ is strongly connected for every $t \in \Nbb$, then the dynamics of \cref{supp:eq:time_varying} with the modified MH weights of \cref{supp:eq:mh_weights} converge to $\one \vec m_\theta$ almost surely. 
\end{proposition}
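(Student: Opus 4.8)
The plan is to reduce the inhomogeneous recursion to a transparent closed form and then separate a consensus (network-average) component, which carries the estimate, from a disagreement component, which must vanish. First I would unroll \eqref{supp:eq:time_varying}. Because every $A(t)$ is doubly stochastic and the signal/noise injection carries the prefactor $\frac1t$, the backward products telescope: writing $\Pi(t,\tau)=A(t)A(t-1)\cdots A(\tau+1)$ (with $\Pi(t,t)=I$) and using the identity $\prod_{s=\tau+1}^{t}\frac{s-1}{s}=\frac{\tau}{t}$ together with $\prod_{s=1}^t\frac{s-1}{s}=0$, the contribution of the initial condition $\ovec\nu_0$ is annihilated and one is left with the clean representation
\[
\ovec\nu_t=\frac1t\sum_{\tau=1}^{t}\Pi(t,\tau)\bigl(\ovec\xi_\tau+\ovec d_\tau\bigr).
\]
This already shows the dynamics forget their (arbitrary) initialization, in contrast to the MVUE initialization that must be tracked.

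Next I would decompose $\Pi(t,\tau)=\frac1n\one\one^{T}+E(t,\tau)$, where $E(t,\tau)\one=\zero$ and $\one^{T}E(t,\tau)=\zero^{T}$ since each $\Pi(t,\tau)$ is doubly stochastic. The $\frac1n\one\one^{T}$ part contributes exactly $\one\bar\nu_t$, where the scalar network average is $\bar\nu_t=\frac1n\one^{T}\ovec\nu_t=\frac{1}{nt}\sum_{\tau=1}^{t}\sum_{i=1}^{n}\bigl(\xi(\vec s_{i,\tau})+\vec d_{i,\tau}\bigr)$, so the claim splits into (a) $\bar\nu_t\to\vec m_\theta$ almost surely and (b) $\ovec\nu_t-\one\bar\nu_t=\frac1t\sum_{\tau=1}^{t}E(t,\tau)(\ovec\xi_\tau+\ovec d_\tau)\to\zero$ almost surely. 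Part (a) is a strong law of large numbers: the $\{\xi(\vec s_{i,\tau})\}_{i,\tau}$ are i.i.d.\ with mean $\vec m_\theta$, so $\frac{1}{nt}\sum_{\tau,i}\xi(\vec s_{i,\tau})\to\vec m_\theta$ a.s., while the independent, zero-mean noise terms have $\sum_{\tau}\var{}{\vec d_{i,\tau}}/\tau^{2}<\infty$ under bounded variance, whence $\frac{1}{nt}\sum_{\tau,i}\vec d_{i,\tau}\to0$ a.s.\ by Kolmogorov's criterion (in the unbounded-sensitivity regime one conditions on the signals and uses that the smooth-sensitivity Laplace noise has finite conditional variance).

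Part (b) is where the main work lies, and where I would invoke \cite{touri2009distributed}. The hypotheses --- no isolated nodes and $B$-connectivity of $\bigcup_{\tau=tB}^{t(B+1)-1}G_\tau$ --- are exactly the conditions under which inhomogeneous backward products of doubly stochastic matrices are ergodic, i.e.\ $\Pi(t,\tau)\to\frac1n\one\one^{T}$. The quantitative input I would exploit is that the modified MH weights of \eqref{supp:eq:mh_weights} are uniformly bounded away from zero: the factor $\frac12$ forces $a_{ii}(t)\ge\frac12$ and $a_{ij}(t)\ge\frac{1}{2(n-1)}$ on present edges. Combined with $B$-connectivity, a Wolfowitz/coefficient-of-ergodicity argument upgrades mere ergodicity to a geometric contraction $\|E(t,\tau)\|_2\le C\rho^{\,t-\tau}$ for some $\rho\in(0,1)$ depending only on $n$ and $B$. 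I would then bound $\|\ovec\nu_t-\one\bar\nu_t\|_2\le\frac1t\sum_{\tau=1}^{t}C\rho^{\,t-\tau}\|\ovec\xi_\tau+\ovec d_\tau\|_2$; because the geometric weights sum to $O(1)$ and carry the $\frac1t$ prefactor, while $\max_{\tau\le t}\|\ovec\xi_\tau+\ovec d_\tau\|_2=o(t)$ almost surely (a Borel--Cantelli consequence of finite first moments), the right-hand side vanishes. The triangle inequality applied to (a) and (b) then yields $\ovec\nu_t\to\one\vec m_\theta$ a.s.

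The hard part is genuinely (b). Unlike the static arguments behind \cref{theorem:cop_online_learning_dp_signal,theorem:cop_online_learning_dp_network}, the matrices $A(t)$ do not share a common eigenbasis, so the eigenvalue bookkeeping used there is unavailable and must be replaced by the ergodicity theory for inhomogeneous products. The delicate point is not merely that $\Pi(t,\tau)\to\frac1n\one\one^{T}$, but that this convergence is geometric \emph{uniformly in} $\tau$, so that it survives multiplication by the growing, random inputs $\ovec\xi_\tau+\ovec d_\tau$ and the $1/t$ averaging; establishing the uniform positive lower bound on the weights and the resulting contraction rate under only the $B$-connectivity assumption is the technical crux of the proof.
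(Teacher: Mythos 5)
Your argument is sound and arrives at the right conclusion, but it is a genuinely different proof from the one in the paper. The paper's proof is a three-line verification-and-citation: it checks that the modified MH weights satisfy $a_{ii}(t)\ge 1/2$ and that nonzero off-diagonal entries are uniformly bounded below, notes the $B$-connectivity hypothesis and that the step sizes $1/t$ satisfy $\sum_t 1/t=\infty$ and $\sum_t 1/t^2<\infty$, and then invokes Proposition 2 of \cite{touri2009distributed} as a black box. You instead reconstruct the result: unroll the recursion into $\ovec\nu_t=\frac1t\sum_{\tau\le t}\Pi(t,\tau)(\ovec\xi_\tau+\ovec d_\tau)$, split off the network average (handled by the SLLN plus Kolmogorov's criterion for the independent noise), and control the disagreement via a Wolfowitz-type geometric bound on $\Pi(t,\tau)-\frac1n\one\one^T$. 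What your route buys is transparency and an explicit rate for the disagreement term; what the paper's route buys is brevity and, more importantly, coverage of the regime described next.

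The one substantive caveat concerns which transition matrix you are contracting. Your closed form presumes the update $\ovec\nu_t=\frac{t-1}{t}A(t)\ovec\nu_{t-1}+\frac1t(\ovec\xi_t+\ovec d_t)$, i.e.\ the time-varying analogue of the Signal-DP online rule, for which the per-step mixing weights are bounded away from zero and the geometric bound $\|E(t,\tau)\|\le C\rho^{t-\tau}$ is legitimate. But \cref{supp:eq:time_varying} as written has transition matrix of the form $cI+\frac1t A(t)$ (the lazy, Network-DP-style update), in which the off-diagonal mixing weights are $a_{ij}(t)/t$ and therefore vanish over time. In that regime there is no uniform coefficient-of-ergodicity contraction, the backward products approach $\frac1n\one\one^T$ only polynomially, and your part (b) as stated fails; this is exactly why the paper routes through the diminishing-step-size machinery of \cite{touri2009distributed} (hence its explicit check of $\sum_t 1/t=\infty$ and $\sum_t 1/t^2<\infty$, conditions your argument never uses) and why it reports the much slower $t^{-1/n^3}$ dependence. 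So your proof is complete for the non-lazy reading of the dynamics, but if the lazy update is intended you would need to replace the Wolfowitz step with the Touri--Nedi\'c analysis rather than a geometric contraction.
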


\begin{proof}

The dynamics have the form of the dynamics of \cite{touri2009distributed}. We will show the result by showing that the assumptions of \cite{touri2009distributed} hold. Specifically, all non-zero elements of $A(t)$ have value at least $1 / (2n)$, $a_{ii}(t) \ge 1/2$ for all $i \in [n], t \in \Nbb$ by our hypothesis, there exists an integer $B$ such that $\bigcup_{\tau = tB}^{t (B + 1) - 1} G_\tau$ is strongly connected for every $t \in \Nbb$ by our hypothesis, $\sum_{t = 1}^\infty 1 / t = \infty$, and $\sum_{t = 1}^\infty 1 / t^2 < \infty$. Thus, by Proposition 2 of \cite{touri2009distributed}, the beliefs converge almost surely to $\one \vec m_\theta$ as $t \to \infty$. Furthermore, we can show that the dependence on $t$ becomes slower, i.e., from $t^{-1/2}$ when the network is static to $t^{-1/n^3}$ when the network is dynamic. 

\end{proof}

\subsection{Directed Networks} \label{app:extensions-directed}

We can also consider extensions of our results to directed graphs, i.e., when $a_{ij} \neq a_{ji}$ in general. Such asymmetries can arise due to systemic heterogeneities, e.g., varying accuracy and reliability of sensors in a sensor network or imbalances caused by influence dynamics in a social network. The adjacency weights in such cases are no longer doubly stochastic, making the MVUE dynamics in \cref{eq:estimateUpdateAvgConcensus-non-private,eq:estimateUpdateAvgConcensus} to converge to $\hat {\vec m_\theta} = \ovec q_1^T \ovec \xi$ where $\ovec q_1$ is the stationary distribution of the Markov chain with transition matrix $A$, satisfying $A \ovec q_1 = \ovec q_1$ and $\ovec q_1^T \one = 1$. Unlike the doubly stochastic case, the stationary distribution is not necessarily uniform, in which case the aggregate converges to a weighted average of $\xi(\vec s_i)$ that is no longer minimum variance. To analyze the convergence rate of the algorithms with asymmetric adjacency weights, we cannot use tools from the theory of doubly-stochastic matrices anymore, and we need to rely on different tools from the analysis of convergence of non-reversible Markov chains. The results of \cite{chatterjee2023spectral} extend the notion of spectral gap and can make this analysis possible. However, these results come with significant technicalities, using substantially different analytical steps that are beyond the scope of our present work. More limited conclusions about the asymptotic rates and finite-time convergence can be asserted in special cases by applying existing results such as \cite{touri2009distributed}. 

\subsection{Heterogeneous Privacy Budgets} \label{app:extensions-heterogenous}

Our algorithms extend easily to the case where each agent has their own privacy budget $\vec \eps_i$, e.g., due to their differing energy consumption levels. In this case, it is easy to show that all of the results can be updated to accommodate heterogeneous $\vec \eps_i$ and the smooth sensitivities $\vec \Delta_i$, as shown in \cref{tab:detailed_bounds_heterogeneous} below.

\begin{table}[ht]
    \centering
    \caption{Total Error Bounds for heterogeneous privacy budgets $\{ \vec \eps_i \}_{i \in [n]}$. The {\color{blue}{blue}} terms are due to privacy constraints (CoP), and the {\color{red}{red}} terms are due to decentralization (CoD). Here $M_n$ and $\beta^\star$ are the same as in \cref{tab:detailed_bounds}, and $\{ \vec \Delta_i \}_{i \in [n]}$ are the smooth sensitivities for each agent which can be set as $\vec \Delta_i = S_{\xi, \gamma}^*(\vec s_i)$ for the case of Signal DP and $\vec \Delta_i = \max \left \{ \max_{j \neq i} a_{ij}, S_{\xi, \gamma}^*(\vec s_i) \right \}$ for the case of Network DP.}
    \footnotesize
    \begin{tabular}{cc}
        \toprule
          Minimum Variance Unbiased Estimation & Online Learning of Expected Values \\
        \toprule
        $O \left (  (\beta^\star)^t {\color{red}{M_n}} + (1 + (\beta^\star)^t){\color{blue}{\sum_{i = 1}^n \frac {\vec \Delta_i} {\vec \eps_i}}} \right )$ & $O \left (  \frac {n} {\sqrt t} {\color{red}{\sqrt {\var {} {\xi(\vec s)}}}} + \frac {1} {\sqrt t} {\color{blue}{\sum_{i = 1}^n \frac {\vec \Delta_i} {\vec \eps_i}}} \right ) $ \\
        
        \bottomrule
    \end{tabular}
    
    \label{tab:detailed_bounds_heterogeneous}
\end{table}

\begin{figure}[t]
    \centering
    \includegraphics[width=0.75\textwidth]{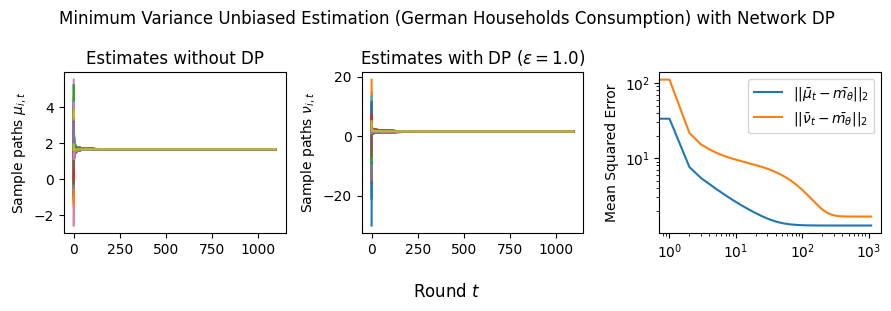}
    \includegraphics[width=\textwidth]{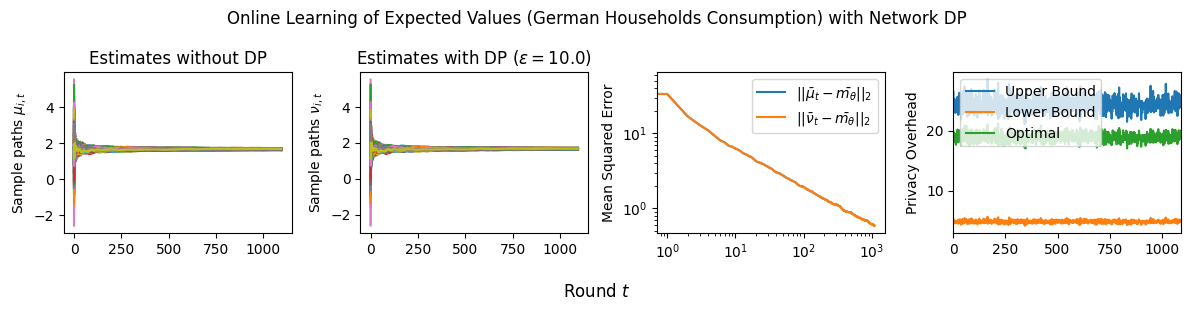}
    \caption{Sample Paths for MVUE and OL for the German Households Dataset with heterogeneous budgets (centralized solution). For the OL case, we plot the optimal privacy overhead $\sum_{i = 1}^n \frac {\vec \Delta_i} {\vec \eps_i^\star}$ which we compare with the lower bound $\sum_{i = 1}^n \frac {\vec \Delta_i} {\eps}$, and the upper bound $\sum_{i = 1}^n \frac {\vec \Delta_i} {\eps_{i, \max}}$.}
    \label{supp:fig:distribute_budget-centralized}
\end{figure}

\begin{figure}[t]
    \centering
    \includegraphics[width=0.75\textwidth]{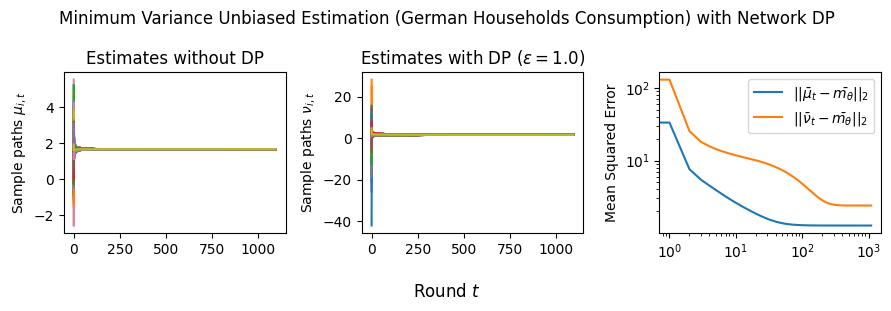}
    \includegraphics[width=\textwidth]{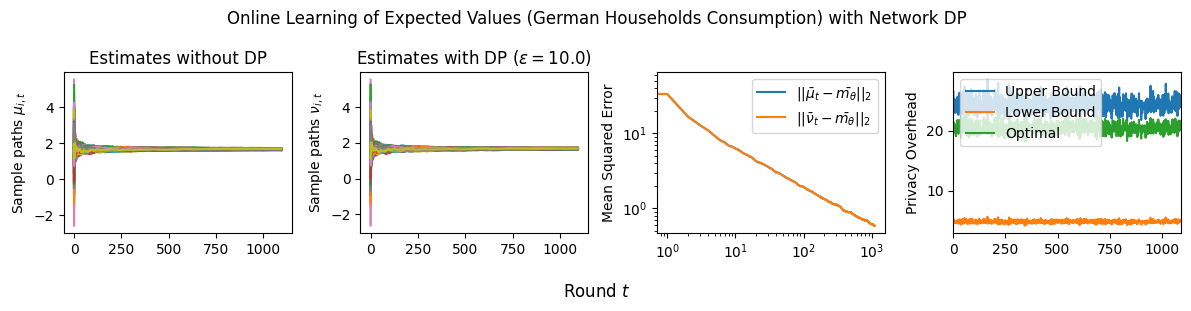}
    \caption{Sample Paths for MVUE and OL for the German Households Dataset with heterogeneous budgets (decentralized solution). For the OL case, we plot the optimal privacy overhead $\sum_{i = 1}^n \frac {\vec \Delta_i} {\vec \eps_i^\star}$ which we compare with the lower bound $\sum_{i = 1}^n \frac {\vec \Delta_i} {\eps}$, and the upper bound $\sum_{i = 1}^n \frac {\vec \Delta_i} {\eps_{i, \max}}$.}
    \label{supp:fig:distribute_budget-decentralized}
\end{figure}

\begin{figure}[t]
    \centering
    \subfigure[\footnotesize Homogeneous]{\includegraphics[width=0.32\textwidth]{figures/mse_plot_germany_consumption.png}}
    \subfigure[\footnotesize Problem \eqref{supp:eq:heterogeneous_eps}]{\includegraphics[width=0.32\textwidth]{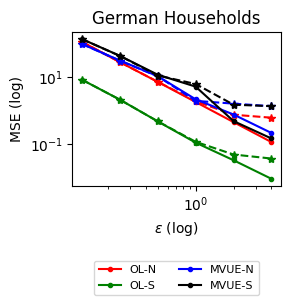}}
    \subfigure[\footnotesize Problem \eqref{supp:eq:heterogeneous_eps_nbr}]{\includegraphics[width=0.32\textwidth]{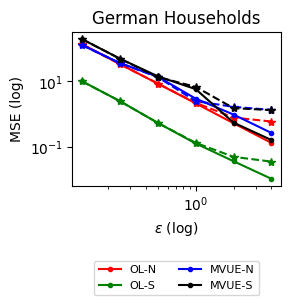}}
    \caption{MSE Plots for the German Households Dataset with heterogeneous privacy budgets. We note that compared to the homogeneous case, using heterogeneous budgets reduces the MSE.}
    \label{supp:fig:mse_plot_heterogeneous}
\end{figure}

The possibility of heterogeneous privacy budgets opens new avenues to explore the allocation of an overall privacy budget ($n\eps$) while respecting individual budgets $\{\eps_{i,\max}\}_{i\in[n]}$. Such a situation may arise, e.g., to limit overall information leakage against an adversary that can eavesdrop on some or all of the communications. Suppose that each agent wants to maintain $\vec \eps_i$-DP and suppose that there is an adversary that eavesdrops on all of the beliefs (this also covers the cases where the adversary has access to a subset $W $ of the $[n]$, such as in the case of a targeted attack to a large part of the network). Note that all of the results presented so far protect against information leakage about a single agent's signals (or their network neighborhoods) when their reports $\nu_{i,t}$ are compromised. However, if the goal is to protect the private signals against an adversary that can eavesdrop simultaneously on all or a subset of agents, then  each individual agent's report can be regarded as a data release about the vector of all signals that needs to be protected at the $n \eps$-DP level, in addition to the individual-level $\vec \eps_{i,\max}$-DP protections required by each agent. If the adversary can eavesdrop on all of the signals, then the resulting mechanism that protects the joint distribution of the signals can be thought of as a mechanism $\Psi^{\cM^S}$ which adds $n$-dimensional noise $\ovec d$ to the sufficient statistics and each dimension $i \in [n]$ of the noise corresponds to $\vec d_{i,t}$, i.e., 
{{
\begin{align} \nonumber
    \Psi^{\cM^S}(\vec s_{1, t}, \dots, \vec s_{n, t}) = \ovec \xi_t + \ovec d_t
\end{align}}
Then, if $\vec \Delta_i$ is the sensitivity for each agent $i$, the noise has PDF:

{{
\begin{align} \nonumber
    p_{\ovec d_t}(\vec u_1, \dots, \vec u_n) = \prod_{i = 1}^n \frac {\vec \eps_i} {2 \vec \Delta_i} \exp \left ( - \frac {\vec \eps_i} {\vec \Delta_i} |\vec u_i| \right ) \propto \exp \left ( - \sum_{i = 1}^n \frac {\vec \eps_i} {\vec \Delta_i} |\vec u_i| \right )
\end{align}}

Now, consider a pair $(\vec s_{1, t}, \dots, \vec s_{n, t}), (\vec s_{1, t}', \dots, \vec s_{n, t}')$ of sets of signals such that $$\left \| (\vec s_{1, t}, \dots, \vec s_{n, t}) - (\vec s_{1, t}', \dots, \vec s_{n, t}') \right \|_1 \le 1.$$ Then, we have that for all $\ovec x \in \Rbb^n$:

{{
\begin{align} \nonumber
    \left | \log \left ( \frac {\Pr [\Psi^{\cM^S_{i, t}}(\vec s_{1, t}, \dots \vec s_{n, t}) = \ovec x]} {\Pr [\Psi^{\cM^S_{i, t}}(\vec s_{1, t}', \dots \vec s_{n, t}') = \ovec x]} \right ) \right | &  = \left | \log \left ( \frac {p_{\ovec d_t}(\ovec \xi_t - \ovec x)} {p_{\ovec d_t} (\ovec \xi_t' - \ovec x)} \right ) \right | \\
    & =  \left | \sum_{i = 1}^n \frac {\vec \eps_i} {\vec \Delta_i} (|\xi(\vec s_{i, t}') - \ovec x| - |\xi(\vec s_{i, t})  - \ovec x|) \right | \nonumber \\
    & \le  \sum_{i = 1}^n \frac {\vec \eps_i} {\vec \Delta_i} |\xi(\vec s_{i, t}) - \xi(\vec s_{i, t}')|  \nonumber\\
    & \le  \sum_{i = 1}^n {\vec \eps_i} \left \|\vec s_{i, t} - \vec s_{i, t}' \right \|_1  \nonumber\\
    & \le \left ( \sum_{i = 1}^n \vec \eps_i \right ) \max_{i \in [n]} \left \|\vec s_{i, t} - \vec s_{i, t}' \right \|_1  \nonumber\\
    & \le \sum_{i = 1}^n \vec \eps_i. \label{supp:eq:heterogenous-privacy-global-provacy-bound}
\end{align}}

Now suppose we want to protect the vector of all beliefs against the eavesdropper at the $n \eps$-DP level (assuming an average privacy budget of $\eps$ per agent for the overall protection of the vector of all private signals). The noise that is added to individual estimates also works to protect the entire vector of all estimates against the eavesdropper and to achieve the latter at the $n\eps$ level, \cref{supp:eq:heterogenous-privacy-global-provacy-bound} indicates that it is sufficient to ensure that $\sum \vec \eps_i \le n \eps$. On the other hand, given $\eps_i$ privacy level at every agent $i$, we also want to minimize the accuracy loss by reducing $\sum_{i=1}^{n}\Delta_i/\eps_i$ while ensuring that individual privacy budgets do not exceed a preset maximum:  $vec \eps_i \leq \vec \eps_{i, \max}$ for all $i$. The subsequent optimization problem to allocate individual privacy budgets can be formulated as follows:

{{
\begin{align} \label{supp:eq:heterogeneous_eps}
    \min_{\vec \eps_1, \dots, \vec \eps_n > 0} \quad & \sum_{i = 1}^n \frac {\vec \Delta_i} {\vec \eps_i} \\
    \text{s.t.} \quad & \sum_{i = 1}^n \vec \eps_i \le n \eps. \nonumber \\
     & \vec \eps_{i} \le \vec \eps_{i, \max}  \quad \forall i \in [n] \nonumber
\end{align}}

Following KKT conditions \cite{Boyd2005a}, \cref{supp:eq:heterogeneous_eps} admits a closed-form solution. The solution indicates that by allowing heterogeneous privacy budgets and taking into account individual smooth sensitivities in the optimal allocation, we can improve the total error. These observations are summarized below:

\begin{proposition}\label{proposition:benefit-of-heterogenity}
    The following hold:

    \begin{enumerate}
        \item If $\sum_{i = 1}^n \vec \eps_{i, \max} \ge n \eps$, then the optimal solution to \cref{supp:eq:heterogeneous_eps} is $$\vec \eps_i^\star = \min \left \{ \vec \eps_{i, \max},  \frac {n \eps \sqrt {\vec \Delta_i}} {\sum_{j \in [n]} \sqrt {\vec \Delta_j}} \right \}$$ for all $i \in [n]$. Moreover, the improvement over $\sum_{i = 1}^n \frac {\vec \Delta_i} {\eps}$ satisfies $\frac {\min_{i \in [n]} \vec \Delta_i} {\max_{i \in [n]} \vec \Delta_i} \le \frac {\sum_{i = 1}^n \frac {\vec \Delta_i} {\vec \eps_i}} {\sum_{i = 1}^n \frac {\vec \Delta_i} {\eps}} \le 1$.
        
        \item If $\sum_{i = 1}^n \vec \eps_{i, \max} < n \eps$ then the optimal solution is $\vec \eps_{i}^\star = \vec \eps_{i, \max}$ for all $i \in [n]$. 
    \end{enumerate}
\end{proposition}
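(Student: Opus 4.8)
The plan is to treat the budget-allocation program \eqref{supp:eq:heterogeneous_eps} as a convex optimization problem and solve it through its Karush--Kuhn--Tucker (KKT) conditions, which are both necessary and sufficient here. Indeed, the objective $\sum_{i=1}^n \vec\Delta_i/\vec\eps_i$ is convex on the positive orthant (each $1/\vec\eps_i$ is convex and $\vec\Delta_i>0$), both constraints are affine, and Slater's condition holds whenever the feasible set has nonempty interior. Because the objective blows up as any $\vec\eps_i\to 0^+$, the positivity constraints are never active at an optimum, so I only need to track a single multiplier $\lambda\ge 0$ for the budget $\sum_i\vec\eps_i\le n\eps$ and the cap multipliers $\mu_i\ge 0$ for $\vec\eps_i\le\vec\eps_{i,\max}$.

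First I would dispose of the easy case, part (2). If $\sum_i\vec\eps_{i,\max}<n\eps$, then setting $\vec\eps_i=\vec\eps_{i,\max}$ satisfies the budget constraint strictly, and since the objective is strictly decreasing in each coordinate, no coordinate can be improved (each is already at its cap). Hence this feasible point is globally optimal and $\vec\eps_i^\star=\vec\eps_{i,\max}$. Next, for part (1) with $\sum_i\vec\eps_{i,\max}\ge n\eps$, I would write stationarity, $-\vec\Delta_i/(\vec\eps_i)^2+\lambda+\mu_i=0$, giving $\vec\eps_i=\sqrt{\vec\Delta_i/(\lambda+\mu_i)}$. Complementary slackness then exposes a water-filling structure: agents whose cap is inactive ($\mu_i=0$) receive $\vec\eps_i=\sqrt{\vec\Delta_i/\lambda}$, i.e.\ an allocation proportional to $\sqrt{\vec\Delta_i}$, while agents at their cap receive $\vec\eps_i=\vec\eps_{i,\max}$ (with the consistency check $\vec\Delta_i/\vec\eps_{i,\max}^2\ge\lambda$). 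The common water level $\lambda$ is pinned down by forcing the budget to bind, $\sum_i\vec\eps_i^\star=n\eps$, which is where the normalizing denominator $\sum_j\sqrt{\vec\Delta_j}$ enters; in the regime where the caps are slack this collapses to the stated closed form $\vec\eps_i^\star=\min\{\vec\eps_{i,\max},\,n\eps\sqrt{\vec\Delta_i}/\sum_j\sqrt{\vec\Delta_j}\}$, with the outer $\min$ arising precisely from the two complementary-slackness branches.

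For the improvement bounds I would compare the optimal value against the homogeneous allocation $\vec\eps_i=\eps$. The upper bound $\sum_i\vec\Delta_i/\vec\eps_i^\star\le\sum_i\vec\Delta_i/\eps$ is immediate from optimality, since $\vec\eps_i=\eps$ is feasible (it meets the budget with equality and respects the caps once $\vec\eps_{i,\max}\ge\eps$). For the lower bound I would not need the explicit minimizer at all: Cauchy--Schwarz applied at the optimum gives $\big(\sum_i\vec\Delta_i/\vec\eps_i^\star\big)\big(\sum_i\vec\eps_i^\star\big)\ge\big(\sum_i\sqrt{\vec\Delta_i}\big)^2$, and combined with $\sum_i\vec\eps_i^\star\le n\eps$ this yields $\sum_i\vec\Delta_i/\vec\eps_i^\star\ge\big(\sum_i\sqrt{\vec\Delta_i}\big)^2/(n\eps)$. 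Dividing by $\sum_i\vec\Delta_i/\eps$ reduces the claim to the elementary inequality $\big(\sum_i\sqrt{\vec\Delta_i}\big)^2/\big(n\sum_i\vec\Delta_i\big)\ge\min_i\vec\Delta_i/\max_i\vec\Delta_i$, which I would prove by writing $d_i=\sqrt{\vec\Delta_i}\in[a,b]$ with $a=\min_i d_i$, $b=\max_i d_i$ and chaining $b^2\big(\sum_i d_i\big)^2\ge b^2(na)\big(\sum_i d_i\big)\ge a^2(nb)\big(\sum_i d_i\big)\ge a^2 n\sum_i d_i^2$, where the first and last steps use $d_i\in[a,b]$ and the middle step is just $b\ge a$.

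I expect the main obstacle to be the water-filling bookkeeping in part (1): correctly characterizing which caps are active as a function of the water level $\lambda$, verifying the multiplier signs so that the candidate is a genuine global minimizer rather than a mere stationary point, and reconciling that piecewise-defined optimum with the single clean $\min$-expression in the statement. The bound computations, by contrast, should be routine once the Cauchy--Schwarz reduction and the $[a,b]$ inequality above are in place.
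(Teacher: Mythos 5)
Your proposal is correct and follows the same route the paper takes: the paper's entire ``proof'' of this proposition is the one-line remark that \eqref{supp:eq:heterogeneous_eps} admits a closed-form solution via KKT conditions, and your water-filling derivation together with the Cauchy--Schwarz lower bound and the elementary $[a,b]$ chain supplies exactly the details the paper omits. You also correctly flag the two places where the stated result is loose: the closed form $\vec\eps_i^\star=\min\{\vec\eps_{i,\max},\,n\eps\sqrt{\vec\Delta_i}/\sum_j\sqrt{\vec\Delta_j}\}$ coincides with the true KKT optimum only when no cap binds (otherwise the residual budget must be redistributed over the uncapped agents with a renormalized denominator), and the upper bound $\le 1$ requires the uniform allocation $\vec\eps_i=\eps$ to be feasible, i.e., $\eps\le\vec\eps_{i,\max}$ for all $i$ --- an assumption the paper uses implicitly (its experiments set $\vec\eps_{i,\max}=10\eps$) but never states.
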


In a large network making individual nodes aware of their allocated budgets based on their smooth sensitivities is difficult to achieve in a central manner. The following formulation of the allocation problem arrives at a sub-optimal solution that satisfies the $n\eps$ global privacy budget constraint by imposing $n$ additional constraints in the local neighborhoods: $a_{ii} \vec \eps_i + \sum_{j \in \cN_i} a_{ij} \vec \eps_j \le \eps, \quad \forall i \in [n]$. The advantage of these constraints is that they can be verified locally, and satisfying them implies the $n\eps$ global budget constraint in \cref{supp:eq:heterogeneous_eps}. The subsequent optimization problem is given in \cref{supp:eq:heterogeneous_eps_nbr}. It allows individuals to learn their allocated budgets in a distributed manner by running distributed gradient descent, which is guaranteed to converge since the problem is convex \cite[Chapter 7]{sayed2014adaptation}.

{{
\begin{align} \label{supp:eq:heterogeneous_eps_nbr}
    \min_{\vec \eps_1, \dots, \vec \eps_n > 0} \quad & \sum_{i = 1}^n \frac {\vec \Delta_i} {\vec \eps_i} \\
    \text{s.t.} \quad & a_{ii} \vec \eps_i + \sum_{j \in \cN_i} a_{ij} \vec \eps_j \le \eps, \quad \forall i \in [n] \nonumber \\
     & \vec \eps_{i} \le \vec \eps_{i, \max},  \quad \forall i \in [n]. \nonumber
\end{align}}

\ppar{Numerical Experiment.} We test our method with the German Households dataset. Specifically, we set a per-agent average budget of $\eps = 1$ for MVUE and $\eps = 10$ for OL. We put a maximum individual budget cap of $\vec \eps_{i, \max} = 10 \eps$ in both cases. We report the sample paths in Supplementary \cref{supp:fig:distribute_budget-centralized,supp:fig:distribute_budget-decentralized}, and observe that the dynamics converge faster compared to the homogeneous case (cf. \cref{fig:sample_paths_min_var_unbiased_estimation_network}). In Supplementary \cref{supp:fig:mse_plot_heterogeneous}, we present an MSE plot where the MSE is plotted as a function of $\eps$, and observe that the algorithm with the heterogeneous thresholds has smaller MSE compared to the homogeneous thresholds. These results confirm our theoretical observations in \cref{proposition:benefit-of-heterogenity}.

\end{document}